\documentclass{article} 
\usepackage{iclr2026_conference,times}

\usepackage{amsmath,amsfonts,bm}

\def\eqref#1{equation~\ref{#1}}

\def\1{\bm{1}}

\DeclareMathAlphabet{\mathsfit}{\encodingdefault}{\sfdefault}{m}{sl}
\SetMathAlphabet{\mathsfit}{bold}{\encodingdefault}{\sfdefault}{bx}{n}

\DeclareMathOperator*{\argmax}{arg\,max}

\usepackage{hyperref}
\usepackage{url}

\usepackage{amsmath, amsthm, verbatim}
\usepackage{caption}
\usepackage{subcaption}
\usepackage{graphicx}
\usepackage{wrapfig}
\usepackage{microtype}
\usepackage{booktabs}
\usepackage[ruled,vlined]{algorithm2e}
\usepackage{wrapfig}
\usepackage{thmtools}
\usepackage{thm-restate}
\usepackage{lipsum}

\title{Flow Actor-Critic for Offline Reinforcement Learning}

\author{Jongseong Chae\textsuperscript{1} \qquad
Jongeui Park\textsuperscript{1} \qquad
Yongjae Shin\textsuperscript{1} \qquad
Gyeongmin Kim\textsuperscript{1} \qquad
\\
\textbf{Seungyul Han\textsuperscript{2} \qquad ~~
Youngchul Sung\textsuperscript{1,}\thanks{Corresponding author. \quad Code available at \url{https://github.com/JongseongChae/FAC}.}}
\\
\textsuperscript{1}KAIST \quad \textsuperscript{2}UNIST\\
\texttt{\{jongseong.chae,ycsung\}@kaist.ac.kr}\\
}

\iclrfinalcopy 
\begin{document}

\maketitle

\begin{abstract}
The dataset distributions in offline reinforcement learning (RL) often exhibit complex and multi-modal distributions, necessitating expressive policies to capture such distributions beyond widely-used Gaussian policies. To handle such complex and multi-modal datasets, in this paper, we propose Flow Actor-Critic, a new actor-critic method for offline RL, based on recent flow policies. The proposed method not only uses the flow model for actor as in previous flow policies but also exploits the expressive flow model for conservative critic acquisition to prevent Q-value explosion in out-of-data regions. To this end, we propose a new form of critic regularizer based on the flow behavior proxy model obtained as a byproduct of flow-based actor design. Leveraging the flow model in this joint way, we achieve new state-of-the-art performance for test datasets of offline RL including the D4RL and recent OGBench benchmarks.
\end{abstract}

\section{Introduction}
Offline Reinforcement Learning (RL) seeks an optimal policy from a pre-collected dataset without additional environment interactions, which may be costly or unsafe in real-world situations \citep{offRL-all-2:BatchRL, offRL-all-1:Levine_survey}. Despite such advantages, offline RL suffers from value overestimation  for out-of-distribution (OOD) actions due to its limited dataset coverage, which degrades performance. Many methods have been proposed to deal with this OOD value overestimation based on techniques such as constraining the target policy near the behavior policy of the dataset \citep{offRL-SC-1:SupportConstraint, offRL-SC-3:BCQ, offRL-SC-2:SPOT,offRL-AR-1:TD3+BC, offRL-AR-reg-3:AWAC} or suppressing the value function in the OOD region with penalization \citep{offRL-CP-1:CQL,offRL-CP-2:SVR}. These methods have been shown to be effective for many offline RL tasks.

As offline RL datasets accumulate and grow diverse, however, their behavioral distributions become complicated and sometimes multi-modal. Thus, simple behavior policy modeling becomes insufficient~\citep{offRL-multimodal-1:empo} and more expressive policies  are required to model the behavior policy and its support. For example, diffusion policies adopted from the vision domain have garnered strong attention from the RL community as a candidate for expressive policies \citep{offRL-Diff-1:DiffusionQL, offRL-Diff-ws-1:SfBC, offRL-Diff-2:IDQL, offRL-Diff-3:SRPO,offRL-Flow-2:QIPO-OT}. Diffusion policies learn the underlying data distribution using forward and reverse processes and can be trained efficiently by score matching with noise models \citep{bc-4:DDPM, bc-6:Diffusion_SDE}. They have shown impressive offline performance. Even with this advantage, diffusion models have a limitation that they can be used to construct an actor capable of sampling actions close to the behavior policy, but the iterative nature of their multi-step processing makes policy optimization unstable~\citep{offRL-Flow-1:FQL}

Therefore, \citet{offRL-Flow-1:FQL} recently adopted the {\em flow model}  \citep{nf-1:NICE, nf-2:NF, nf-3:RealNVP} as an alternative for expressive policies. Although their work proposed one-step flow actor and demonstrated promising results, they did not exploit the full advantage of the flow policy. 
They only considered the actor design to maximize the conventional Q function under a constraint on the distance of the target policy from a flow-based behavior policy estimate, but overlooked the key fact that an accurate behavior density estimate itself is available with the flow model in contrast to diffusion models.

In this paper, we adopt the flow model and fully exploit it in a joint way to optimize an offline RL policy. We not only use the flow behavior policy model as the distance anchor of the target policy but also use it for Q function penalization in the OOD region by identifying the OOD region based on the flow behavior density, which provides strong density contrast to distinguish ID region from OOD one. Note that it has long been considered in offline RL that identifying the OOD region is difficult, and many previous works circumvent this difficulty with indirect methods~\citep{offRL-CP-1:CQL,offRL-CP-2:SVR, offRL-SC-2:SPOT}. However, the highly expressive flow behavior model and accompanying density provide us with a rare opportunity to directly tackle this problem. Our flow actor-critic (FAC) designed in such a way exhibits outstanding performance in many current offline benchmark tasks.

\section{Preliminaries}

\subsection{Offline Reinforcement Learning}
\label{section-preliminaries:offline_RL}

An RL problem is formulated as a Markov Decision Process (MDP) $\mathcal{M}=\left(\mathcal{S}, \mathcal{A}, P, \rho_0,  r, \gamma\right)$, where $\mathcal{S}$ is the state space, $\mathcal{A}$ is the $d_{\mathcal{A}}$-dimensional action space, $P\left(s'|s,a\right)$ is the transition dynamics, $\rho_0$ is the initial state distribution, $r(s,a)$ is the reward function, and $\gamma$ is the discount factor \citep{OnlineRL-4:Sutton}. The goal of offline RL is to find a policy $\pi:\mathcal{S}\to\Delta\left(\mathcal{A}\right)$ that maximizes the expected discounted return $J(\pi_{\theta})=\mathbb{E}_{\rho_0, P, \pi}\left[\sum_{t=0}^{\infty}\gamma^tr(s_t,a_t)\right]$ from a static dataset $\mathcal{D}=\{(s,a,r,s')\}$ collected by an unknown underlying behavior policy $\beta$. For a policy $\pi$, the action value function is defined as $Q^{\pi}(s,a)=\mathbb{E}_{\pi}\left[\sum_{t=0}^{\infty}\gamma^tr(s_t,a_t)|s_0=s,a_0=a\right]$ and can be estimated by iterating the Bellman operator $\mathcal{T}^{\pi}Q(s,a)=r(s,a)+\gamma\mathbb{E}_{s'\sim P(\cdot|s,a),a'\sim\pi(\cdot|s')}\left[Q(s',a')\right]$.

\textbf{Support-Constrained Policy Optimization.}  The main challenge of offline RL is $Q$-value overestimation in the OOD action region outside the support of $\beta$ \citep{offRL-CP-1:CQL,offRL-CP-2:SVR}. 
A~typical approach to this problem is policy optimization under a distance constraint between the target policy and the behavior policy \citep{offRL-SC-3:BCQ, offRL-AR-reg-2:AWR, offRL-CP+AR-1:BRAC, offRL-SC-2:SPOT}:
\begin{align}
    &\max_{\pi}~~ \mathbb{E}_{s\sim\mathcal{D}, a\sim\pi}\left[Q(s,a)\right]  ~~~\text{s.t.}~~~\mathbb{E}_{s\sim\mathcal{D}} [D(\pi(\cdot|s), \beta(\cdot|s))] < \delta, \label{eq-method-1:epsilon-support}
\end{align}
where $D$ is some distance or divergence measure. The rationale behind eq.  (\ref{eq-method-1:epsilon-support}) is that by restricting the distance of $\pi$ from $\beta$, the action from $\pi$ does not fall into the OOD region of the estimated $Q$, avoiding the overestimated region of $Q$. Practically, the hard constraint in eq.  (\ref{eq-method-1:epsilon-support}) is replaced with a regularization term, yielding the following optimization \citep{offRL-SC-1:SupportConstraint, offRL-AR-1:TD3+BC}:
\begin{equation}
    \max_{\pi} ~~\mathbb{E}_{s\sim\mathcal{D}, a\sim\pi}\left[Q(s,a)\right] - \lambda\,\mathbb{E}_{s\sim\mathcal{D}}\left[D\left(\pi(\cdot|s), \beta(\cdot|s)\right)\right],  \label{eq:generalActorDesign}
\end{equation}
where $\lambda>0$ is a regularization coefficient. 
Note that eq.~(\ref{eq:generalActorDesign}) can be viewed as the Lagrangian of the problem (\ref{eq-method-1:epsilon-support}).

\subsection{Flow Models and Flow-Based Policies}
\label{section:preliminary-CNF}

Normalizing flows are a popular framework for modeling complex data distributions \citep{nf-1:NICE, nf-2:NF}.
A continuous normalizing flow (CNF) is defined as a bijective function $\phi_u:\mathbb{R}^d\to\mathbb{R}^d$ \citep{cnf-1:NeuralODE}, specified by a time-dependent vector field $v_u$ via the 1st-order ordinary differential equation:
\begin{equation}
    \frac{d}{du}\phi_u(x)=v_u(\phi_u(x)), \quad\; \phi_0(x)=x,
\end{equation}
yielding 
\begin{equation}
x_u=\phi_u(x_0)=x_0+\int_{0}^{u}v_t(x_t)dt.  \label{eq:flowOutput}
\end{equation}
Given data samples $x_1\sim p_1(x)$ and a simple base distribution $p_0$
(e.g., standard normal), we want to learn the mapping $\phi_u$ such that $\phi_0(x_0)=x_0$ at time $u=0$ is transported to $\phi_1(x_0)=x_1$ at time $u=1$. Since $\phi_1(\cdot)$ is a deterministic function, the density of $p_1$ can easily be computed by the change of variables: $\log p_1(x_1)=\log p_0(x_0) - \log\left|\text{det}\left(\frac{\partial\phi(x_0)}{\partial x_0}\right)\right|$. It is shown that the determinant of the Jacobian can be replaced with the divergence of the vector field $v_u$ \citep{cnf-1:NeuralODE, cnf-4:logp}. Thus, the data distribution density can be written as $\log p_1(x_1) = \log p_0(x_0) - \int_{0}^{1}\nabla_{x_u}\cdot v_u(x_u) du$.

A recent efficient way to learn a flow for a set of pairs $\{(x_0,x_1)\}$ is flow matching (FM), not requiring complicated likelihood maximization \citep{cnf-2:FlowMatching}. FM learns the velocity field such that for a pair $(x_0,x_1)$, the instantaneous movement velocity vector $v_u(x_u)$ matches the overall displacement vector $x_1 - x_0$ along the points on the line connecting $x_0$ and $x_1$, i.e., $x_u=(1-u)x_0 + u x_1$. Thus, when the velocity vector field is parameterized with $\theta$, the loss for learning $\theta$ is given by 
\begin{align}
    \min_{\theta}\; \mathbb{E}_{u\sim\text{Unif}([0,1]),\; x_1\sim p_1(x),\; x_0\sim p_0(x)}\left[\|v_{\theta}({x}_u;u)-(x_1-x_0)\|_2^2\right],
\end{align}
Once $v_\theta$ is trained, samples from $p_1$ can be generated by solving  eq. (\ref{eq:flowOutput}) with an integral solver by setting $u=1$,  and the density of the samples is available as aforementioned. 

\textbf{Flow-based Policy.} Constructing a policy from the flow model is straightforward. One can define the flow function from the action space to the action space, depending on both time $u$ and state $s$.  Then, by pairing $z\sim\mathcal{N}(0,I)$ and action $a$ (from the behavior policy), the corresponding velocity vector field parameterized with $\psi$ is learned with the following loss \citep{cnf-2:FlowMatching, offRL-Flow-1:FQL}:
\begin{align}
    \label{eq-preliminary-6:Flow_BC_objective}
    \min_{\psi}\; \mathbb{E}_{(s,a)\sim\mathcal{D}, z\sim\mathcal{N}(0,I), u\sim\text{Unif}([0,1])}\left[\|v_{\psi}(\tilde{a}_u;s,u) - (a-z)\|_2^2\right],
\end{align}
where $\tilde{a}_u=(1-u)z +ua$. We refer to the so-learned flow policy with $(s,a)\sim \mathcal{D}$ as the {\em behavior proxy policy} 
$\hat{\beta}_{\psi}(\cdot|s)$. Then, the behavior proxy density is given by
\begin{equation}
    \log\hat{\beta}_{\psi}(a|s)=\log p_0(\hat{z}) - \int_{0}^{1}\nabla_{a_u}\cdot v_{\psi}(a_u;s,u)du.    \label{eq-preliminary-4:logp}
\end{equation}
One thing to note is that samples from a flow policy are stochastic samples for given $s$ due to the initial condition $z \sim \mathcal{N}(0,I)$ even though the flow function itself is deterministic. Thus, we will use the notation $a(s,z)$ to show this dependency in the case of flow action, if necessary.

\section{Motivation: Strength of Flow Behavior Proxy Density}
\label{motivation:bc_models}

Now let us investigate how well the flow behavior proxy policy $\hat{\beta}_{\psi}(\cdot|s)$ tracks the actual behavior policy $\beta$ in terms of both sampling and density estimation. For this, we conducted an experiment with $\beta$ being a 2-D Gaussian mixture with four modes, as shown in the leftmost plot in Fig. \ref{fig-method-3:logp_of_bc_models}. For comparison, we considered four other behavior cloning (BC) models: simple Gaussian, conditional VAE \citep{bc-1:VAE}, and diffusion models with 10 and 50 denoising steps \citep{bc-4:DDPM, bc-3:Diffusion_model, offRL-Diff-1:DiffusionQL}. 
\begin{figure}[!b]
    \centering    \includegraphics[width=0.91\linewidth]{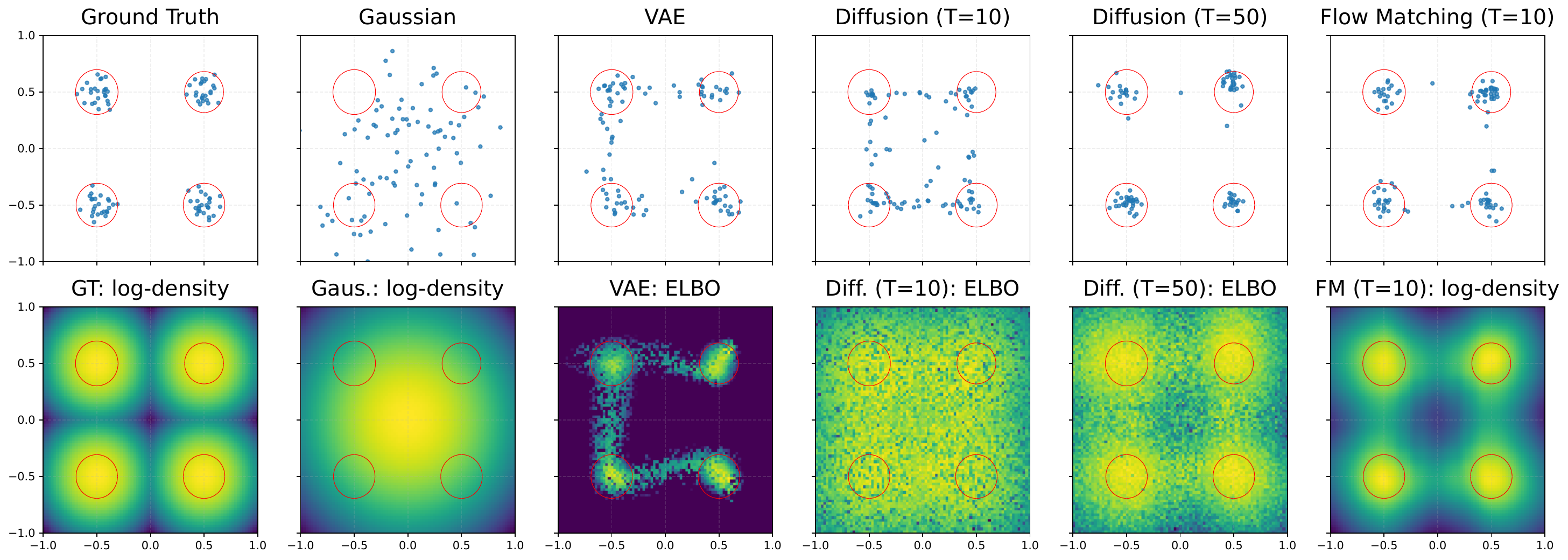}
    \caption{BC models on a synthetic four-component Gaussian mixture dataset: Top row - samples from each BC model. Bottom row - log-density or ELBO plot.}
    \label{fig-method-3:logp_of_bc_models}
\end{figure}
For the flow model, we used the Euler method with 10 steps as an integral solver for both sampling and density evaluation \citep{cnf-2:FlowMatching, cnf-1:NeuralODE}.  The results are shown in Fig. \ref{fig-method-3:logp_of_bc_models}.
In the case of VAE and diffusion models, we show the ELBO since the density is not directly available. 
Details of the experiment are in Appendix~\ref{appendix:bcmodels}.

As expected, simple Gaussian with single mode cannot distinguish the in-distribution (ID) region with its peak at the center of the four true modes on which the actual density is almost zero. 
VAE finds the four modes but generates noticeable samples outside the dataset support. Furthermore, amortized inference places probability density in low-density areas when the latent variable does not match the true posterior distribution, and hence the obtained ELBO is quite different from the true density as seen in the bottom row of the figure.
The diffusion model with 50 denoising steps tends to generate samples within the in-distribution region, while the performance degrades with 10 denoising steps. Note that the obtained ELBO of density is bad, almost not separating the four modes to make it not suited for a density estimator, whereas the sampling performance is satisfactory.  
In contrast, the flow behavior proxy model shows good sampling performance and, importantly, yields a very strong estimate of the true density. Thus, the flow model can be used not only for an actor but also as a density estimator that can be used to distinguish the ID and OOD regions. In the following section, we present our algorithm to design both actor and critic, exploiting these two facts.

\section{Method: Flow Actor-Critic}

\subsection{Critic Penalization with Flow Behavior Proxy Density}
\label{section-method-2:Critic_Penalization}

A typical way to prevent overestimation of Q values in the OOD region for offline RL is to learn a penalized Q function. CQL \citep{offRL-CP-1:CQL} and SVR \citep{offRL-CP-2:SVR}, two representative methods, use the following losses: 
\begin{align*}
    \min_{Q}\; \alpha\, (\mathbb{E}_{s\sim\mathcal{D}, a\sim \pi}\left[Q(s,a)\right] - \mathbb{E}_{s\sim\mathcal{D},a\sim\beta}\left[Q(s,a)\right]) + \mbox{TD~Loss}
\end{align*}
and 
\[
\min_{Q} \alpha\, \left( \mathbb{E}_{s\sim\mathcal{D}, a\sim \zeta}[(Q(s,a)-Q_{min})^2]  - \mathbb{E}_{s\sim\mathcal{D}, a\sim \beta}\left[ \frac{\zeta(a|s)}{\beta(a|s)} (Q(s,a)-Q_{min})^2 \right] \right) + \mbox{TD~Loss},
\]
respectively, where the TD loss is given by $\mathbb{E}_{s,a,s'\sim\mathcal{D}}\left[\left(Q(s,a)-\mathcal{T}^{\pi}Q(s,a)\right)^2\right]$, and $\zeta(a|s)$ in SVR is a distribution covering the entire action range. It is known that the CQL penalization does not properly implement the correct Bellman operator in the ID region \citep{offRL-CP-2:SVR}. SVR fixes this problem by using the difference of importance sampling (IS) integration when $\beta$ does not dominate $\zeta$. Although SVR circumvents direct OOD region identification intelligently using the IS technique, the main drawback is  simple modeling of $\beta$, such as Gaussian density, required to compute the IS ratio.  As seen in the previous section, when the true behavior distribution is complicated and multi-modal, Gaussian modeling places high non-zero density on regions with nearly zero actual density. 
In such cases, the IS ratio blows up incorrectly, deteriorating the performance, as shown in Appendix~\ref{appendix:bandit_comparison}.

\begin{wrapfigure}{r}{0.27\textwidth}
    \centering
    \vspace{-0.4cm}
    \includegraphics[width=\linewidth, height=2cm]{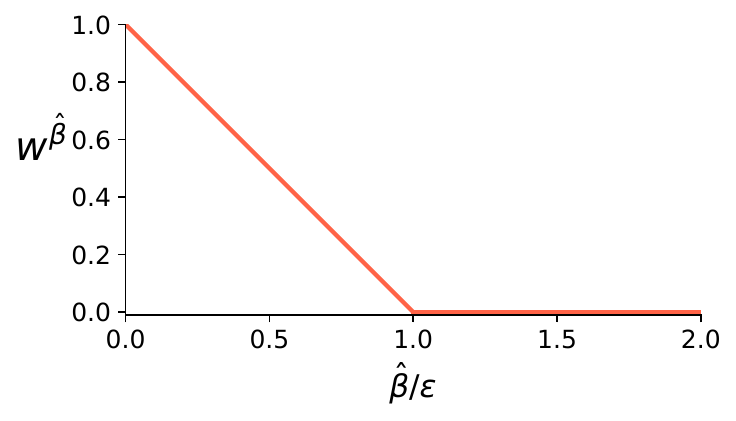}
    \vspace{-0.8cm}
    \caption{weight $w^{\hat{\beta}}$}
    \label{fig-method:critic_weights}
    \vspace{-0.4cm}
\end{wrapfigure}
With the flow behavior proxy density $\hat{\beta}_\psi$, a strong estimator for the true behavior density $\beta$ at our hands, we can directly identify the ID and OOD regions and control the Q penalization term. To this end, we define the weight  $w^{\hat{\beta}_{\psi}}(s,a)$ as 
\begin{equation}
    \label{eq-method-7:log_beta}
    w^{\hat{\beta}_{\psi}}(s,a)=\max (0,\;1-(\hat{\beta}_{\psi}(a|s)/\epsilon)) ~~~\mbox{for some}~\epsilon > 0.
\end{equation}
The weight $w^{\hat{\beta}_{\psi}}(s,a)$ vanishes in the well-supported region with $\hat{\beta}_{\psi}(a|s)\geq\epsilon$, and increases linearly as $\hat{\beta}_{\psi}$ decreases below the threshold $\epsilon$ towards zero.
The threshold $\epsilon$ is introduced because $\hat{\beta}_{\psi}$ is not perfect even though it is strong. We want to exclude the weak spurious density from the ID region but gradually. 
With this weight, we propose the following loss for critic learning: 
\begin{equation}
    \label{eq-method-5:our_Qloss}
    \min_{Q} ~\alpha\,\mathbb{E}_{s\sim\mathcal{D},\;a\sim\pi}\left[w^{\hat{\beta}_{\psi}}(s,a)Q(s,a)\right] + \mbox{TD~Loss},
\end{equation}
where $\alpha>0$ controls the strength of penalization. In eq. (\ref{eq-method-5:our_Qloss}), our penalization term is not in the form of some difference as in CQL or SVR, but directly pinpoints the OOD region based on the strong behavior density estimator $\hat{\beta}_{\psi}$. When $\hat{\beta}_{\psi}(a|s) \ge \epsilon$, i.e., $a \in \mbox{support}(\beta)$ with confidence, the penalization term disappears. However, the penalization term gradually kicks in as the confidence decreases below $\epsilon$.

\begin{restatable}{proposition}{FACoperator}
    \label{proposition-1:FAC_operator}
    Let $\beta$ be the underlying behavior policy, $\hat{\beta}$ be our proxy for $\beta$, $\pi$ be the learned actor, and $Q$ be the value function of $\pi$. Consider the original Bellman operator $\mathcal{T}^{\pi}Q(s,a)=r(s,a)+\gamma\mathbb{E}_{s'\sim P(\cdot|s,a),\,a'\sim\pi(\cdot|s')}\left[Q(s',a')\right]$. Then, in the tabular setting without function approximation, the objective~(\ref{eq-method-5:our_Qloss}) yields the following operator:
    \begin{align}
        \label{eq-method-6:our_Qoperator}
        \mathcal{T}_{\text{\tiny{FAC}}}^{\pi}Q(s,a)=\begin{cases}
        \mathcal{T}^{\pi}Q(s,a) & \text{if }\;\hat{\beta}(a|s)\geq\epsilon, \; \beta(a|s)>0 \\        \mathcal{T}^{\pi}Q(s,a)-\frac{\alpha}{2}\left(\frac{w^{\hat{\beta}}(s,a)\pi(a|s)}{\beta(a|s)}\right) & \text{if }\;\hat{\beta}(a|s) < \epsilon, \; \beta(a|s)>0 \\ -\infty & \text{if }\;\beta(a|s)=0.\end{cases}
    \end{align}
    unless  $\beta(a|s)=0$  and $w^{\hat{\beta}}(s,a)=0$ simultaneously. 
\end{restatable}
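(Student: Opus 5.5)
The plan is to write the critic objective (\ref{eq-method-5:our_Qloss}) as a single pointwise functional of the tabular values $Q(s,a)$ and minimize it coordinate by coordinate, in the style of the CQL and SVR operator derivations. Let $d^{\mathcal{D}}(s)$ be the state marginal of the dataset, so that $(s,a)\sim\mathcal{D}$ has density $d^{\mathcal{D}}(s)\beta(a|s)$. At iteration $k$ the TD target is $\mathcal{T}^{\pi}\hat{Q}^k$, and we treat it as fixed, as in fitted $Q$-iteration. The objective then becomes
\begin{align*}
L(Q)=\sum_{s}d^{\mathcal{D}}(s)\sum_{a}\Big[\alpha\,\pi(a|s)\,w^{\hat{\beta}}(s,a)\,Q(s,a)+\beta(a|s)\big(Q(s,a)-\mathcal{T}^{\pi}\hat{Q}^k(s,a)\big)^2\Big].
\end{align*}
In the tabular setting every $Q(s,a)$ is a free variable, so the minimization separates over pairs $(s,a)$ with $d^{\mathcal{D}}(s)>0$.

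First, fix a pair $(s,a)$ with $\beta(a|s)>0$. The summand is a strictly convex quadratic in $Q(s,a)$. Setting its derivative to zero gives
\begin{align*}
\alpha\pi(a|s)w^{\hat{\beta}}(s,a)+2\beta(a|s)\big(Q(s,a)-\mathcal{T}^{\pi}\hat{Q}^k(s,a)\big)=0,
\end{align*}
that is,
\begin{align*}
Q(s,a)=\mathcal{T}^{\pi}\hat{Q}^k(s,a)-\frac{\alpha}{2}\frac{w^{\hat{\beta}}(s,a)\pi(a|s)}{\beta(a|s)}.
\end{align*}
We then split on $\hat{\beta}(a|s)$. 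If $\hat{\beta}(a|s)\ge\epsilon$, then by (\ref{eq-method-7:log_beta}) we have $w^{\hat{\beta}}=0$, and we recover $\mathcal{T}^{\pi}Q$. If $\hat{\beta}(a|s)<\epsilon$, then $w^{\hat{\beta}}>0$, and we get the penalized second case.

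Second, consider a pair with $\beta(a|s)=0$. The TD term vanishes, and the summand is linear in $Q(s,a)$ with slope $\alpha d^{\mathcal{D}}(s)\pi(a|s)w^{\hat{\beta}}(s,a)$. Whenever this slope is positive, the infimum is reached by sending $Q(s,a)\to-\infty$, which is the third case. If $w^{\hat{\beta}}(s,a)=0$ in addition to $\beta(a|s)=0$, the summand is identically zero, so $Q(s,a)$ is undetermined. This is exactly the excluded situation in the statement.

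The main obstacle is this degenerate third case. We must make sure the slope is strictly positive, which requires $\pi(a|s)>0$ and $d^{\mathcal{D}}(s)>0$ as well as $w^{\hat{\beta}}>0$. I would handle it by restricting attention to states in the dataset and to actions charged by $\pi$, interpreting $-\infty$ as the unbounded infimum. I would also note that when $\pi(a|s)=0$ the formula in the second case correctly reduces to $\mathcal{T}^{\pi}Q$, so the stated operator remains consistent.
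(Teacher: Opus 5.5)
Your proposal is correct and follows essentially the same route as the paper: minimize the tabular objective pointwise with the TD target held fixed, solve the first-order condition $\alpha w^{\hat{\beta}}\pi + 2\beta(Q-\mathcal{T}^{\pi}Q)=0$, and split on whether $w^{\hat{\beta}}$ vanishes. The only difference is the $\beta(a|s)=0$ case. The paper rewrites the penalty with the importance ratio $\pi/\beta$ and declares it infinite, while you keep the un-reweighted sum and note that the coordinate objective is linear with slope $\alpha d^{\mathcal{D}}(s)\pi(a|s)w^{\hat{\beta}}(s,a)$; this is cleaner and makes explicit the side conditions $\pi(a|s)>0$ and $d^{\mathcal{D}}(s)>0$, which the paper leaves implicit.
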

By the definition of the weight $w^{\hat{\beta}_\psi}(s,a)$, it is highly unlikely that we have $\beta(a|s)=0$  and $w^{\hat{\beta}}(s,a)=0$ simultaneously. 
The operator associated with our new penalization maintains the original Bellman operator inside most of the ID support region, while strongly suppressing the Q value in the OOD region. Furthermore, for a weak confidence of the estimator, it gradually suppresses the Q value according to its confidence. Thus, when the proxy $\hat{\beta}_\psi$ is a strong estimator of $\beta$, the proposed penalization works properly. 
The proof of Proposition~\ref{proposition-1:FAC_operator} and a rigorous analysis of the operator $\mathcal{T}_{\text{\tiny{FAC}}}^{\pi}$ is provided in Appendix~\ref{appendix-theoretical-analysis}, showing that the operator $\mathcal{T}_{\text{\tiny{FAC}}}^{\pi}$ has a unique fixed point, yielding unbiased Q values for state-action pairs with $\hat{\beta}_{\psi}(a|s)\geq\epsilon$ and underestimated conservative Q values for those with $\hat{\beta}_{\psi}(a|s)<\epsilon$.

Note that in our critic penalization, determining $\epsilon$ is important.
We consider two dataset-driven methods for $\epsilon$ design: (1)~dataset-wide constant threshold $\min_{(s,a)\sim\mathcal{D}}\hat{\beta}_{\psi}(a|s)$ and (2)~batch-adaptive threshold $\hat{\beta}_{\psi}(a|s)$ using mini-batch samples $\mathcal{B}=\{(s,a)\}$ from $\mathcal{D}$. 
The dataset-wide threshold does not exclude some of the actual ID region of the dataset, being suited to datasets with wide state coverage and low action diversity, and the batch-adaptive threshold adapts to local coverage and multi-modality, being suited to datasets with limited state coverage and high action diversity.

\subsection{Enhanced Flow Actor Optimization}

The typical offline RL actor optimization is done with the objective (\ref{eq:generalActorDesign}). That is, the objective is to maximize Q value while keeping near the behavior policy $\beta$. One can apply this offline actor design principle to flow-based actor \citep{offRL-Flow-1:FQL}. In this case, the flow behavior proxy policy $\hat{\beta}_{\psi}$ is learned as described in Section \ref{section:preliminary-CNF}. Then, the objective (\ref{eq:generalActorDesign}) for the actual target policy $\pi_{\theta}$ can be implemented as follows
\citep{offRL-Flow-1:FQL}:
\begin{align}
    \label{eq-method-4:FQL_objective}
    \max_{\pi_{\theta}} \mathbb{E}_{s\sim\mathcal{D},a\sim\pi_{\theta}}\left[Q_{\phi}(s,a)\right]-\lambda\,\mathbb{E}_{s\sim\mathcal{D},z\sim\mathcal{N}(0,I)}\left[\|a_{\theta}(s,z)-a_{\psi}(s,z)\|_2^2\right],
\end{align}
where $a_{\theta}(s,z)$ and $a_{\psi}(s,z)$ denote actions generated by $\pi_{\theta}$ and $\hat{\beta}_{\psi}$ given $(s,z)$, respectively. Here, $a_{\psi}(a,z)$ is realized with velocity modeling~(\ref{eq-preliminary-6:Flow_BC_objective}) and integration~(\ref{eq:flowOutput}) for full expressibility. However, the target policy $\pi_\theta$ implementation is simplified because the numerical multi-step integration based on velocity modeling makes gradient backpropagation complicated. Thus, \citet{offRL-Flow-1:FQL} used a one-step flow policy, which transports $z$ directly to $a$ via a flow $a_\theta(~\cdot ~;s,u=1): \mathcal{A}\rightarrow \mathcal{A}: z \mapsto a$. Although this one-step flow policy $\pi_\theta$ has less expressibility than the multistep behavior proxy policy $\hat{\beta}_\psi$, it still has sufficient capability to express multi-modality, as shown in Appendix~\ref{appendix:bandit_comparison}.
Thus, we adopt this one-step flow policy as our actor too.

In most cases of policy optimization via Q maximization with a distance constraint like (\ref{eq-method-4:FQL_objective}) including \citet{offRL-Flow-1:FQL}, simple $Q_{\phi}(s,a)$, just trained with the Bellman error $\mathbb{E}_{s,a,s'\sim\mathcal{D}}\left[(Q_{\phi}(s,a)-\mathcal{T}^{\pi_{\theta}}Q_{\bar{\phi}}(s,a))^2\right]$, is used, where $\bar{\phi}$ is the target Q network parameter.  The rationale behind this is that the distance regularization term in (\ref{eq-method-4:FQL_objective}) attracts the policy to the behavior support although the Q function is overestimated in the OOD region.  
However, when near-optimal actions are rare compared to sub-optimal ones in the dataset, there is a problem.  Weak regularization in (\ref{eq-method-4:FQL_objective}) drives value maximization toward the OOD region on which unpenalized $Q_\phi$ has overestimated values, yielding wrong actions. On the other hand, strong regularization induces the actor to imitate sub-optimal actions mostly. In such cases, if we use weak distance regularization and a Q function well-penalized at the OOD region in (\ref{eq-method-4:FQL_objective}), then maximizing such a Q function yields good actions within the ID support. For such an idea to work, we need a Q function estimate precisely penalized at the OOD region. Therefore, we use the critic penalized with highly expressive flow behavior proxy proposed in Section \ref{section-method-2:Critic_Penalization}. We name this new actor-critic structure for offline RL based on the flow model {\em Flow Actor-Critic (FAC)}. It will be shown that FAC sets new state-of-the-art performance for difficult tasks of the OGBench in the experiment section soon.

\subsection{Practical Implementation}
\label{sec-implementation}
The practical implementation of FAC is provided in Algorithm \ref{alg-method:our_method}. The proposed method comprises three key components: a flow behavior proxy policy, a one-step flow actor, and twin critics to ensure stable $Q$ value estimation~\citep{OnlineRL-1:TD3}. The training procedure consists of two stages. 

In the first stage, we train a flow behavior proxy policy $\hat{\beta}_{\psi}$ by eq.~(\ref{eq-preliminary-6:Flow_BC_objective}). After training the proxy policy, we evaluate behavior proxy density by eq.~(\ref{eq-preliminary-4:logp}) for all state-action pairs in the dataset and store these evaluations along with the transitions. These stored evaluations as the dataset-driven threshold allow efficient identification of ID and OOD regions in the subsequent training stage.

In the second stage, we train penalized critics and a one-step flow actor concurrently. The critics $Q_{\phi}$ are trained by eq.~(\ref{eq-method-5:our_Qloss}) with target critics for stable learning. Specifically, the proposed critic penalization applies behavior-aware weights based on the behavior proxy density of actor actions and the stored evaluations of the dataset. The actor $\pi_{\theta}$ is trained by eq.~(\ref{eq-method-4:FQL_objective}).
\vspace{-0.15cm}
\begin{algorithm}[h]
    \small
    \caption{Flow Actor-Critic}
    \label{alg-method:our_method}
    \DontPrintSemicolon
    Initialize one-step flow policy network $\pi_\theta$, 
    critic networks $Q_{\phi_1}, Q_{\phi_2}$, target networks $Q_{\bar{\phi}_1}, Q_{\bar{\phi}_2}$, 
    and flow behavior proxy network $\hat{\beta}_{\psi}$\;
    
    \For{each iteration}{
      Sample $(s,a)\sim\mathcal{D}$, $z\sim\mathcal{N}(0,I)$, $u\sim\mathrm{Unif}([0,1])$.\;
      Update $\hat{\beta}_{\psi}$ with eq.~(\ref{eq-preliminary-6:Flow_BC_objective}).\;
    }
    
    Compute $\{\hat{\beta}_{\psi}(a|s)\}$ for the samples in $\mathcal{D}$ by eq. (\ref{eq-preliminary-4:logp}) and store the evaluations to $\mathcal{D}$.\;
    
    \For{each iteration}{
      Sample $(s,a,r,s')\sim\mathcal{D}$, $z\sim\mathcal{N}(0,I)$ and compute $\epsilon$.\;
      Update $Q_{\phi_i}$ with eq.~(\ref{eq-method-5:our_Qloss}),\quad $i\in\{1,2\}$.\;
      Update $\pi_{\theta}$ with eq.~(\ref{eq-method-4:FQL_objective}).\;
      $\bar\phi_i \gets \rho\phi_i + (1-\rho)\bar\phi_i,\quad i\in\{1,2\}$ for some $\rho$.\;
    }
\end{algorithm}
\vspace{-0.2cm}

\section{Related Works}

\textbf{Offline RL with Gaussian policies.} Most methods with Gaussian policies can be grouped into two complementary approaches: actor regularization (AR) and critic penalization (CP). AR methods constrain the learned policy to the behavior policy in the dataset, such as TD3+BC~\citep{offRL-AR-1:TD3+BC} imposing a simple behavior cloning regularizer and SPOT~\citep{offRL-SC-2:SPOT} using VAE-based ELBO to induce high-support actions. Other related AR methods extract the policy through value-weighted regression~\citep{offRL-AR-reg-2:AWR, offRL-AR-reg-3:AWAC, offRL-AR-reg-4:OptiDICE}, such as IQL~\citep{offRL-AR-reg-1:IQL}. On the other hand, CP methods, such as CQL~\citep{offRL-CP-1:CQL}, penalize value estimates for policy actions, but the induced policy becomes overly conservative. To mitigate this, MCQ~\citep{offRL-CP-4:MCQ} introduces pseudo target value, EPQ~\citep{offRL-CP-3:EPQ} uses state-dependent penalty weights defined by VAE-based ELBO and a current policy, SAC-RND~\citep{offRL-CP-5:SAC-RND} leverages RND-based anti-exploration bonus~\citep{OnlineRL-5:RND}, and SVR~\citep{offRL-CP-2:SVR} uses the property of importance sampling. Based on these approaches, ReBRAC~\citep{offRL-CP+AR-2:ReBRAC} shows that complementary use of AR and CP~\citep{offRL-CP+AR-1:BRAC, offRL-CP+AR-3:PARS} yields strong performance when this joint approach is integrated with modern offline RL design choices. 

\textbf{Diffusion and flow-based policies.} Expressive policies have recently emerged as alternatives for modeling complex action distributions. The methods using such expressive policies can be categorized by whether they maximize their $Q$ function explicitly or implicitly. For diffusion-based methods, explicit approaches, such as CAC~\citep{offRL-Diff-4:CAC} and SRPO~\citep{offRL-Diff-3:SRPO}, optimize $Q$-maximizing policies~\citep{offRL-Diff-1:DiffusionQL, offRL-Diff-explicit-1:DiffCPS, offRL-Diff-explicit-2:EQL}. Implicit approaches, such as IDQL~\citep{offRL-Diff-2:IDQL}, either use value-weighted regression~\citep{offRL-Diff-reg-1:QGPO, offRL-Diff-reg-2:EDP, offRL-Diff-reg-3:QVPO} or apply value-weighted sampling~\citep{offRL-Diff-ws-1:SfBC, offRL-Diff-ws-2:AlignIQL}. For flow-based methods, FQL~\citep{offRL-Flow-1:FQL} trains a $Q$-maximizing policy using the distance regularizer with a flow behavior proxy policy. Furthermore, \citet{offRL-Flow-1:FQL} has shown that stable policy extraction for multistep flow policies is challenging: value-weighted regression (FAWAC), explicit $Q$-maximization (FBRAC), and value-weighted sampling (IFQL). Another method, QIPO-OT~\citep{offRL-Flow-2:QIPO-OT}, trains a flow matching policy through value-weighted regression.

\section{Experiments}
\subsection{Experimental Setup}
\label{section-experiment-1:experimental_setup}
\textbf{Benchmarks.}  We evaluate FAC on two offline RL benchmarks: recently introduced OGBench~\citep{bench-1:OGBench} and standard D4RL~\citep{bench-2:D4RL}. OGBench offers diverse goal-conditioned locomotion and manipulation environments, in which each task category consists of 5~singletask variants, which are relatively more challenging than the D4RL tasks. To align with standard offline RL, we use 55~reward based singletask variants and follow the 11~OGBench task categories. For D4RL, we use 9~MuJoCo tasks, 6~Antmaze tasks, and 8~Adroit tasks. The details are described in Appendix~\ref{appendix-experiment-1:benchmarks}.

\textbf{Baselines.}
We evaluate FAC against a total of 17 baselines and categorize them by policy class: Gaussian, diffusion, and flow policies. For OGBench, we compare against Gaussian policy-based baselines BC, IQL~\citep{offRL-AR-reg-1:IQL}, ReBRAC~\citep{offRL-CP+AR-2:ReBRAC}, diffusion policy-based baselines IDQL~\citep{offRL-Diff-2:IDQL}, SRPO~\citep{offRL-Diff-3:SRPO}, CAC~\citep{offRL-Diff-4:CAC}, and flow policy-based baselines FAWAC, FBRAC, IFQL, FQL~\citep{offRL-Flow-1:FQL}. For D4RL, we additionally compare against Gaussian policy-based baselines TD3+BC~\citep{offRL-AR-1:TD3+BC}, CQL~\citep{offRL-CP-1:CQL}, MCQ~\citep{offRL-CP-4:MCQ}, EPQ~\citep{offRL-CP-3:EPQ}, SAC-RND~\citep{offRL-CP-5:SAC-RND}, SPOT~\citep{offRL-SC-2:SPOT}, and a flow policy-based baseline QIPO-OT~\citep{offRL-Flow-2:QIPO-OT}. The details of the baselines are described in Appendix~\ref{appendix-experiment-2:baselines}.

\textbf{Evaluation and hyperparameters.}
We report the final performance evaluated after 1M gradient steps for state-based tasks and 500K gradient steps for pixel-based tasks. 
For evaluation, the one-step flow policy trained by FAC samples one action at each time step. For training FAC, we mainly tune two hyperparameters: the critic penalization coefficient $\alpha$ and the actor regularization coefficient $\lambda$. All remaining settings (e.g., dataset-driven threshold schemes) are fixed per task domain. The complete configuration is in Appendix~\ref{appendix-experiment-3:ours}.

\subsection{Performance on the OGBench Benchmark}
\begin{table}[t]
\centering
\caption{Evaluation over 55~singletasks of the OGBench. For each task category, we report the final performance averaged across its 5 singletasks, over 8~seeds (4~seeds for pixel-based tasks), with $\pm$~indicating the standard deviation. Full evaluation results on the 55~singletasks are in Table~\ref{appendix-expriments:ogbench_full_results}.}
\label{expriments:ogbench_results}
\renewcommand{\arraystretch}{1.1}

\resizebox{\linewidth}{!}{
\Large
\begin{tabular}{lccccccccccc}
    \toprule
    & \multicolumn{3}{c}{Gaussian Policies} & \multicolumn{3}{c}{Diffusion Policies} & \multicolumn{5}{c}{Flow Policies} \\
    \cmidrule(lr){2-4} \cmidrule(lr){5-7} \cmidrule(lr){8-12}
    \multicolumn{1}{c}{\textbf{Task Category}} & \textbf{BC} & \textbf{IQL} & \textbf{ReBRAC} & \textbf{IDQL} & \textbf{SRPO} & \textbf{CAC} & \textbf{FAWAC} & \textbf{FBRAC} & \textbf{IFQL} & \textbf{FQL} & \textbf{FAC (Ours)} \\
    \midrule
    antmaze-large-navigate & 10.6 & 53.4 & 80.8 & 20.8 & 10.6 & 32.8 & 6.4 & 60.2 & 28.0 & 78.6 & \textbf{92.6}\normalsize{$\pm$2.5} \\
    antmaze-giant-navigate & 0.2 & 4.0 & \textbf{26.2} & 0.0 & 0.0 & 0.0 & 0.0 & 3.8 & 2.6 & 8.6 & 23.0\normalsize{$\pm$5.1} \\
    humanoidmaze-medium-navigate & 2.0 & 32.8 & 21.8 & 0.8 & 1.4 & 52.8 & 19.4 & 38.4 & 60.4 & 57.4 & \textbf{75.6}\normalsize{$\pm$3.6} \\
    humanoidmaze-large-navigate & 0.4 & 2.4 & 2.6 & 0.6 & 0.2 & 0.6 & 0.2 & 2.2 & \textbf{11.0} & 4.2 & 8.3\normalsize{$\pm$5.5} \\
    antsoccer-arena-navigate & 1.0 & 8.4 & 0.0 & 11.8 & 1.0 & 1.8 & 12.4 & 16.0 & 33.2 & 60.2 & \textbf{67.7}\normalsize{$\pm$2.9} \\
    \hline
    cube-single-play & 5.4 & 83.0 & 90.6 & 94.6 & 79.6 & 85.2 & 81.2 & 78.6 & 79.2 & 95.8 & \textbf{98.8}\normalsize{$\pm$1.4} \\
    cube-double-play & 1.6 & 6.4 & 12.2 & 14.6 & 1.4 & 5.8 & 5.2 & 15.0 & 14.0 & 28.6 & \textbf{33.1}\normalsize{$\pm$5.7} \\
    scene-play & 4.6 & 27.6 & 40.6 & 46.2 & 20.0 & 39.8 & 29.8 & 44.8 & 30.4 & 55.8 & \textbf{71.3}\normalsize{$\pm$7.6} \\
    puzzle-3x3-play & 1.8 & 9.0 & 21.6 & 10.4 & 17.8 & 19.4 & 6.4 & 14.0 & 19.0 & 29.6 & \textbf{100.0}\normalsize{$\pm$0.0} \\
    puzzle-4x4-play & 0.2 & 7.4 & 14.0 & 29.2 & 10.6 & 14.8 & 0.4 & 13.2 & 25.2 & 17.2 & \textbf{32.3}\normalsize{$\pm$6.5} \\

    \hline
    visual manipulation (pixel-based) & - & 41.6 & 59.8 & - & - & - & - & 22.8 & 50.2 & 65.4 & \textbf{76.0}\normalsize{$\pm$3.9} \\
    
    \hline
    \addlinespace[2pt]
    \multicolumn{1}{c}{\textbf{Average (state-based)}} & 2.8 & 23.4 & 31.0 & 22.9 & 14.3 & 25.3 & 16.1 & 28.6 & 30.3 & 43.6 & \textbf{60.3} \\
    \bottomrule
\end{tabular}}
\end{table}
The evaluation results aggregated from 50~state-based and 5~pixel-based singletasks of the OGBench are summarized in Table~\ref{expriments:ogbench_results}, and the full evaluation results are in Table~\ref{appendix-expriments:ogbench_full_results} in Appendix~\ref{appendix-ogbench-additional-results}. FAC achieves the highest overall performance across all methods. FAC yields a clear margin, compared to FQL among flow policy-based baselines, CAC among diffusion policy-based baselines, ReBRAC among Gaussian policy-based baselines. In particular, the comparison with Gaussian policy-based baselines highlights the benefit of expressive policies on datasets with multi-modal action distribution, while the gap to diffusion policy-based baselines and multi-step flow policy-based baselines (FAWAC, FBRAC, IFQL) reveals the practical difficulty of optimizing iterative generators. Leveraging a one-step flow actor enables both FAC and FQL to outperform other baselines. Furthermore, fusing flow-based critic penalization and actor regularization, FAC significantly improves performance over FQL, which is the actor regularization only.

On navigation tasks, FAC achieves the best performance on 3~out~of~5 navigation tasks. Although \texttt{antmaze-giant-navigate} remains led by ReBRAC, indicating that a joint critic penalization and actor regularization approach can be competitive even with Gaussian policies. For higher-dimensional \texttt{humanoid} tasks, FAC outperforms ReBRAC, suggesting that the expressive actor contributes in complex action spaces. In \texttt{humanoidmaze-large-navigate}, FAC is slightly below IFQL on average but matches or exceeds this baseline on 4 of the 5 singletasks (see Table~\ref{appendix-expriments:ogbench_full_results} in the appendix for details.).

On state-based and pixel-based manipulation tasks, FAC outperforms all baselines across the board and achieves perfect success on \texttt{puzzle-3x3-play}. In particular, the \texttt{puzzle-play} tasks for solving lights-out puzzle are designed to test combinatorial reasoning and trajectory stitching, on these tasks, FAC attains 238\% and 11\% performance improvements over the strongest baselines (FQL and IDQL) on \texttt{puzzle-3x3-play} and \texttt{puzzle-4x4-play}, respectively. 
Moreover, FAC achieves the highest performance on the pixel-based tasks, demonstrating that performance gains of the proposed method also hold in these higher-dimensional settings.

\subsection{Performance on the D4RL Benchmark}
\begin{table}[t]
\centering
\caption{Evaluation on 23 tasks of the D4RL. We report the final performance averaged over 8 seeds, with $\pm$ indicating the standard deviation. For MuJoCo datasets, we use the following abbreviations: \textit{m} for \textit{medium}, \textit{mr} for \textit{medium-replay}, \textit{me} for \textit{medium-expert}.}
\label{expriments:d4rl_results}
\renewcommand{\arraystretch}{1.1}

\resizebox{\linewidth}{!}{
\Large
\begin{tabular}{lccccccccccccc}
    \toprule
    & \multicolumn{7}{c}{Gaussian Policies} & \multicolumn{3}{c}{Diffusion Policies} & \multicolumn{3}{c}{Flow Policies} \\
    \cmidrule(lr){2-8} \cmidrule(lr){9-11} \cmidrule(lr){12-14}
    \multicolumn{1}{c}{\textbf{MuJoCo Tasks}} & \textbf{TD3+BC} & \textbf{IQL} & \textbf{CQL} & \textbf{MCQ} & \textbf{EPQ} & \textbf{SPOT} & \textbf{ReBRAC} & \textbf{IDQL} & \textbf{SRPO} & \textbf{CAC} & \textbf{QIPO-OT} & \textbf{FQL} & \textbf{FAC (Ours)} \\
    \midrule
    halfcheetah-m & 48.3 & 47.4 & 44.0 & 64.3 & 67.3 & 58.4 & 65.6 & 51.0 & 60.4 & \textbf{69.1} & 54.2 & 60.3\normalsize{$\pm$1.1} & 65.0\normalsize{$\pm$1.5} \\
    hopper-m & 59.3 & 66.3 & 58.5 & 78.4 & 101.3 & 86.0 & \textbf{102.0} & 65.4 & 95.5 & 80.7 & 94.1 & 68.1\normalsize{$\pm$3.4} & 91.9\normalsize{$\pm$3.9} \\ 
    walker2d-m & 83.7 & 78.3 & 72.5 & \textbf{91.0} & 87.8 & 86.4 & 82.5 & 82.5 & 84.4 & 83.1 & 87.6 & 77.2\normalsize{$\pm$2.5} & 85.2\normalsize{$\pm$0.9} \\
    \hline
    halfcheetah-mr & 44.6 & 44.2 & 45.5 & 56.8 & \textbf{62.0} & 52.2 & 51.0 & 45.9 & 51.4 & 58.7 & 48.0 & 49.3\normalsize{$\pm$0.5} & 55.4\normalsize{$\pm$2.7} \\
    hopper-mr & 60.9 & 94.7 & 95.0 & \textbf{101.6} & 97.8 & 100.2 & 98.1 & 92.1 & 101.2 & 99.7 & 101.3 & 49.8\normalsize{$\pm$7.2} & 99.1\normalsize{$\pm$0.9} \\
    walker2d-mr & 81.8 & 73.9 & 77.2 & 91.3 & 85.3 & \textbf{91.6} & 77.3 & 85.1 & 84.6 & 79.5 & 78.6 & 53.1\normalsize{$\pm$7.9} & 83.0\normalsize{$\pm$5.8} \\
    \hline
    halfcheetah-me & 90.7 & 86.7 & 91.6 & 87.5 & 95.7 & 86.9 & 101.1 & 95.9 & 92.2 & 84.3 & 94.5 & 99.6\normalsize{$\pm$6.5} & \textbf{101.9}\normalsize{$\pm$5.6} \\
    hopper-me & 98.0 & 91.5 & 105.4 & \textbf{111.2} & 108.8 & 99.3 & 107.0 & 108.6 & 100.1 & 100.4 & 108.0 & 83.1\normalsize{$\pm$17.0} & 104.2\normalsize{$\pm$4.6} \\ 
    walker2d-me & 110.1 & 109.6 & 108.8 & \textbf{114.2} & 112.0 & 112.0 & 111.6 & 112.7 & 114.0 & 110.4 & 110.9 & 106.1\normalsize{$\pm$1.8} & 108.4\normalsize{$\pm$0.6} \\
    \hline
    \addlinespace[2pt]
    \multicolumn{1}{c}{\textbf{Average}} & 75.3 & 77.0 & 77.6 & 88.5 & \textbf{90.9} & 85.9 & 88.5 & 82.1 & 87.1 & 85.1 & 86.4 & 71.8 & 88.2 \\
    \bottomrule
\end{tabular}}

\vspace{0.01cm}

\resizebox{\linewidth}{!}{
\Large
\begin{tabular}{lccccccccccccc}
    \toprule
    & \multicolumn{7}{c}{Gaussian Policies} & \multicolumn{3}{c}{Diffusion Policies} & \multicolumn{3}{c}{Flow Policies} \\
    \cmidrule(lr){2-8} \cmidrule(lr){9-11} \cmidrule(lr){12-14}
    \multicolumn{1}{c}{\textbf{Antmaze Tasks}} & \textbf{TD3+BC} & \textbf{IQL} & \textbf{CQL} & \textbf{MCQ} & \textbf{EPQ} & \textbf{SAC-RND} & \textbf{ReBRAC} & \textbf{IDQL} & \textbf{SRPO} & \textbf{CAC} & \textbf{QIPO-OT} & \textbf{FQL} & \textbf{FAC (Ours)} \\
    \midrule
    umaze & 78.6 & 87.5 & 74.0 & 98.3 & \textbf{99.4} & 97.0 & 97.8 & 94.0 & 97.1 & 75.8 & 93.6 & 96.0 & 98.5\normalsize{$\pm$3.0} \\
    umaze-diverse & 71.4 & 62.2 & 84.0 & 80.0 & 78.3 & 66.0 & 88.3 & 80.2 & 82.1 & 77.6 & 76.1 & 89.0 & \textbf{93.5}\normalsize{$\pm$6.0} \\
    \hline
    medium-play & 10.6 & 71.2 & 61.2 & 52.5 & 85.0 & 38.5 & 84.0 & 84.5 & 80.7 & 56.8 & 80.0 & 78.0 & \textbf{88.0}\normalsize{$\pm$9.6} \\
    medium-diverse & 3.0 & 70.0 & 53.7 & 37.5 & \textbf{86.7} & 74.7 & 76.3 & 84.8 & 75.0 & 0.0 & 86.4 & 71.0 & 85.0\normalsize{$\pm$7.3} \\
    \hline
    large-play & 0.2 & 39.6 & 15.8 & 2.5 & 40.0 & 43.9 & 60.4 & 63.5 & 53.6 & 0.0 & 55.5 & 84.0 & \textbf{90.0}\normalsize{$\pm$4.3} \\
    large-diverse & 0.0 & 47.5 & 14.9 & 7.5 & 36.7 & 45.7 & 54.4 & 67.9 & 53.6 & 0.0 & 32.1 & 83.0 & \textbf{88.0}\normalsize{$\pm$6.0} \\
    \hline
    \addlinespace[2pt]
    \multicolumn{1}{c}{\textbf{Average}} & 27.3 & 63.0 & 50.6 & 46.4 & 71.0 & 61.0 & 76.9 & 79.2 & 73.7 & 35.0 & 70.6 & 83.5 & \textbf{90.5} \\
    \bottomrule
\end{tabular}}

\vspace{0.01cm}

\resizebox{\linewidth}{!}{
\normalsize
\begin{tabular}{lccccccccccccc}
    \toprule
    & \multicolumn{7}{c}{Gaussian Policies} & \multicolumn{3}{c}{Diffusion Policies} & \multicolumn{2}{c}{Flow Policies} \\
    \cmidrule(lr){2-8} \cmidrule(lr){9-11} \cmidrule(lr){12-13}
    \multicolumn{1}{c}{\textbf{Adroit Tasks}} & \textbf{TD3+BC} & \textbf{IQL} & \textbf{CQL} & \textbf{MCQ} & \textbf{EPQ} & \textbf{SAC-RND} & \textbf{ReBRAC} & \textbf{IDQL} & \textbf{SRPO} & \textbf{CAC} & \textbf{FQL} & \textbf{FAC (Ours)} \\
    \midrule
    pen-human & 81.8 & 81.5 & 37.5 & 68.5 & 83.9 & 5.6 & \textbf{103.5} & 76.0 & 69.0 & 64.0 & 53.0 & 73.9\scriptsize{$\pm$14.7} \\
    pen-cloned & 61.4 & 77.2 & 39.2 & 49.4 & 91.8 & 2.5 & 91.8 & 64.0 & 61.0 & 56.0 & 74.0 & \textbf{103.2}\scriptsize{$\pm$11.1} \\
    \hline
    door-human & -0.1 & 3.1 & 9.9 & 2.3 & \textbf{13.2} & 0.0 & 0.0 & 6.0 & 3.0 & 5.0 & 0.0 & 5.5\scriptsize{$\pm$3.3} \\
    door-cloned & 0.1 & 0.8 & 0.4 & 1.3 & \textbf{5.8} & 0.2 & 1.1 & 0.0 & 0.0 & 1.0 & 2.0 & 4.1\scriptsize{$\pm$3.9} \\
    \hline
    hammer-human & 0.4 & 2.5 & 4.4 & 0.3 & 3.9 & -0.1 & 0.2 & 2.0 & 1.0 & 2.0 & 1.0 & \textbf{8.6}\scriptsize{$\pm$5.4} \\
    hammer-cloned & 0.8 & 1.1 & 2.1 & 1.4 & \textbf{22.8} & 0.1 & 6.7 & 2.0 & 2.0 & 1.0 & 11.0 & 11.1\scriptsize{$\pm$11.2} \\
    \hline
    relocate-human & -0.2 & 0.1 & 0.2 & 0.1 & 0.3 & 0.0 & 0.0 & 0.0 & 0.0 & 0.0 & 0.0 & \textbf{0.6}\scriptsize{$\pm$0.5} \\
    relocate-cloned & -0.1 & 0.2 & -0.1 & 0.0 & 0.1 & 0.0 & \textbf{0.9} & 0.0 & 0.0 & 0.0 & 0.0 & 0.5\scriptsize{$\pm$0.4} \\
    \hline
    \addlinespace[2pt]
    \multicolumn{1}{c}{\textbf{Average}} & 18.0 & 20.8 & 11.7 & 15.4 & \textbf{27.7} & 1.0 & 25.5 & 18.8 & 17.0 & 16.1 & 17.6 & 25.9 \\
    \bottomrule
\end{tabular}}

\end{table}
The evaluation results on 23 tasks of the D4RL are summarized in Table~\ref{expriments:d4rl_results}. Overall, FAC is competitive or superior across all three domains (MuJoCo, Antmaze, Adroit), indicating that the benefit of jointly penalizing the critic and regularizing the actor with the expressive behavior proxy policy extends beyond the OGBench and holds under diverse reward functions and state-action distributions. Among methods with expressive policies, FAC attains the best overall performance across all three domains.

On MuJoCo tasks with dense reward functions, it is susceptible to value overestimation for OOD actions. It is seen that FQL exhibits sensitivity to the value overestimation bias, while FAC matches strong Gaussian policy-based baselines. 
This demonstrates that the flow-based critic penalization in FAC successfully mitigates the value overestimation.

On Antmaze tasks, as task difficulty increases from \texttt{umaze} to \texttt{medium} to \texttt{large}, many baselines including the strong Gaussian policy-based baselines on the MuJoCo domain, diffusion policy-based baselines, and the multistep flow policy-based baseline exhibit pronounced performance degradation. In contrast, FAC maintains robust performance against difficulty and consistently improves over FQL.

On Adroit tasks that are notoriously challenging due to the high-dimensional action space and the sparse dataset support, FAC is comparable to the strong Gaussian policy-based baselines in overall performance. 
In particular, FAC attains substantial improvements on \texttt{pen-cloned} and \texttt{hammer-human} tasks.

\subsection{Empirical Analysis}
\begin{figure}[t]
    \centering
    \begin{subfigure}[t]{0.4\textwidth}
        \centering
        \includegraphics[width=1.\textwidth, height=2.5cm]{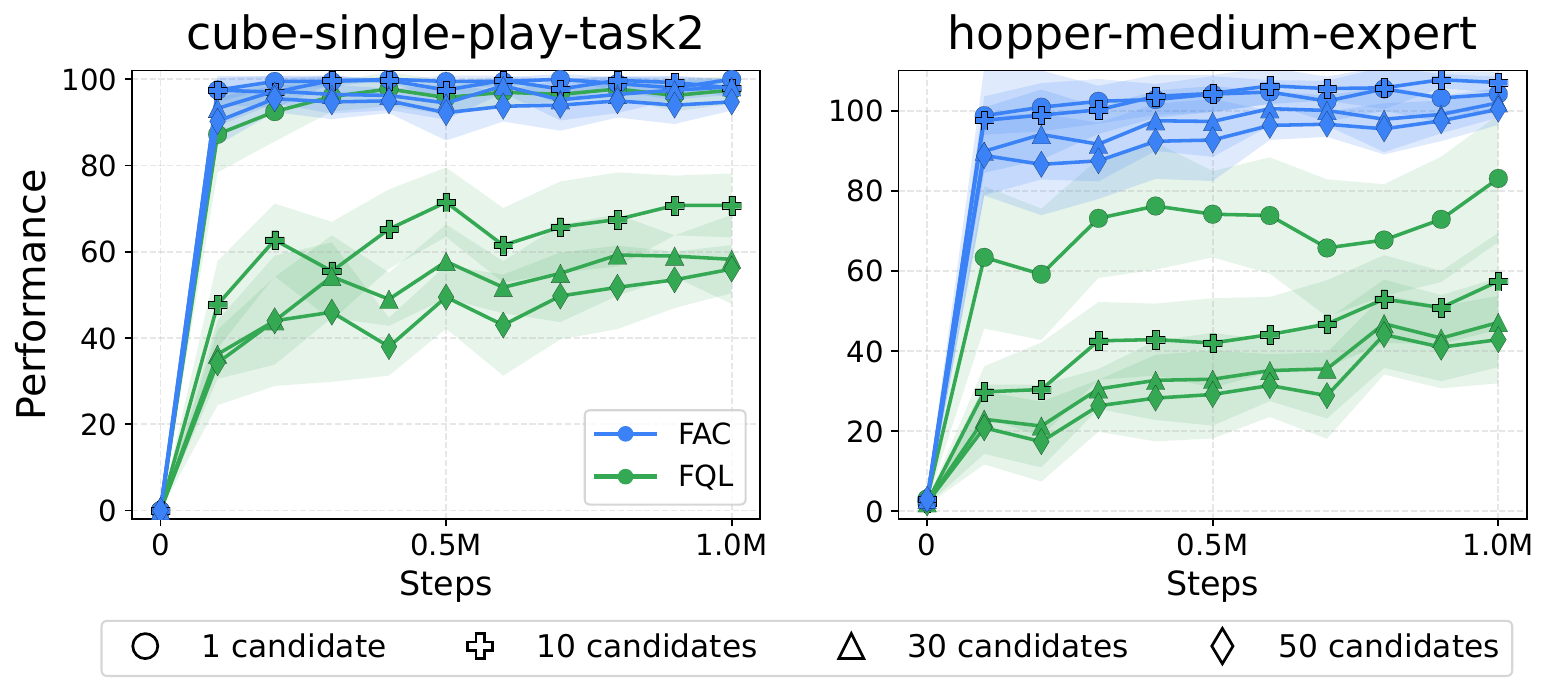}
        \caption{$\argmax_a Q(s,a)$ evaluation}
        \label{subfig-experiment-1:maxaq_FAC}
    \end{subfigure}
    \begin{subfigure}[t]{0.18\textwidth}
        \centering
        \includegraphics[width=0.95\textwidth, height=2.5cm]{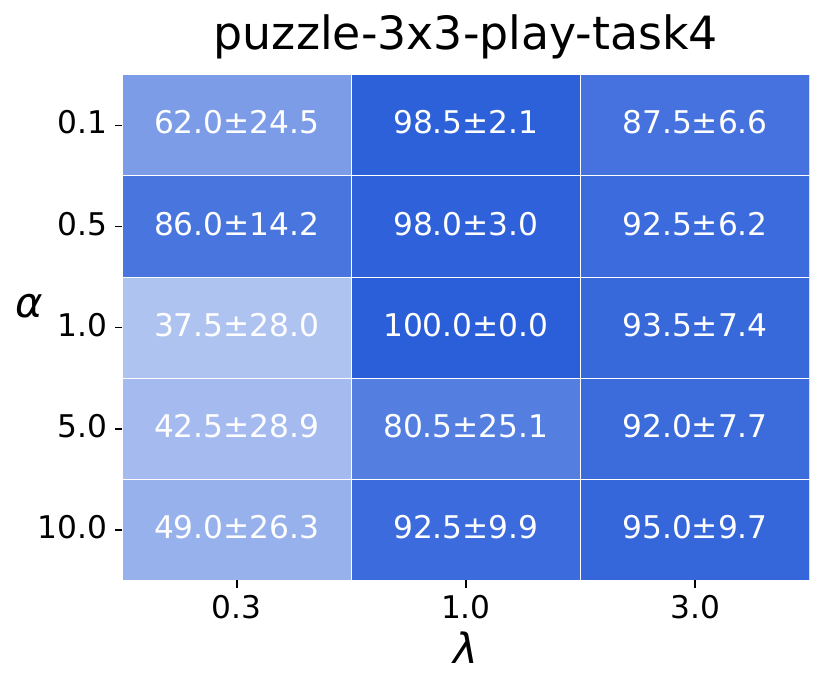}
        \caption{Across $\alpha$ and $\lambda$}
        \label{subfig-experiment-2:hyperparameters_puzzle-3x3}
    \end{subfigure}
    \begin{subfigure}[t]{0.4\textwidth}
        \centering
        \includegraphics[width=1.\textwidth, height=2.5cm]{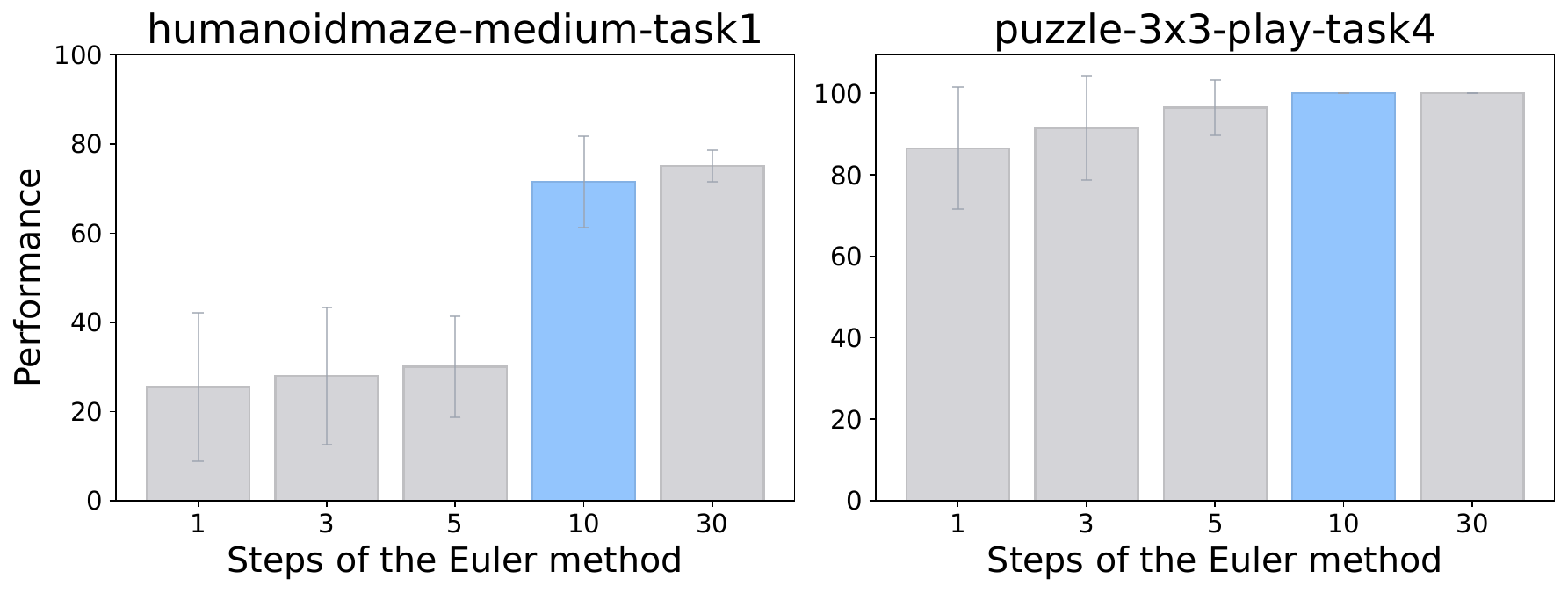}
        \caption{Performance under flow steps}
        \label{subfig-experiment-3:abalation_flow_times}
    \end{subfigure}
    \begin{subfigure}[t]{0.49\textwidth}
        \centering
        \includegraphics[width=1.\textwidth, height=2.5cm]{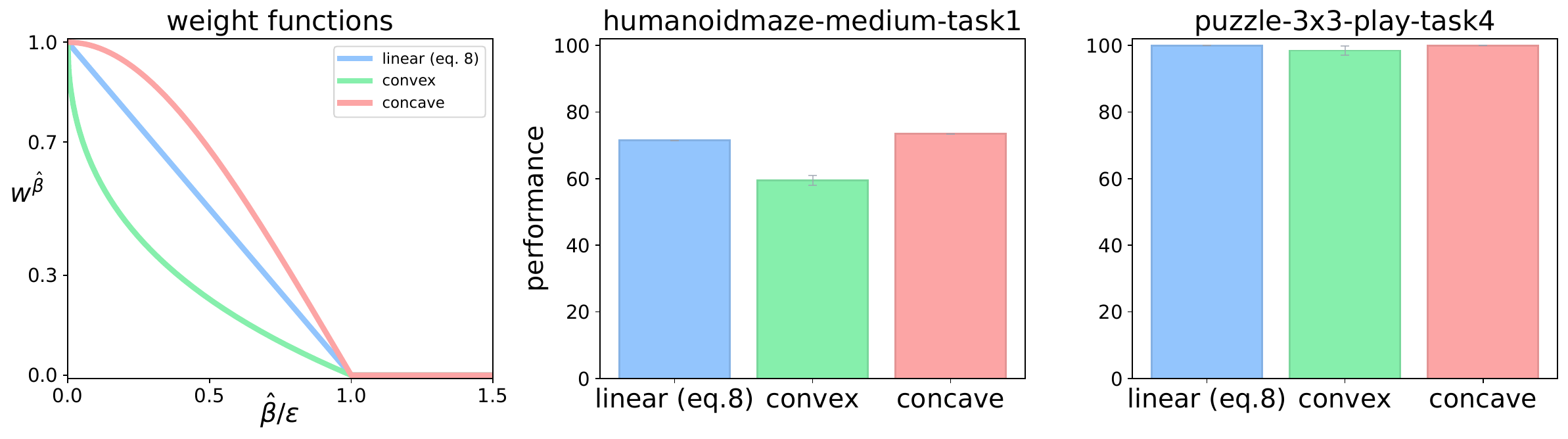}
        \caption{Critic penalization weights and its performance}
        \label{subfig-experiment-4:ablation_weights}
    \end{subfigure}
    \begin{subfigure}[t]{0.49\textwidth}
        \centering
        \includegraphics[width=1.\textwidth, height=2.5cm]{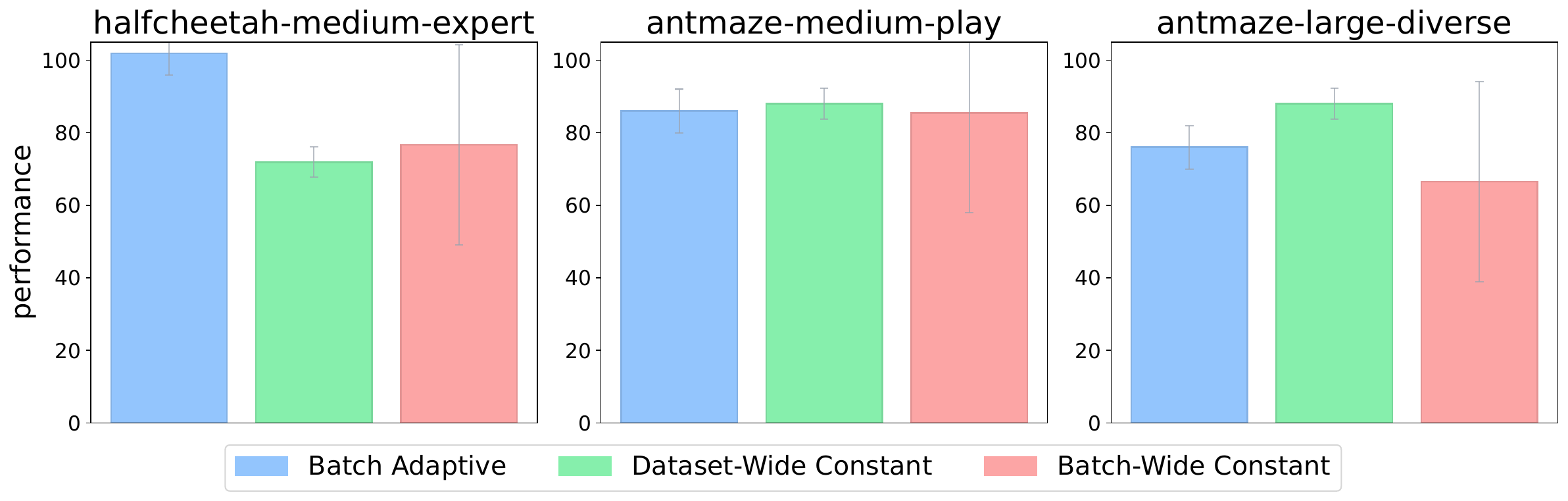}
        \caption{Performance of $\epsilon$ schemes}
        \label{subfig-experiment-5:ablation_epsilons}
    \end{subfigure}
    \caption{Empirical analysis of components. (a) Effect of flow-based critic penalization (CP) on performance under $N$ candidate actions. (b) Performances across actor regularization coefficient $\lambda$ and critic penalization coefficient $\alpha$. (c) Influence by the fidelity of flow behavior proxy under flow steps $T$. (d) Performance of weight function designs. (e) Performance under different $\epsilon$ schemes.}
    \vspace{-0.2cm}
    \label{fig-experiment-1:ablation}
\end{figure}

\textbf{Effect of the critic penalization.}
We investigate the impact of the proposed flow-based critic penalization of FAC in terms of OOD action sampling by replacing our flow-based penalized critic with an unpenalized conventional Q estimator (i.e., reducing to FQL). 
For this, we sample $N\in\{1,10,30,50\}$ candidate actions $a(s,z)$ from the one-step flow actor with $N$ i.i.d. $z$ for given $s$, and execute $\argmax_aQ_{\phi}(s,a)$ at each step for each method, with its best-performing configurations for $\lambda$, $\alpha$, and flow behavior proxy model.
Note that as $N$ increases, the chance of selecting an OOD action increases and 
$\argmax_aQ_{\phi}(s,a)$ with the unpenalized $Q$ can check this. As shown
in Fig.~\ref{subfig-experiment-1:maxaq_FAC}, as $N$ increases, FQL exhibits significant performance degradation, showing that OOD actions are drawn even with the distance regularization.  
In contrast, our method maintains consistent performance with increasing $N$, suggesting that the flow-based critic penalization together with actor regularization is effective for handling potential OOD actions.

\textbf{Sensitivity to $\alpha$ and $\lambda$.} 
FAC substantially improves performance  on \texttt{puzzle-3x3-play-v0}. Fig.~\ref{subfig-experiment-2:hyperparameters_puzzle-3x3} shows a performance heatmap across critic penalization coefficient $\alpha$ and actor regularization coefficient $\lambda$ on this default singletask, with each cell presenting the mean $\pm$ standard deviation over 8 seeds. With adequate $\lambda\geq1$, performance is robust over $\alpha$, whereas with too small $\lambda=0.3$, performance becomes sensitive to $\alpha$. These results demonstrate that jointly applying flow-based critic penalization and flow actor regularization produces a synergistic effect, enabling reliable support-aware flow policy optimization in the offline RL setting.

\textbf{Influence by the fidelity of flow behavior proxy.} The step count of the Euler method directly affects the fidelity of the approximated flow obtained by the flow proxy. When the learned flow represents almost linear trajectories from base noises $x_0$ to data samples $x_1$, single step count would be sufficient. However, for complex data distributions where the learned flow exhibit curved trajectories, higher curvature needs more integration steps (refer to Fig.~\ref{appendix-fig-3:FM_bc_across_steps} in the appendix). As shown in Fig.~\ref{subfig-experiment-3:abalation_flow_times}, FAC attains high performance for $T\geq10$, demonstrating that sufficient fidelity of the flow behavior proxy is essential for the strong performance of FAC.

\textbf{Impact of critic penalization weight functions.} We further investigate alternative designs for the critic penalization weight $w^{\hat{\beta}_{\psi}}(s,a)$. For clarity, we refer to the original weight (eq.~\ref{eq-method-7:log_beta}) as linear weight and introduce two alternatives defined as $\max(0, 1-(\log(\hat{\beta}_{\psi}(a|s)/\epsilon)^{\kappa}+1)/\log2)$, here $\kappa=2$ and $\kappa=1/2$ yield concave and convex weight functions, respectively, on $[0,1]$ of $\hat{\beta}_{\psi}/\epsilon$. Fig.~\ref{subfig-experiment-4:ablation_weights} presents the weight functions and the performance of FAC under the weights. For the tasks, the linear and concave weights achieve comparable performance, however, the convex weight leads to performance degradation. These results indicate that the linear weight is a simple but effective choice.

\textbf{Impact of $\epsilon$ schemes.} The choice of $\epsilon$ is an important component of the critic penalization weight (eq.~\ref{eq-method-7:log_beta}). We evaluate three dataset-driven threshold designs: (i) batch adaptive, (ii) dataset wide constant, and (iii) newly introduced batch wide constant, $\min_{(s,a)\sim\mathcal{B}}\hat{\beta}_{\psi}(a|s)$, where $\mathcal{B}$ denotes a mini-batch. Fig.~\ref{subfig-experiment-5:ablation_epsilons} shows that the three $\epsilon$ schemes yield distinct performance on \texttt{halfcheetah-medium-expert-v2} and \texttt{antmaze-large-diverse-v2}, whereas they exhibits nearly identical performance on \texttt{antmaze-medium-play-v2}. Based on these observations, we set $\epsilon$ as batch adaptive for all MuJoCo tasks and as dataset wide constant for all Antmaze tasks, in order to avoid excessive $\epsilon$ selection.

\begin{wrapfigure}{r}{0.6\textwidth}
    \centering
    \vspace{-0.4cm}
    \includegraphics[width=\linewidth]{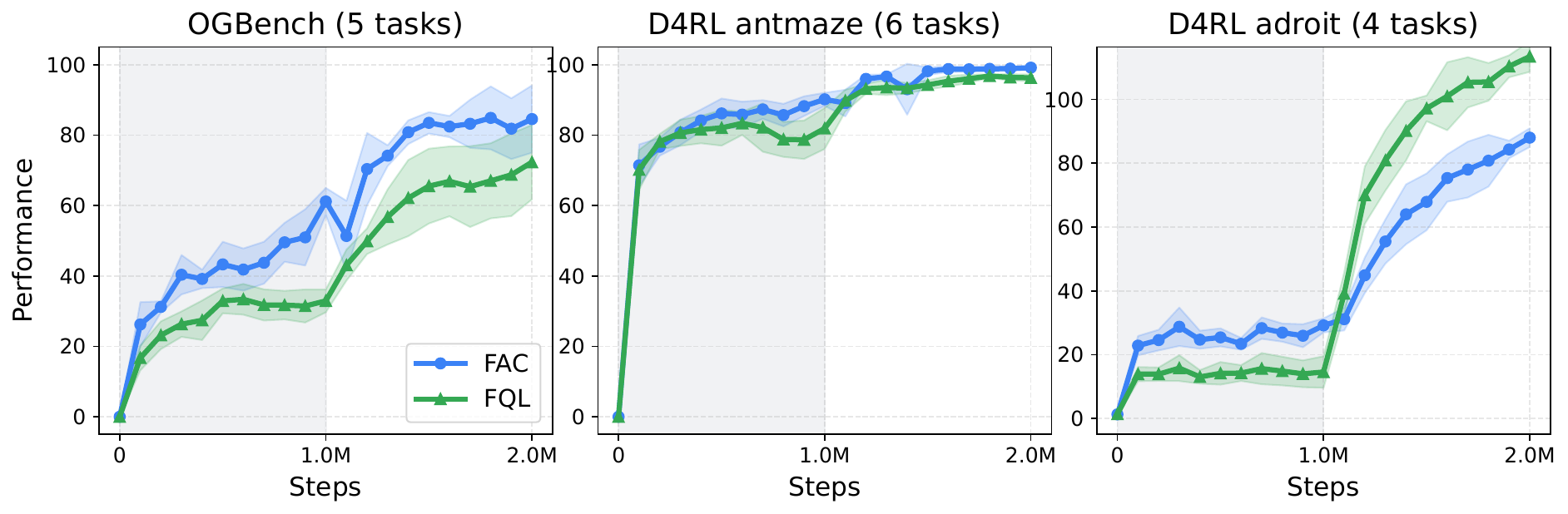}
    \vspace{-0.4cm}
    \caption{Offline-to-online results, averaged over 8~seeds with standard deviation. The gray shaded region indicates the offline phase, and the online fine-tuning phase is left unshaded. Full plots on 15 tasks are in Fig.~\ref{fig-experiment-3:O2O_full}.}
    \label{fig-experiment-2:O2O_categories}
    \vspace{-0.4cm}
\end{wrapfigure}
\textbf{Online fine-tuning.}
We evaluate FAC in the offline-to-online setting. Fig.~\ref{fig-experiment-2:O2O_categories} presents the results in this setting, where the first 1M gradient steps denote the offline RL training phase, and the subsequent 1M steps denote the online finetuning phase. We note that FAC reuses the hyperparameters used in the offline RL settings. The details for online finetuning are provided in Appendix~\ref{appendix-experiments-o2o}.

We compare FAC with FQL, which is also a competitive baseline. On the OGBench and D4RL Antmaze tasks, FAC achieves strong performance even in the offline-to-online setting. These results may come from the proposed flow-based critic penalization design. The critic penalization design allows to reduce or eliminate penalization for confident actions, suggesting that FAC avoids excessively suppressing high Q-value neighborhoods near the dataset support boundary. This, in turn, may enable exploration in such high Q-value regions. However, on the D4RL Adroit tasks, FAC shows smaller performance gains during online finetuning, despite outperforming FQL in offline training. Due to the notoriously sparsity of the Adroit datasets, FAC during the offline training phase provides limited guidance about high Q-value regions, and the critic penalization may unintentionally hinder exploration toward these promising neighborhoods.

\section{Conclusion}
We have proposed {\em flow actor-critic} (FAC), which leverages a flow behavior proxy policy jointly for critic penalization and actor regularization. 
The behavior proxy provides tractable behavior densities that yield confidence weights for critic penalization so that the resulting operator preserves the original Bellman operator in confident in-distribution region and penalizes value overestimation for out-of-distribution actions in complex and multimodal datasets.
This behavior proxy also regularizes the flow actor, constraining it to the dataset support. 
Consequently, this joint approach enables flow-based actor-critic optimization that is reliably constrained within well-supported regions. 
Empirically, FAC achieves consistently strong performance across tasks of the OGBench and D4RL benchmarks, effectively handling out-of-distribution actions and maintaining high true returns under diverse reward structures and state-action distributions.

\section*{Reproducibility statement}
Our code is available at \url{https://github.com/JongseongChae/FAC}.
The details of the practical implementation and experimental setup are described in Section \ref{sec-implementation} and Appendix \ref{appendix-experiment-3:ours}.

\section*{Acknowledgments}
This work was supported in part by the Institute of Information \& Communications Technology Planning \& Evaluation (IITP) grant funded by the Korea government (MSIT) (No.RS-2022-II220469, Development of Core Technologies for Task-oriented Reinforcement Learning for Commercialization of Autonomous Drones) and in part by Institute of Information \& Communications Technology Planning \& Evaluation (IITP) grant funded by the Korea government (MSIT) (No. RS-2024-00457882, AI Research Hub Project)

\bibliography{references}
\bibliographystyle{iclr2026_conference}

\newpage
\appendix
\section{Limitations}
The proposed method relies on the fidelity and accuracy of density evaluation of the flow behavior proxy policy for the underlying behavior policies. When the behavior proxy fails to capture the multi-modal dataset distribution, the critic penalization with the flow behavior density can suppress values in inappropriate regions, and the distance regularization for flow actor optimization can drive the flow actor toward the regions. In addition to this sensitivity, the method provides no mechanism for beneficial exploration, leaving its applicability to online RL uncertain. These limitations suggest future extensions such as robust learning of the behavior proxy~\citep{cnf-7:RectifiedFlow} to enable reliable support-aware policy improvement, and efficient exploitation strategies~\citep{OnlineRL-2:SAC, O2ORL-4:FINO} for flow policies. 

Moreover, our evaluation is limited to simulated benchmarks, and we have not evaluated the method on real-world tasks. Real-world deployment introduces additional limitations, such as inherent stochasticity in environment dynamics~\citep{limits-robust-1:H_infinity}, the possibility of leveraging data from different domains due to data scarcity or biased data collection~\citep{limits-adapt-3:DrillDICE}, and the need to balance multiple conflicting objectives in reward design~\citep{limits-MO-1:MOQ}. Thus, extending the method to robust RL~\citep{limits-robust-2:RARL, limits-robust-4:RobustPG, limits-robust-3:RIME}, domain adaptation~\citep{limits-adapt-1:TransferRL, limits-adapt-2:D3IL}, and multi-objective RL~\citep{limits-MO-2:maxmin, limits-MO-3:ARAM} to handle these real-world challenges remains an important direction for future work.

\section{The Use of Large Language Models}
We used large language models to refine and polish our writing.

\newpage
\section{Behavior Cloning Models}
\label{appendix:bcmodels}
In Fig.~\ref{fig-method-3:logp_of_bc_models}, we compare the flow behavior proxy model against other behavior cloning (BC) models. Specifically, we investigate data sampling as well as log-density or Evidence Lower Bound (ELBO) evaluations when employing a flow matching model~\citep{cnf-2:FlowMatching}, a simple Gaussian probability model, a conditional VAE~\citep{bc-1:VAE, bc-2:CVAE}, and diffusion models with $T=\{10,50\}$ denoising steps~\citep{bc-4:DDPM}.

\subsection{Flow Behavior Proxy model under Different Steps of Numerical Integration Solver}
\begin{figure}[h]
    \centering
    \includegraphics[width=1.0\linewidth]{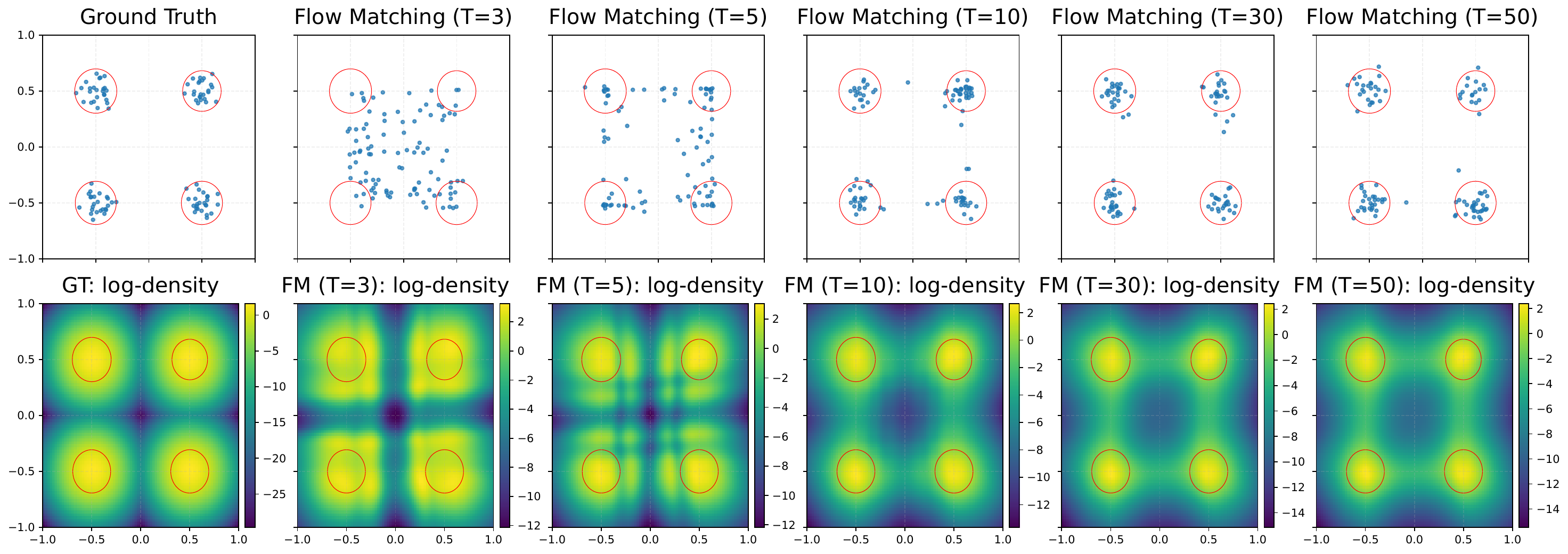}
    \caption{Flow behavior proxy model with different step counts of the Euler method. }
    \label{appendix-fig-3:FM_bc_across_steps}
\end{figure}
We investigate how the fidelity of the flow behavior proxy affects both sampling quality and density estimation. The flow behavior proxy is trained by flow matching objective~\ref{eq-preliminary-6:Flow_BC_objective}, meaning that we can use different step counts of numerical integration for both data sampling and density estimation. We use the Euler method as the numerical integration solver, and the step counts directly influence the fidelity of the learned flow. When the flow matching model learns a nearly linear trajectory from base noise $x_0$ to data sample $x_1$, the single step count would be sufficient. However, when the underlying data distribution is complex and the learned trajectory exhibits curve, higher curvature requires larger step counts of the Euler method for sufficient sampling and density estimation.

We evaluate sampling and density estimation quality under different Euler step count $T\in\{3,5,10,30,50\}$. Figure~\ref{appendix-fig-3:FM_bc_across_steps} shows the generated samples (top row) and log-density estimates (bottom row) across different $T$. When $T$ is below 10, sampling quality degrades significantly, and density estimation with smallest $T=3$ fails to separate ID and OOD regions, making it unsuitable for use in the proposed critic penalization. In contrast, for $T\geq10$, both sampling and density estimation reach a level of fidelity with sufficient recovery of the multimodal distribution and accurate density contrast between ID and OOD regions. The results demonstrate that a moderate number of Euler step (e.g., $T=10$) achieves strong fidelity required for density-based critic penalization.

\subsection{Implementation}
\textbf{Dataset.} We construct a dataset by generating a single state, $s=[0.5, -0.5, 0.5, -0.5]$, and 2-dimensional actions from a Gaussian mixture with four modes, $a\sim\frac{1}{4}\sum_{i=1}^{4}\mathcal{N}(\mu_i, \Sigma=0.008I)$, where $\mu_i\in\{[-0.5,-0.5],[-0.5,0.5],[0.5,-0.5],[0.5,0.5]\}$.

\textbf{Network architecture.} We use a [512, 512, 512, 512]-sized MLP for all neural networks. For the flow matching model, we implement one neural network: velocity prediction network that takes the state $s$, Gaussian noise $z\in\mathbb{R}^{d_{\mathcal{A}}}$, and flow time $u\in[0,1]$ and then outputs velocity estimate $v_u\in\mathbb{R}^{d_{\mathcal{A}}}$. For the Gaussian model, we use one neural network: Gaussian probability model, $\mathcal{N}(\cdot|\mu_{\theta},\sigma_{\theta})$. For the conditional VAE, we implement two neural networks: (1) Encoder network that takes the state $s$ and one action $a$ then outputs latent variable $z$, and (2) Decoder network that takes the state $s$ and latent variable $z$ then outputs an action estimate. For the diffusion models, we use one neural network: conditional $\epsilon$ prediction network that takes the state $s$, diffusion time $t$, and latent variable $x\in\mathbb{R}^{d_{\mathcal{A}}}$ and then outputs $\epsilon$-estimate.

\subsection{Training Objective and Action Sampling}
\textbf{Flow Matching model.} We train a flow matching model with the flow matching objective~(\ref{eq-preliminary-6:Flow_BC_objective}) and restate it:
\begin{align*}
        &\min_{\psi}\; \mathbb{E}_{(s,a)\sim\mathcal{D}, z\sim\mathcal{N}(0,I), u\sim\text{Unif}([0,1])}\left[\|v_{\psi}(\tilde{a}_u;s,u) - (a-z)\|_2^2\right]
\end{align*}
For action sampling, we solve the following integral equation using the Euler method with $10$ steps:
\begin{align*}
    a = z+\int_{0}^{1}v_u du=z+\int_{0}^{1}v_{\psi}(a_u;s,u) du,
\end{align*}
where $a_u=\int_{0}^{u}v_tdt$, and $z\sim\mathcal{N}(0,I)$.

\textbf{Gaussian model.} A Gaussian model is trained by maximizing log-likelihood:
\begin{align*}
    \max_{\theta}\; \mathbb{E}_{s,a\sim\mathcal{D}}\left[\log p(a|s)\right] = \mathbb{E}_{s,a\sim\mathcal{D}}\left[\log\mathcal{N}(a|\mu_{\theta}(s),\sigma_{\theta}(s))\right]
\end{align*}
We can easily sample actions from the Gaussian model.

\textbf{Conditional VAE model.} We train Encoder and Decoder models, $q_{\phi}(z|s,a)$ and $p_{\theta}(a|s,z)$, respectively, by minimizing the ELBO of log-likelihood~\citep{bc-1:VAE}. The objective is given by
\begin{align*}
    \max_{\theta, \phi}\; \mathbb{E}_{s,a\sim\mathcal{D}}\left[\log p(a|s)\right]\geq\mathbb{E}_{s,a\sim\mathcal{D}}\left[\mathbb{E}_{z\sim q_{\phi}}\left[\log p_{\theta}(a|s,z)\right]-D_{\text{KL}}(q_{\phi}(z|s,a)||p(z))\right],
\end{align*}
where the prior distribution $p(z)$ is set to $\mathcal{N}(0,I)$. 

For action sampling, we sample $z\sim\mathcal{N}(0,I)$, and feed the state $s$ and $z$ to the Decoder model.

\textbf{Diffusion model.} We train an $\epsilon$-prediction model implemented as in DDPM \citep{bc-4:DDPM}.
\begin{align}
    \label{appendix-bcmodels-eq:ddpm}
    \min_{\epsilon_{\theta}}\mathbb{E}_{t\sim\text{Unif}(\{1,...,T\}),\epsilon\sim\mathcal{N}(0,I),(s,a)\sim\mathcal{D}}\left[\|\epsilon-\epsilon_{\theta}(\sqrt{\bar{\alpha}_t}a + \sqrt{1-\bar{\alpha}_t}\epsilon, s, t)\|^2\right],
\end{align}
where $\alpha_t=1-\beta_t$, and $\bar{\alpha}_t=\prod_{s=1}^{t}\alpha_s$. Following \citep{bc-5:beta_schedule} as in \citep{offRL-Diff-1:DiffusionQL}, we use the beta schedule $\beta_t=1-\exp\left(-\frac{\beta_{\text{min}}}{N} - \frac{(\beta_{\text{max}}-\beta_{\text{min}})(2t-1)}{2 N^2}\right)$.

For action sampling, we first sample a noise $a_T\sim\mathcal{N}(0,I)$, and then iteratively generate actions through the reverse process of the diffusion model:
\begin{align*}
    a_{t-1}|a_{t} = \frac{a_t}{\sqrt{\alpha_t}}-\frac{\beta_t}{\sqrt{\alpha_t(1-\bar{\alpha}_t)}}\epsilon_{\theta}(s,a_t,t) + \sqrt{\beta_t}\epsilon,
\end{align*}
where $\epsilon\sim\mathcal{N}(0,I)$. Note that in the diffusion model, the denoising time $t=0$ corresponds to the data index $a_0$, while $t=T$ corresponds to the noise index $a_T$, which is the opposite of the indexing convention in flow-based models.

\subsection{Log-Density and ELBO for Arbitrary Actions}
\textbf{Flow Matching model.} We can directly evaluate the log-density using the Instantaneous Change of Variables~\citep{cnf-1:NeuralODE}, which is a property of Continuous Normalizing Flows.
\begin{align*}
    \log p(a|s)&=\log p_0(\hat{z}) + \int_{0}^{1}\frac{d}{du}\log p_u(a|s) du=\log p_0(\hat{z}) - \int_0^1 \nabla_{a_u}\cdot v_{\psi}(a_u;s,u)du,
\end{align*}
where $a_u=a-\int_u^1 v_t dt$, and the base distribution $p_0$ is set to normal distribution. We also use the 10 step count of the Euler method. Note that $\hat{z}$ is obtained from $a=a_1$ due to the bijective property of Continuous Normalizing Flows.

\textbf{Gaussian model.} We can easily compute log-density.
\begin{align*}
    \log p(a|s)=\log\mathcal{N}(a|\mu_{\theta}(s),\sigma_{\theta}(s))
\end{align*}

\textbf{Conditional VAE model.} Since the VAE model cannot compute log-density exactly, we instead evaluate ELBO, which is a lower bound of its log-density. Using a normal distribution as the prior $p(z)$ and implementing both the encoder $q_{\phi}$ and decoder $p_{\theta}$ as Gaussian probability models, ELBO can be computed in a straightforward manner:
\begin{align*}
    \log p(a|s)\geq\mathbb{E}_{z\sim q_{\phi}}\left[\log p_{\theta}(a|s,z)\right]-D_{\text{KL}}(q_{\phi}(z|s,a)||p(z))
\end{align*}

\textbf{Diffusion model.} Diffusion model \citep{bc-3:Diffusion_model} defines a forward process that gradually perturbs data to noise through a Markov chain, and a reverse process that learns to reconstruct data by denoising procedure that inverts the forward process. In the diffusion model, its log-likelihood is optimized via a variational lower bound (ELBO). 

DDPM \citep{bc-4:DDPM} simplifies the ELBO by reparameterizing the reverse process as the $\epsilon$-prediction objective~(\ref{appendix-bcmodels-eq:ddpm}), which allows the ELBO to be expressed in a tractable form while retaining its variational interpretation.

In the RL setting, we define the forward process $q(a_t|a_{t-1},s)=\mathcal{N}(a_t;\sqrt{1-\beta_t}a_{t-1},\beta_t I)$ and the reverse process $p_{\theta}(a_{t-1}|a_t,s)=\mathcal{N}(a_{t-1};\mu_{\theta}(s,a_t,t),\Sigma_{\theta}(s,a_t,t))$, following \citep{bc-3:Diffusion_model}. We therefore compute the ELBO as follows:
\begin{align}
    \label{appendix-bcmodels-eq:ELBO}
    \nonumber
    \log p(a|s)&\geq\mathbb{E}_q\left[\log\frac{p_{\theta}(a_{0:T}|s)}{q(a_{1:T}|a_0,s)}\right]\\ \nonumber
    &= \mathbb{E}_q\big[\log p_{\theta}(a_0|a_1,s) - \sum_{t>1} D_{\text{KL}}(q(a_{t-1}|a_t,a_0,s)\|p_{\theta}(a_{t-1}|a_t,s))\\
    &~~~~~~~~~~~~~~~~~~- D_{\text{KL}}(q(a_T|a_0,s)\|p(a_T))\big],
\end{align}
where $p(a_T)$ is a normal distribution.

We already have the predefined beta schedule $\{\beta_t\}$ and the trained $\epsilon$-prediction model, we can choose the forward process as in DDPM \citep{bc-4:DDPM}:
\begin{align*}
    p_{\theta}(a_{t-1}|a_{t},s) = \mathcal{N}(a_{t-1};\mu_{\theta}(s,a_t,t),\tilde{\beta}_tI),
\end{align*}
where $\tilde{\beta}_t=\frac{1-\bar{\alpha}_{t-1}}{1-\bar{\alpha}_t}\beta_t$.

Using notable properties of the diffusion model \citep{bc-3:Diffusion_model}
\begin{align*}
    & q(a_t|a_0,s)=\mathcal{N}\left(a_t;\sqrt{\bar{\alpha}_t}a_0,(1-\bar{\alpha}_t)I\right)\\
    & q(a_{t-1}|a_{t}, a_0, s)=\mathcal{N}\left(a_{t-1};\tilde{\mu}_t(a_t,a_0),\tilde{\beta}_tI\right),
\end{align*}
where $\tilde{\mu}_t(a_t,a_0)=\frac{\sqrt{\bar{\alpha}_{t-1}}\beta_t}{1-\bar{\alpha}_t}a_0 + \frac{\sqrt{\alpha_t}(1-\bar{\alpha}_{t-1})}{1-\bar{\alpha}_t}a_t$, and the KL-divergence between two multivariate Gaussian distributions, $D_{\text{KL}}(\mathcal{N}(\mu_1,\Sigma_1)\|\mathcal{N}(\mu_2,\Sigma_2))$,
\begin{align*}
    D_{\text{KL}}(\mathcal{N}_1\|\mathcal{N}_2)=\frac{1}{2}\left\{\text{tr}\left(\Sigma_2^{-1}\Sigma_1\right) + \left(\mu_2-\mu_1\right)^{\top}\Sigma_2^{-1}\left(\mu_2-\mu_1\right)-d_{\mathcal{A}}+\log\frac{\text{det}\left(\Sigma_2\right)}{\text{det}\left(\Sigma_1\right)}\right\},
\end{align*}
where $d_{\mathcal{A}}$ denotes the dimension of the action space,
each term of the ELBO~(\ref{appendix-bcmodels-eq:ELBO}) can be computed as
\begin{align*}
    &\log p_{\theta}(a_0|a_1,s)=-\frac{d_{\mathcal{A}}}{2}\log(2\pi\tilde{\beta}_1) - \frac{1}{2\tilde{\beta}_1}\|a_0-\mu_{\theta}(s,a_1,1)\|^2 \\
    &D_{\text{KL}}(q(a_{t-1}|a_t,a_0,s)\|p_{\theta}(a_{t-1}|a_t,s))=\frac{1}{2\tilde{\beta}_t}\|\tilde{\mu}_t(a_t,a_0) - \mu_{\theta}(s,a_t,t)\|^2, \\
    &D_{\text{KL}}(q(a_T|a_0,s)\|p(a_T))=\frac{1}{2}\left(\bar{\alpha}_Ta_0^{\top}a_0 - d_{\mathcal{A}}\left(\bar{\alpha}_T+\log(1-\bar{\alpha}_T)\right)\right)
\end{align*}
where $\mu_{\theta}(s,a_t,t)=\frac{1}{\sqrt{\alpha_t}}\left(a_t-\frac{\beta_t}{\sqrt{1-\bar{\alpha}_t}}\epsilon_{\theta}(s,a_t,t)\right)$.

\newpage
\section{Learned Q Comparison on a Continuous-Action Bandit Problem}
\label{appendix:toy_ar&cp}
\begin{figure}[h]
    \centering    
    \begin{subfigure}[t]{0.26\textwidth}
        \centering        \includegraphics[width=1.\textwidth, height=3.2cm]{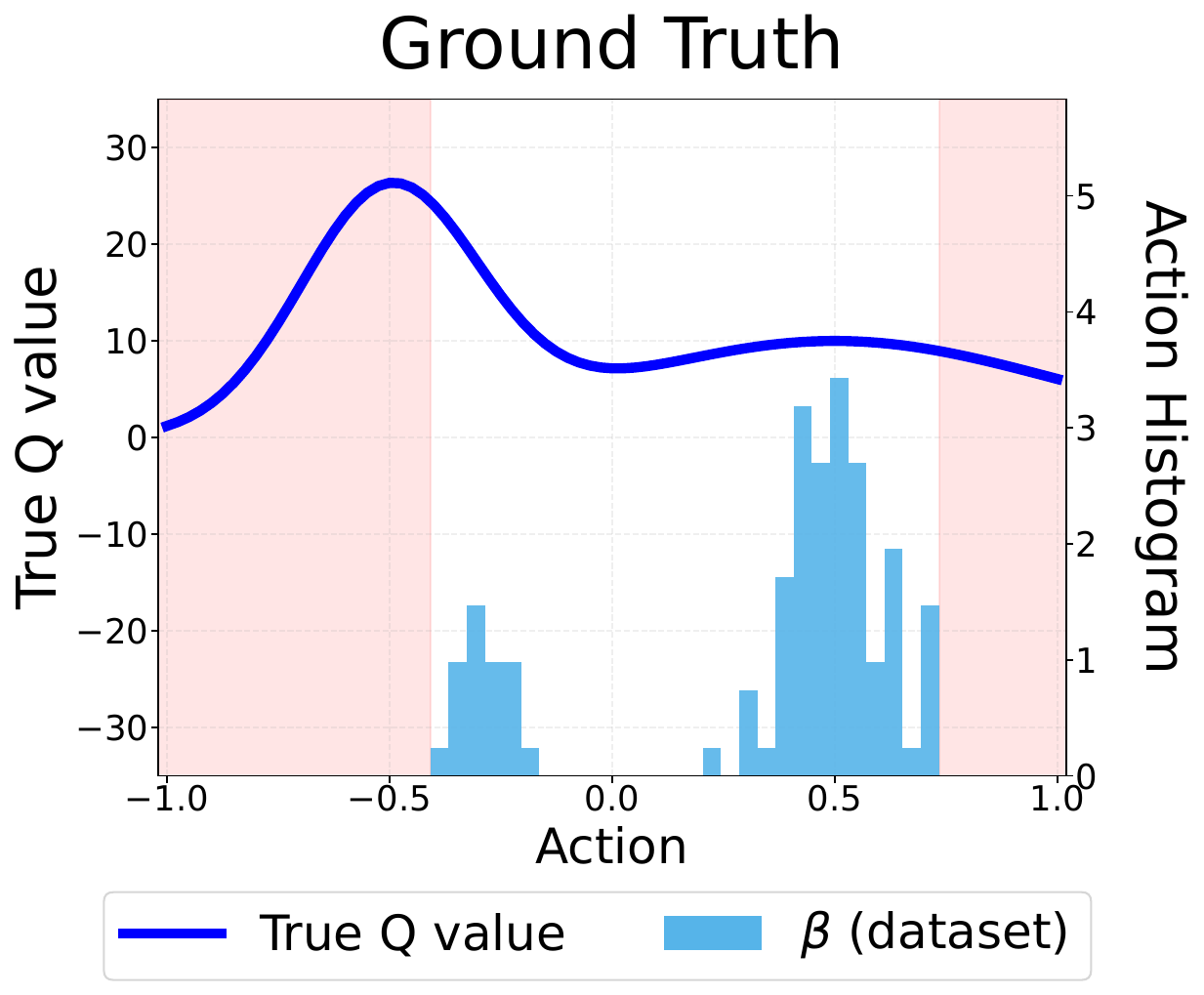}
        \caption{Dataset and True Q value}
        \label{appendix-subfig-1:dataset}
    \end{subfigure}
    \begin{subfigure}[t]{0.73\textwidth}
        \centering        \includegraphics[width=1.\textwidth, height=3.2cm]{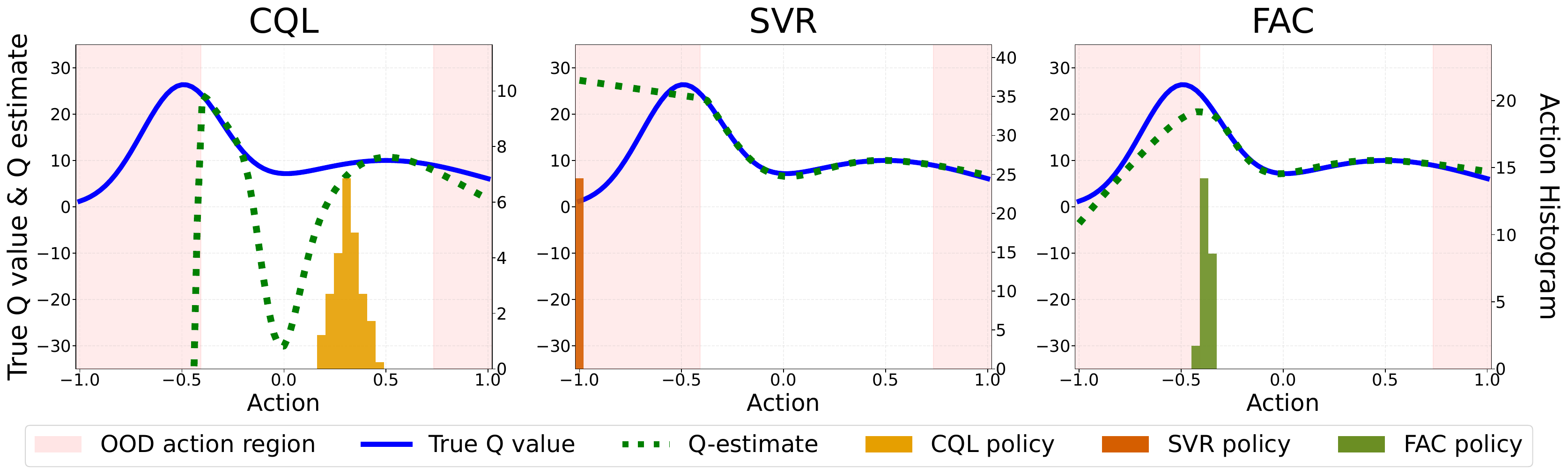}
        \caption{Comparison of baselines and FAC}
        \label{appendix-subfig-2:FAC}
    \end{subfigure}
    \begin{subfigure}[t]{0.49\textwidth}
        \centering
        \includegraphics[width=1.\textwidth, height=3.2cm]{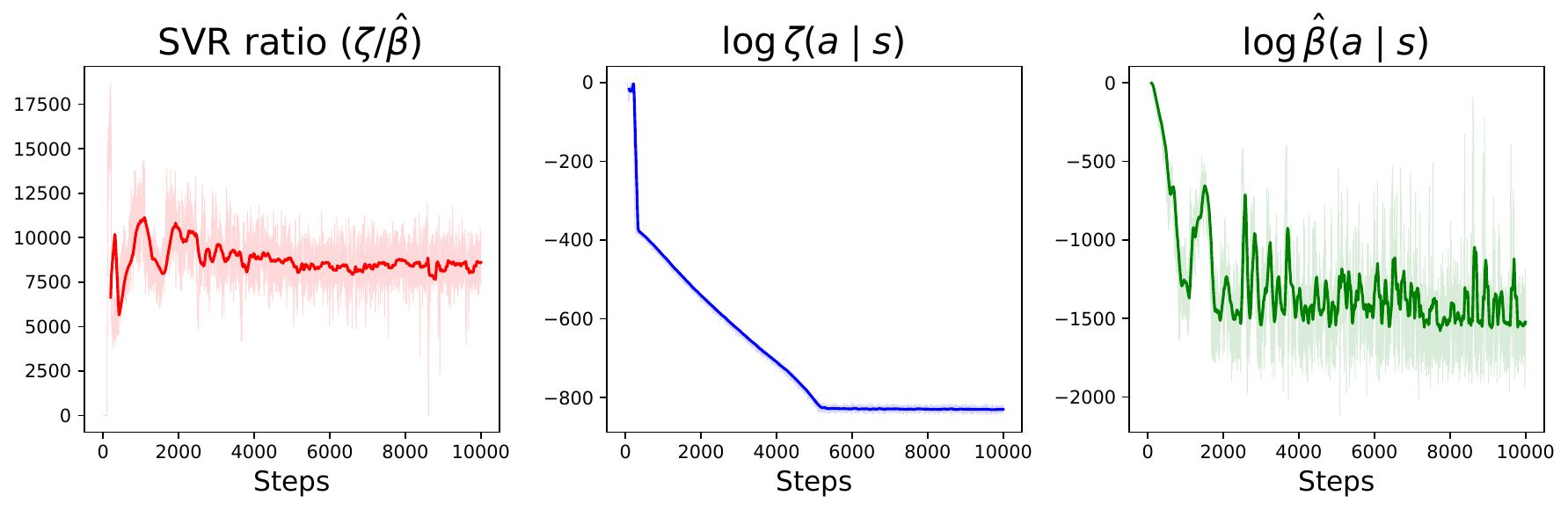}
        \caption{The IS ratio of SVR}
        \label{appendix-subfig-3:SVR_ratio}
    \end{subfigure}
    \begin{subfigure}[t]{0.49\textwidth}
        \centering
        \includegraphics[width=1.\textwidth, height=3.2cm]{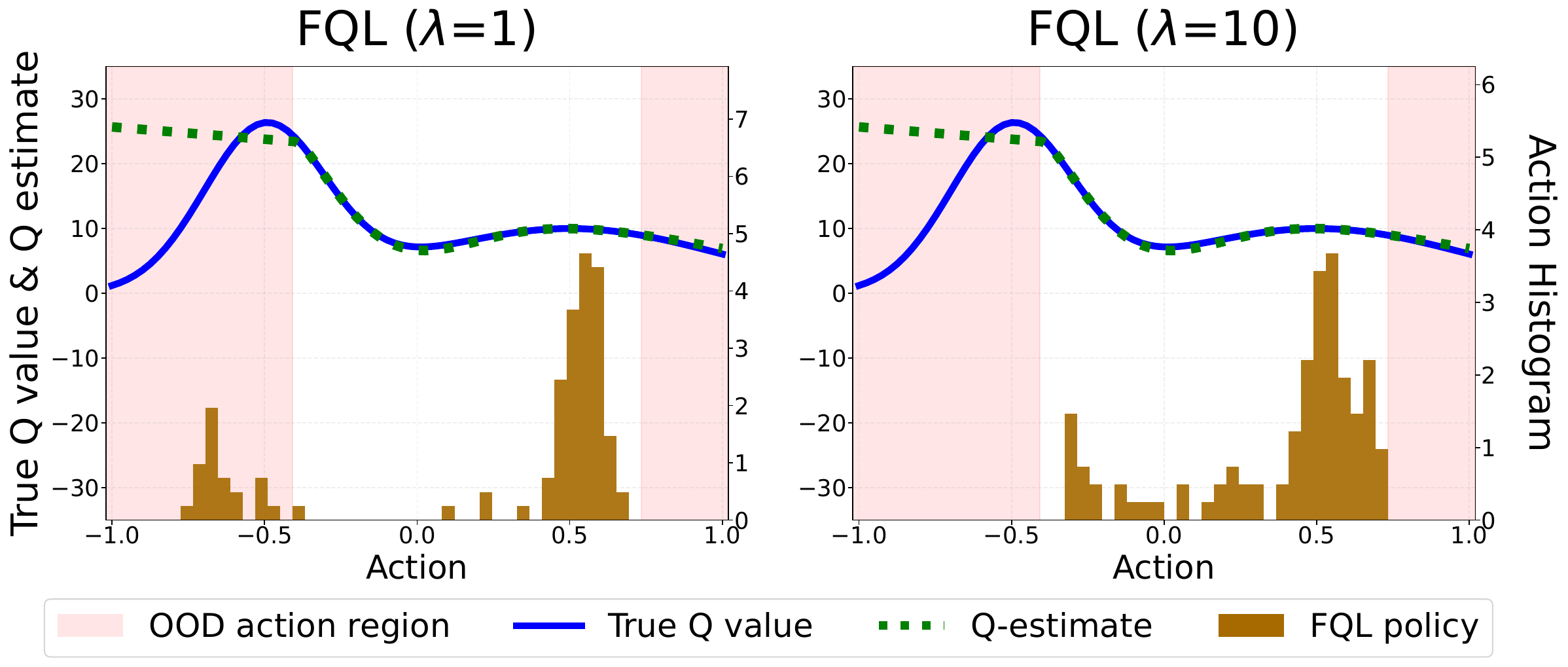}
        \caption{FQL under different $\lambda$}
        \label{appendix-subfig-4:FQL}
    \end{subfigure}
    \caption{A comparison of FAC and baselines in a continuous action bandit problem. (a) Dataset and true Q value of the problem. (b) Comparison of baselines and FAC. (c) The IS ratio in the critic penalization of SVR. (d) Results of FQL under two actor regularization coefficients $\lambda=\{1, 10\}$.}
    \label{appendix-fig-1:didactic}
\end{figure}

In order to compare the accuracy of the learned Q function, we consider a concrete example in this section. We mainly consider CQL (an early work), SVR (a good method with Gaussian policy), FQL (a recent flow-based method) and FAC (our method) for learned Q function comparison.

\subsection{The Considered Continuous-Action Bandit Problem}

Since we need to know the true action value  function, i.e., Q function, for accuracy evaluation and it is difficult to know the true Q function for complicated RL tasks, we consider a continuous-action bandit problem, which is modified from the bandit example in Ch.2 of \cite{OnlineRL-4:Sutton}. In the considered bandit problem, the action range is from -1 to 1, and each time we perform an action to hit the desired points located at -0.5 and 0.5, where the location -0.5 is preferred. Thus, the true action value function is designed as a Gaussian mixture and given by
\[
Q(a)=\bar{r}(a)=25e^{-25(a+0.5)^2/2}+10e^{-2(a-0.5)^2}.
\]
The true action value function is shown in Figure  \ref{appendix-subfig-1:dataset}.  As seen, the true Q value exhibits two modes: the left mode with high peak and the right mode with a moderately lower value.  The reward at each time is a noisy version of the true action value, i.e.,
\[
r(a) = \bar{r}(a) + \xi,
\]
where $\xi$ is zero-mean Gaussian noise with standard deviation  $0.1$. Note that due to the noise in reward, we cannot know the true value from a single trial, and exploration to find a better action and exploitation to use the collected information are necessary for this simple bandit problem to obtain a large cumulated reward over time.

\textbf{Dataset generation:} We construct an offline dataset composed of data $\{(s_0,a,r)\}$ sampled from two Gaussian distributions: one centered at $0.5$ and the other centered at $-0.3$, and  80 \% of data is from the sampling distribution centered at $0.5$ and the remaining 20\% of data is from the sampling distribution centered at $-0.3$, as show in Figure  \ref{appendix-subfig-1:dataset}. Here, $s_0$ denotes the dummy single state of the bandit problem.  The reason for this dataset construction is that we want to model the situation in which a large portion of dataset corresponds to suboptimal behavior and a small portion corresponds to the near-optimal behavior.

\subsection{Results and Comparison} \label{appendix:bandit_comparison}

The comparison result is shown in Figure \ref{appendix-fig-1:didactic} (b) and (d), where the blue solid line is the true Q function and the green dotted lines are the learned Q functions of the considered methods.

\textbf{FAC versus Gaussian Model-Based Methods:} We first compare the performance  with representative conservative critic methods: CQL and SVR, which are based on Gaussian probability models. Fig.~\ref{appendix-subfig-2:FAC} shows the learned critic and actor for each method in the continuous action bandit setting.

The learned critic of CQL significantly underestimates the Q value between the two modes of the dataset, resulting in the actor being confined to the suboptimal ID region. In contrast, SVR tends to overestimate the value in the OOD region. The learned critic overestimates the OOD region, which induces the actor to sample OOD actions. This result stems from the instability of the importance sampling (IS) ratio used in the critic penalization term of SVR, as shown in Fig.~\ref{appendix-subfig-3:SVR_ratio}. 
As a consequence of this instability, the penalization term in the SVR loss ceases to play a corrective role, and the learned critic becomes unstable during training\footnote{To stabilize training, we follow the official open-source implementation of SVR and clip the critic penalization term to the range $[-10^4,10^4]$. Despite this practical adjustment, the IS ratio still exhibits values near the clipping threshold, limiting the effectiveness of the penalization term.}. In contrast, FAC provides more accurate Q estimates in ID region and gradually reduces estimates in OOD regions, which leads the actor to concentrate on the well-supported high-value region.

\textbf{FAC versus FQL:} Now, we compare FQL, which uses only the actor regularization, and our FAC.   As shown in Fig.~\ref{appendix-subfig-4:FQL}, the one-step flow actor, which is used in both methods, captures the multi-modal action distribution, confirming sufficient expressibility on this problem. The key difference exists in critic learning. FQL tends to overestimate in the OOD region, which leads to OOD action sampling under weak regularization and to over-imitation of sub-optimal actions under strong regularization, as shown in Fig.~\ref{appendix-subfig-4:FQL}. 
In contrast, FAC preserves more accurate values on the in-distribution (ID) region and gradually suppresses values in the OOD region. Hence,  the actor concentrates on the well-supported high-value region even with weaker actor regularization $\lambda=0.1$, as shown in Fig.~\ref{appendix-subfig-2:FAC}. 
These observations demonstrate the synergy of the combination used in FAC: how flow-based critic penalization complements flow actor regularization by aligning critic learning with dataset support while retaining multi-modal expressibility.

\subsection{Algorithms}
We train all methods by adapting them to the considered bandit setting, where $Q_{\phi}$ is regressed from noisy reward $r$, similarly to \citet{O2ORL-1:RLPD}. For the flow behavior proxies of FAC and FQL, we use a behavior flow proxy $\hat{\beta}_{\psi}$ trained via the flow matching objective~(\ref{eq-preliminary-6:Flow_BC_objective}). 
We use the Euler method with 10 steps as an integral (ODE) solver for action sampling and evaluating behavior proxy density from $\hat{\beta}_{\psi}$. For the behavior proxy of SVR, we use a Gaussian behavior cloning model $\hat{\beta}_{\psi}$ trained via maximizing log-likelihood on the dataset. The actor and critic objectives of the methods are presented below.

\textbf{FAC.} The objectives of FAC are given by
\begin{align*}
    &\max_{\pi_{\theta}}\; \mathbb{E}_{s\sim\mathcal{D}}\left[\mathbb{E}_{a\sim\pi_{\theta}}\left[Q_{\phi}(s,a)\right]-\lambda\,\mathbb{E}_{z\sim\mathcal{N}(0,I)}\left[\|a_{\theta}(s,z)-a_{\psi}(s,z)\|_2^2\right]\right]\\
    &\min_{Q_{\phi}}\; \alpha\,\mathbb{E}_{s\sim\mathcal{D},\;a\sim\pi}\left[w^{\hat{\beta}_{\psi}}(s,a)\cdot Q(s,a)\right] + \mathbb{E}_{s,a\sim\mathcal{D}}\left[\left(Q(s,a)-{r}(a)\right)^2\right]
\end{align*}

\textbf{FQL.} The objectives of FQL are given by
\begin{align*}
    &\max_{\pi_{\theta}}\; \mathbb{E}_{s\sim\mathcal{D}}\left[\mathbb{E}_{a\sim\pi_{\theta}}\left[Q_{\phi}(s,a)\right]-\lambda\,\mathbb{E}_{z\sim\mathcal{N}(0,I)}\left[\|a_{\theta}(s,z)-a_{\psi}(s,z)\|_2^2\right]\right]\\
    &\min_{Q_{\phi}}\; \mathbb{E}_{s,a\sim\mathcal{D}}\left[\left(Q(s,a)-{r}(a)\right)^2\right]
\end{align*}

\textbf{CQL.} The objectives of CQL are given by, 
\begin{align*}
    &\max_{\pi_{\theta}}\; \mathbb{E}_{s\sim\mathcal{D},\;a\sim\pi_{\theta}}\left[Q_{\phi}(s,a)-\eta\log\pi_{\theta}(a|s)\right]\\
    &\min_{Q_{\phi}}\; \alpha\,\mathbb{E}_{s\sim\mathcal{D}}\left[\mathbb{E}_{a\sim\pi_{\theta}}\left[Q_{\phi}(s,a)\right]-\mathbb{E}_{a\sim\mathcal{D}}\left[Q_{\phi}(s,a)\right]\right] + \mathbb{E}_{s,a\sim\mathcal{D}}\left[\left(Q(s,a)-{r}(a)\right)^2\right]
\end{align*}

\textbf{SVR.} The objectives of SVR are given by,
\begin{align*}
    &\max_{\pi_{\theta}}\; \mathbb{E}_{s\sim\mathcal{D},\;a\sim\pi_{\theta}}\left[Q_{\phi}(s,a)\right]\\
    &\min_{Q_{\phi}}\; \alpha\,\mathbb{E}_{s\sim\mathcal{D}}\left[\mathbb{E}_{a\sim\zeta}\left[(Q(s,a)-Q_{\text{min}})^2\right]-\mathbb{E}_{a\sim\mathcal{D}}\left[\frac{\zeta(a|s)}{\hat{\beta}_{\psi}(a|s)}(Q(s,a)-Q_{\text{min}})^2\right]\right]\\
    &~~~~~~~~~~~+ \mathbb{E}_{s,a\sim\mathcal{D}}\left[\left(Q(s,a)-{r}(a)\right)^2\right],
\end{align*}
where $Q_{\text{min}}=\min_{\mathcal{D}}{r}(a)/(1-\gamma)$, and we set $\zeta$ to the Gaussian model with the same mean as $\pi_{\theta}$ and a larger variance, as recommended in SVR~\citep{offRL-CP-2:SVR}.

\textbf{Network architecture.} We use a [256, 256, 256]-sized MLP for all neural networks. For all methods, we use the critic network $Q_{\phi}$ that takes a state-action pair $(s_0,a)$ as input and outputs a scalar estimate of the true action value. For FAC and FQL, we use the behavior proxy policy $\hat{\beta}_{\psi}$ that takes the dummy state $s_0$, Gaussian noise $z$, and flow time $u\in[0,1]$ as input and then outputs the velocity field $v_u$. The one-step flow policy $\pi_{\theta}$ that takes the state $s$ and Gaussian noise $z$ and then outputs a single action $a$. For CQL, we use the Gaussian probability policy that takes the dummy state $s_0$. For SVR, we use Gaussian probability models, which takes the dummy state $s_0$ and outputs a action $a$ and its log-density, as the policy and behavior proxy, respectively.

\newpage
\section{Theoretical Analysis}
\label{appendix-theoretical-analysis}

\FACoperator*

\begin{proof}
    In the tabular setting, we can replace $a\sim\mathcal{D}$ with $a\sim\beta$. Then, the objective~(\ref{eq-method-5:our_Qloss}) can be rewritten as
    \begin{align}
        \label{appendix-proof-eq-1:tabular-FAC-Q}
        \nonumber        &\alpha\,\mathbb{E}_{s\sim\mathcal{D},\;a\sim\pi}\left[w^{\hat{\beta}}(s,a)\cdot Q(s,a)\right] + \mathbb{E}_{s\sim\mathcal{D},\, a\sim\beta}\left[\left(Q(s,a)-\mathcal{T}^{\pi}Q(s,a)\right)^2\right] \\        &=\mathbb{E}_{s\sim\mathcal{D},a\sim\beta}\left[\alpha\,w^{\hat{\beta}}(s,a)\frac{\pi(a|s)}{\beta(a|s)}\,Q(s,a) + \left(Q(s,a)-\mathcal{T}^{\pi}Q(s,a)\right)^2\right]
    \end{align}
    We set the derivative of (\ref{appendix-proof-eq-1:tabular-FAC-Q}) to zero:
    \begin{align*}
        \alpha \frac{ w^{\hat{\beta}}(s,a)\pi(a|s)}{\beta(a|s)} + 2\left(Q(s,a)-\mathcal{T}^{\pi}Q(s,a)\right) = 0,
    \end{align*}
    leading to the solution $Q^*(s,a)$, given by
    \begin{align*}        Q^*(s,a) =\mathcal{T}^{\pi}_{\text{FAC}}Q(s,a) =  \mathcal{T}^{\pi}Q(s,a) - \frac{\alpha}{2}\left(\frac{w^{\hat{\beta}}(s,a)\pi(a|s)}{\beta(a|s)}\right).
    \end{align*}
    Now consider three cases:

    {\em i)}     When   $\hat{\beta}(a|s)\geq\epsilon$ and $\beta(a|s) >0$,  $w^{\hat{\beta}}(s,a)=0$. So, $\mathcal{T}^{\pi}_{\text{FAC}}Q(s,a)= \mathcal{T}^{\pi}Q(s,a)$.

    {\em ii)}   When   $\hat{\beta}(a|s) < \epsilon$ and $\beta(a|s) >0$,  $w^{\hat{\beta}}(s,a) > 0$. So, $\mathcal{T}^{\pi}_{\text{FAC}}Q(s,a)= \mathcal{T}^{\pi}Q(s,a)- \frac{\alpha}{2}\left(\frac{w^{\hat{\beta}}(s,a)\pi(a|s)}{\beta(a|s)}\right)$.

    {\em iii)} When $\beta(a|s)=0$, the ratio $\pi(a|s)/\beta(a|s)=\infty$ for any $a$ with $w^{\hat{\beta}}(s,a)\pi(a|s) > 0$.  Then, its negative becomes $-\infty$.

    The only case requiring special care is that $\pi(a|s) >0$ and $\beta(a|s)=0$ and   $w^{\hat{\beta}}(s,a)=0$ simultaneously. The ratio $w^{\hat{\beta}}(s,a)/\beta(a|s)$ may cancel out for $\beta(a|s)=0$ and   $w^{\hat{\beta}}(s,a)=0$. 
    But, it is highly unlikely that we have $\beta(a|s)=0$  and $w^{\hat{\beta}}(s,a)=0$ simultaneously by the definition of  the weight $w^{\hat{\beta}}(s,a)$. 
\end{proof}

With the operator $\mathcal{T}_{\text{\tiny{FAC}}}^{\pi}$ derived above, we now show that the policy (actor) $\pi$ induced by the policy improvement and policy evaluation of FAC is a support-constrained policy, and that the $Q$-function (critic) under the policy $\pi$ yields unbiased $Q$-value estimates in high confidence regions of $\hat{\beta}\geq\epsilon$, while providing under-estimated $Q$-values in low confidence regions of  $\hat{\beta}<\epsilon$.

We first define the support-constrained policy, which plays a central role in our analysis, following \citet{offRL-SC-1:SupportConstraint, offRL-SC-2:SPOT, offRL-CP-2:SVR},
\begin{restatable}{definition}{SupportConstrainedPolicy}
    \label{definition-1:support_constrained_policy}
    The support-constrained policy class $\Pi_{\beta}$ is defined as
    \begin{align*}
        \Pi_{\beta}=\{\pi | \pi(a|s)=0 \text{\; whenever \;} \beta(a|s)=0\},
    \end{align*}
    where $\beta$ is the underlying behavior policy of the dataset.
\end{restatable}

\begin{restatable}{proposition}{contraction}
    \label{proposition-2:gamma_contraction_operator}
    For any support-constrained policy $\pi$, on the support of $\beta$, $\mathcal{T}_{\text{\tiny{FAC}}}^{\pi}$ is a $\gamma$-contraction operator in the $\mathcal{L}_{\infty}$ norm.
\end{restatable}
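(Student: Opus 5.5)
The plan is to exploit the fact that, on the support of $\beta$, the operator from Proposition~\ref{proposition-1:FAC_operator} is the standard Bellman operator plus an additive term that does not depend on $Q$. Fix a support-constrained policy $\pi\in\Pi_{\beta}$ (Definition~\ref{definition-1:support_constrained_policy}) and restrict attention to the set $\mathcal{S}\times\mathcal{A}_\beta$ of pairs $(s,a)$ with $\beta(a|s)>0$. By Proposition~\ref{proposition-1:FAC_operator}, on this set we can write
\begin{align*}
\mathcal{T}^{\pi}_{\text{\tiny{FAC}}}Q(s,a)=\mathcal{T}^{\pi}Q(s,a)-b(s,a),\qquad b(s,a)=\frac{\alpha}{2}\,\frac{w^{\hat{\beta}}(s,a)\pi(a|s)}{\beta(a|s)},
\end{align*}
with $b(s,a)=0$ when $\hat{\beta}(a|s)\ge\epsilon$. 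Since $\beta(a|s)>0$ there and $w^{\hat{\beta}}\in[0,1]$, $b$ is finite. Crucially, $b$ depends only on $\hat{\beta},\beta,\pi,\alpha$ and not on $Q$.

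The first step is to check that the operator maps functions on the support to functions on the support without ever invoking the $-\infty$ branch. The backup $\mathcal{T}^{\pi}Q(s,a)$ involves $Q(s',a')$ only for $a'\sim\pi(\cdot|s')$. Because $\pi\in\Pi_\beta$, every such $a'$ satisfies $\beta(a'|s')>0$, so only values of $Q$ on the support are ever queried. This gives a well-defined operator on bounded functions over the support, which is where the support-constraint hypothesis is used. I expect this to be the main subtle point. One must also note that finiteness of $b$ (for bounded $Q$ to map to bounded $Q$) requires $\pi(a|s)/\beta(a|s)$ to be bounded on the support, which holds in the tabular setting with finitely many pairs.

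The second step is the contraction estimate. For two such $Q_1,Q_2$, the $b$ terms cancel:
\begin{align*}
\left|\mathcal{T}^{\pi}_{\text{\tiny{FAC}}}Q_1(s,a)-\mathcal{T}^{\pi}_{\text{\tiny{FAC}}}Q_2(s,a)\right|=\gamma\left|\mathbb{E}_{s',a'}\left[Q_1(s',a')-Q_2(s',a')\right]\right|\le\gamma\sup_{(s',a'):\beta(a'|s')>0}|Q_1-Q_2|.
\end{align*}
Taking the supremum over $(s,a)$ in the support gives $\|\mathcal{T}^{\pi}_{\text{\tiny{FAC}}}Q_1-\mathcal{T}^{\pi}_{\text{\tiny{FAC}}}Q_2\|_\infty\le\gamma\|Q_1-Q_2\|_\infty$. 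Banach's fixed-point theorem then gives the unique fixed point referenced in the main text.
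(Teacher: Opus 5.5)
Your proof is correct and takes essentially the same approach as the paper. On the support of $\beta$, the penalty $\frac{\alpha}{2}w^{\hat{\beta}}\pi/\beta$ is a finite shift that does not depend on $Q$, so it cancels and leaves the standard $\gamma$-contraction of $\mathcal{T}^{\pi}$, while $\pi\in\Pi_{\beta}$ lets you discard pairs with $\beta(a|s)=0$. The paper splits into the cases $w^{\hat{\beta}}=0$ and $w^{\hat{\beta}}>0$, which you merge into a single term $b$, and your remark that the backup only queries $Q$ where $\beta(a'|s')>0$ makes explicit a step the paper leaves implicit.
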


\begin{proof}
    By Proposition~\ref{proposition-1:FAC_operator}, we already have
    \begin{align*}
        \mathcal{T}_{\text{\tiny{FAC}}}^{\pi}Q(s,a)=\mathcal{T}^{\pi}Q(s,a)-\frac{\alpha}{2}\left(\frac{w^{\hat{\beta}}(s,a)\pi(a|s)}{\beta(a|s)}\right).
    \end{align*}
    Let $Q_1$ and $Q_2$ be two arbitrary functions.

    For any $(s,a)$ s.t. $\beta(a|s)>0$ and $\hat{\beta}(a|s)\geq\epsilon$, i.e., $w^{\hat{\beta}}(s,a)=0$, we have
    \begin{align*}
        |\mathcal{T}_{\text{\tiny{FAC}}}^{\pi}Q_1(s,a) - \mathcal{T}_{\text{\tiny{FAC}}}^{\pi}Q_2(s,a)|&=|\mathcal{T}^{\pi}Q_1(s,a)-\mathcal{T}^{\pi}Q_2(s,a)|\\
        &=|\gamma\mathbb{E}_{s'\sim P(\cdot|s,a), a'\sim\pi(\cdot|s')}\left[Q_1(s',a')-Q_2(s',a')\right]|\\
        &\leq\gamma\mathbb{E}_{s'\sim P(\cdot|s,a), a'\sim\pi(\cdot|s')}\left[|Q_1(s',a')-Q_2(s',a')|\right]\\
        &\leq \gamma\|Q_1-Q_2\|_{\infty}
    \end{align*}

    For any $(s,a)$ s.t. $\beta(a|s)>0$ and $\hat{\beta}(a|s)<\epsilon$, i.e., $w^{\hat{\beta}}(s,a)>0$, we have
    \begin{align*}
        |\mathcal{T}_{\text{\tiny{FAC}}}^{\pi}Q_1(s,a) - \mathcal{T}_{\text{\tiny{FAC}}}^{\pi}Q_2(s,a)|&=|\mathcal{T}^{\pi}Q_1(s,a)-\frac{\alpha}{2}\left(\frac{w^{\hat{\beta}}(s,a)\pi(a|s)}{\beta(a|s)}\right)\\
        &~~~~~~~~-\mathcal{T}^{\pi}Q_2(s,a)+\frac{\alpha}{2}\left(\frac{w^{\hat{\beta}}(s,a)\pi(a|s)}{\beta(a|s)}\right)|\\
        &=|\mathcal{T}^{\pi}Q_1(s,a)-\mathcal{T}^{\pi}Q_2(s,a)|\\
        &\leq \gamma\|Q_1-Q_2\|_{\infty}
    \end{align*}

    For any $(s,a)$ s.t. $\beta(a|s)=0$, by Definition~\ref{definition-1:support_constrained_policy}, $\pi(a|s)=0$, meaning that we can consider only the support of the underlying behavior policy $\beta$.

    Therefore, on the support of $\beta$, $\|\mathcal{T}_{\text{\tiny{FAC}}}^{\pi}Q_1(s,a) - \mathcal{T}_{\text{\tiny{FAC}}}^{\pi}Q_2(s,a)\|_{\infty}\leq\gamma|Q_1-Q_2\|_{\infty}$, where $\gamma\in[0,1)$.
    
\end{proof}

\begin{restatable}{theorem}{fixedpoint}
    \label{theorem-1:fixed_point}
    For any support-constrained policy $\pi$,  the fixed point $\bar{Q}^{\pi}$ of $\mathcal{T}_{\text{\tiny{FAC}}}^{\pi}$ is
    \begin{align*}
        \bar{Q}^{\pi}(s,a)=Q^{\pi}(s,a)-\frac{\alpha\;d^{\pi}(s,a)}{2\;(1-\gamma)}\left(\frac{w^{\hat{\beta}}(s,a)\pi(a|s)}{\beta(a|s)}\right)  ~~~~\mbox{on the support of $\beta$,}
    \end{align*}
    where $Q^{\pi}(s,a)$ is the action value function under the policy $\pi$ defined in Section~\ref{section-preliminaries:offline_RL}, and $d^{\pi}(s,a)$ is the normalized discounted state-action distribution (occupancy measure) induced by $\pi$. We further have the following:
    \begin{itemize}
    \item[(i)]  The fixed point $\bar{Q}^{\pi}$ provides unbiased $Q$-values for high confidence actions with $\hat{\beta}(a|s)\geq\epsilon$, i.e.,  $w^{\hat{\beta}}(s,a)=0$.
    \item[(ii)] 
    The fixed point $\bar{Q}^{\pi}$ provides under-estimated (conservative) $Q$-values for low confidence actions with $\hat{\beta}(a|s)<\epsilon$, i.e., $w^{\hat{\beta}}(s,a)>0$.
    \end{itemize}
\end{restatable}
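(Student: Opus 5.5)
The plan is to read $\mathcal{T}_{\text{\tiny{FAC}}}^{\pi}$ as an ordinary policy-evaluation operator with a modified reward, and then solve for its fixed point with the resolvent $(I-\gamma P^{\pi})^{-1}$. Fix a support-constrained $\pi\in\Pi_\beta$ and restrict everything to the support of $\beta$.

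\textbf{Step 1: rewrite the operator.} By Proposition~\ref{proposition-1:FAC_operator}, every $(s,a)$ with $\beta(a|s)>0$ satisfies
\[
\mathcal{T}_{\text{\tiny{FAC}}}^{\pi}Q=\tilde r+\gamma P^{\pi}Q,\qquad \tilde r(s,a)=r(s,a)-b(s,a),\qquad b(s,a)=\frac{\alpha}{2}\,\frac{w^{\hat{\beta}}(s,a)\pi(a|s)}{\beta(a|s)}\ge 0.
\]
Here $P^{\pi}Q(s,a)=\mathbb{E}_{s'\sim P(\cdot|s,a),\,a'\sim\pi(\cdot|s')}[Q(s',a')]$. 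Since $\pi\in\Pi_\beta$, every $a'$ drawn from $\pi(\cdot|s')$ lies in the support of $\beta$. So $P^{\pi}$ maps functions on the support to functions on the support, and $b$ is finite there.

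\textbf{Step 2: existence and uniqueness.} Proposition~\ref{proposition-2:gamma_contraction_operator} shows $\mathcal{T}_{\text{\tiny{FAC}}}^{\pi}$ is a $\gamma$-contraction in $\mathcal{L}_\infty$ on the support. By Banach's fixed point theorem it has a unique fixed point $\bar Q^{\pi}$.

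\textbf{Step 3: solve the fixed-point equation.} Solving $\bar Q^{\pi}=\tilde r+\gamma P^{\pi}\bar Q^{\pi}$ gives
\[
\bar Q^{\pi}=(I-\gamma P^{\pi})^{-1}(r-b)=Q^{\pi}-(I-\gamma P^{\pi})^{-1}b.
\]
This uses $Q^{\pi}=(I-\gamma P^{\pi})^{-1}r$, the fixed point of $\mathcal{T}^{\pi}$. Expanding the Neumann series $\sum_t\gamma^t(P^{\pi})^t$ and normalizing by $(1-\gamma)$ identifies the $(s,a)$-row of $(I-\gamma P^{\pi})^{-1}$ with $\frac{1}{1-\gamma}d^{\pi}_{s,a}(\cdot)$. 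Here $d^{\pi}_{s,a}$ is the normalized discounted occupancy measure of $\pi$ started at $(s,a)$. Hence the correction is
\[
\frac{\alpha}{2(1-\gamma)}\,\mathbb{E}_{(\tilde s,\tilde a)\sim d^{\pi}_{s,a}}\!\left[\frac{w^{\hat\beta}(\tilde s,\tilde a)\pi(\tilde a|\tilde s)}{\beta(\tilde a|\tilde s)}\right].
\]
The displayed formula in the theorem should come out as the compact form of this expression, with $d^{\pi}$ weighting the penalty ratio $w^{\hat\beta}\pi/\beta$.

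\textbf{Step 4: derive (i) and (ii).} Because $w^{\hat\beta}\ge0$, $\pi\ge0$ and $\beta>0$ on the support, the correction is always nonnegative, so $\bar Q^{\pi}\le Q^{\pi}$.
\begin{itemize}
\item[(ii)] Suppose $\hat\beta(a|s)<\epsilon$, so that $w^{\hat\beta}(s,a)>0$. The occupancy $d^{\pi}_{s,a}$ puts mass at least $1-\gamma$ on $(s,a)$ itself (the $t=0$ term), so the correction is at least $\frac{\alpha}{2}\,w^{\hat\beta}(s,a)\pi(a|s)/\beta(a|s)$. This is strictly positive when $\pi(a|s)>0$, giving strict underestimation.
\item[(i)] Suppose $w^{\hat\beta}(s,a)=0$. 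Then the immediate ($t=0$) penalty vanishes. In the pointwise form stated in the theorem, the correction multiplies $w^{\hat\beta}(s,a)$ and is therefore exactly zero, so $\bar Q^{\pi}(s,a)=Q^{\pi}(s,a)$.
\end{itemize}

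\textbf{Main obstacle.} The hardest step is matching the exact pointwise expression in the theorem, in which $d^{\pi}(s,a)$ multiplies the penalty at the same $(s,a)$, against the propagated form $(I-\gamma P^{\pi})^{-1}b$. In general the propagated form also picks up penalties at successor pairs. To close this gap, I would first try to argue that the pointwise formula is the intended compact notation for the occupancy-weighted penalty. Failing that, I would add the natural assumption that $\pi$ keeps $w^{\hat\beta}=0$ along its own trajectories from high-confidence pairs; this holds, for instance, when the learned actor stays in the confident region. Under that assumption, $(I-\gamma P^{\pi})^{-1}b$ vanishes at high-confidence pairs, which gives (i) exactly, and the nonnegativity argument above still gives (ii).
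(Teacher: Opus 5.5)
Your Steps 1--3 follow the paper's route exactly: uniqueness from Proposition~\ref{proposition-2:gamma_contraction_operator}, the vector form with $R$ and $P^{\pi}$, and the identification of the rows of $(I-\gamma P^{\pi})^{-1}$ with $d^{\pi}/(1-\gamma)$. What that route actually yields is $\bar Q^{\pi}=Q^{\pi}-(I-\gamma P^{\pi})^{-1}b$. The obstacle you flag is real and cannot be overcome, because the pointwise display and item (i) are false for general $\pi\in\Pi_{\beta}$.

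For a counterexample, take $(s_1,a_1)$ with $\beta(a_1|s_1)>0$ and $\hat\beta(a_1|s_1)\ge\epsilon$. Let it move deterministically to an absorbing state $s_2$ where $\pi$ plays $a_2$ with $\beta(a_2|s_2)>0$ but $\hat\beta(a_2|s_2)<\epsilon$, so $b(s_2,a_2)>0$. Then $\bar Q^{\pi}(s_2,a_2)=Q^{\pi}(s_2,a_2)-b(s_2,a_2)/(1-\gamma)$. Hence $\bar Q^{\pi}(s_1,a_1)=Q^{\pi}(s_1,a_1)-\gamma\, b(s_2,a_2)/(1-\gamma)<Q^{\pi}(s_1,a_1)$, even though $w^{\hat\beta}(s_1,a_1)=0$.

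The paper's proof reaches the stated form only through two invalid steps. In Case (i) it inverts $I-\gamma P^{\pi}$ from the single row $\bar Q^{\pi}(s,a)=[R+\gamma P^{\pi}\bar Q^{\pi}](s,a)$, which would require the unpenalized equation on every row reachable from $(s,a)$. In Case (ii) it replaces the $(s,a)$ entry of the matrix--vector product $(I-\gamma P^{\pi})^{-1}(w^{\hat\beta}\pi/\beta)$ by $d^{\pi}(s,a)/(1-\gamma)$ times the local penalty.

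So, as written, your proposal does not establish the display or (i). The claim at the end of Step 3, that the display is the compact form of the propagated expression, is false. Step 4(i) then uses that pointwise form and is circular. Your first fallback does not help either, since under the propagated reading (i) is precisely what fails.

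Your second fallback is the right repair. Suppose every pair reachable under $P^{\pi}$ from a high-confidence $(s,a)$ has $w^{\hat\beta}\pi=0$, i.e.\ $\pi$ never selects a low-confidence action along those trajectories. Then the $(s,a)$ row of $(I-\gamma P^{\pi})^{-1}b$ vanishes and (i) holds. Even under that assumption, the correct formula at low-confidence pairs is the propagated one,
\[
\bar Q^{\pi}(s,a)=Q^{\pi}(s,a)-\frac{\alpha}{2(1-\gamma)}\,\mathbb{E}_{(\tilde s,\tilde a)\sim d^{\pi}_{s,a}}\Bigl[\frac{w^{\hat\beta}(\tilde s,\tilde a)\pi(\tilde a|\tilde s)}{\beta(\tilde a|\tilde s)}\Bigr],
\]
not the pointwise display, so the theorem should be restated in that form.

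Your argument for (ii) needs no extra hypothesis and is sounder than the paper's. Nonnegativity of the resolvent together with $b\ge0$ gives $\bar Q^{\pi}\le Q^{\pi}$ on the whole support. The $t=0$ mass $d^{\pi}_{s,a}(s,a)\ge 1-\gamma$ then gives strict underestimation when $\pi(a|s)>0$.
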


\begin{proof}
    By Proposition~\ref{proposition-2:gamma_contraction_operator}, $\mathcal{T}_{\text{\tiny{FAC}}}^{\pi}$ converges to a unique fixed point. That is, for any support-constrained policy $\pi$, the fixed point satisfies $\bar{Q}^{\pi}=\mathcal{T}_{\text{\tiny{FAC}}}^{\pi}\bar{Q}^{\pi}$. Furthermore, as in Proposition~\ref{proposition-2:gamma_contraction_operator}, we can consider only the support of $\beta$.

    To represent $\mathcal{T}_{\text{\tiny{FAC}}}^{\pi}$ in vector form, as in prior works~\citep{offRL-CP-1:CQL, offRL-CP-2:SVR}, we define $R$ as the vector of reward function, and $P^{\pi}(s,a,s',a'):=P(s'|s,a)\pi(a'|s')$ as the transition matrix on the state-action pair induced by $\pi$.  Then, the fixed point equation is written as follows:

    Case (i): For any $(s,a)$ s.t. $\hat{\beta}(a|s)\geq\epsilon$, i.e., $w^{\hat{\beta}}(s,a)=0$,
    \begin{align*}
        &\bar{Q}^{\pi}(s,a)=\mathcal{T}_{\text{\tiny{FAC}}}^{\pi}\bar{Q}^{\pi}(s,a)\overset{(a)}{=}\mathcal{T}^{\pi}\bar{Q}^{\pi}(s,a)=\left[R+\gamma P^{\pi}\bar{Q}^{\pi}\right](s,a),\\
        \Rightarrow&\; \bar{Q}^{\pi}(s,a)=\left[(I-\gamma P^{\pi})^{-1}R\right](s,a)=Q^{\pi}(s,a)
    \end{align*}
    where (a) holds due to Proposition~\ref{proposition-1:FAC_operator} in the main paper. Thus, the fixed point provides unbiased $Q$-values for $(s,a)$ s.t. $\hat{\beta}(a|s)\geq\epsilon$.

    Case (ii): For any $(s,a)$ s.t. $\hat{\beta}(a|s)<\epsilon$, i.e., $w^{\hat{\beta}}(s,a)\neq0$,
    \begin{align*}
        &\bar{Q}^{\pi}(s,a)=\mathcal{T}_{\text{\tiny{FAC}}}^{\pi}\bar{Q}^{\pi}(s,a)\overset{(a)}{=}\mathcal{T}^{\pi}\bar{Q}^{\pi}(s,a)-\frac{\alpha}{2}\left(\frac{w^{\hat{\beta}}(s,a)\pi(a|s)}{\beta(a|s)}\right),\\
        &~~~~~~~~~~~~~~~=\left[R+\gamma P^{\pi}\bar{Q}^{\pi}\right](s,a)-\frac{\alpha}{2}\left(\frac{w^{\hat{\beta}}(s,a)\pi(a|s)}{\beta(a|s)}\right)\\
        \Rightarrow&\; \bar{Q}^{\pi}(s,a)=(I-\gamma P^{\pi})^{-1}\left[R-\frac{\alpha}{2}\left(\frac{w^{\hat{\beta}}\pi}{\beta}\right)\right](s,a)\\
        \Rightarrow&\; \bar{Q}^{\pi}(s,a)=Q^{\pi}(s,a)-\frac{\alpha}{2}\left[(I-\gamma P^{\pi})^{-1}\left(\frac{w^{\hat{\beta}}\pi}{\beta}\right)\right](s,a)
    \end{align*}
    where (a) holds due to Proposition~\ref{proposition-1:FAC_operator} in the main paper.  The matrix $(I-\gamma P^{\pi})^{-1}$ is the unnormalized discounted state-action distribution induced by the policy $\pi$, i.e., $d^{\pi}(s,a)/(1-\gamma)$ (refer to Lemma 1.6 of \citet{OnlineRL-7:occupancymeasure}). So, we have the following: 
    \begin{align*}
        \bar{Q}^{\pi}(s,a)=Q^{\pi}(s,a)-\frac{\alpha\;d^{\pi}(s,a)}{2\;(1-\gamma)}\left(\frac{w^{\hat{\beta}}(s,a)\pi(a|s)}{\beta(a|s)}\right) \leq Q^{\pi}(s,a)
    \end{align*}
    Thus, the fixed point provides under-estimated conservative $Q$-values for $(s,a)$ s.t. $\hat{\beta}(a|s)<\epsilon$.
\end{proof}

Now we want to show that a policy $\pi$ induced by FAC is a support-constrained policy to apply Theorem    \ref{theorem-1:fixed_point}. For this we make the following assumption. 

\begin{restatable}{assumption}{FAC_assumption}
    \label{assumption-1:good_bc}
    The well-trained flow behavior proxy in FAC exhibits a support narrower than the support of the underlying true behavior policy, i.e., $\text{supp}(\hat{\beta})\subseteq\text{supp}(\beta)$
\end{restatable}

In practice, because the dataset has finite cardinality, i.e., $|\mathcal{D}|<\infty$, the support of the dataset is narrower than that of the underlying behavior policy $\beta$, i.e., $\text{supp}(\mathcal{D})\subseteq\text{supp}(\beta)$. Since a flow behavior proxy trained on such a dataset inevitably fits to this narrow support, Assumption  \ref{assumption-1:good_bc} is valid.

\begin{restatable}{theorem}{FACinducedpolicy}
    \label{theorem-2:FAC_induced_support_contrained_policy}
    Under Assumption~\ref{assumption-1:good_bc}, the policy $\pi$ induced by Eqs.~(\ref{eq-method-5:our_Qloss}) and (\ref{eq-method-4:FQL_objective}) in FAC is a support-constrained policy. Therefore, the learned $Q$-function under the policy $\pi$, $\bar{Q}^{\pi}$, has unbiased $Q$-values for all $(s,a)$ within high confidence region, i.e., $\hat{\beta}(a|s)\geq\epsilon$, and under-estimated $Q$-values for all $(s,a)$ within low confidence region, i.e., $\hat{\beta}(a|s)<\epsilon$. 
\end{restatable}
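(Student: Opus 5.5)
The plan is to show that any action the FAC actor places mass on lies in $\text{supp}(\beta)$. Once $\pi\in\Pi_\beta$ is established, the second part of the theorem follows directly from Theorem~\ref{theorem-1:fixed_point}.

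\textbf{Step 1: the critic strongly penalizes actions outside the support.} Take any $(s,a)$ with $\beta(a|s)=0$. By Assumption~\ref{assumption-1:good_bc}, $\hat{\beta}(a|s)=0<\epsilon$. Hence
\[
w^{\hat{\beta}}(s,a)=\max\bigl(0,\,1-\hat{\beta}(a|s)/\epsilon\bigr)=1>0 .
\]
This removes the exceptional case of Proposition~\ref{proposition-1:FAC_operator}, where $\beta(a|s)=0$ and $w^{\hat{\beta}}(s,a)=0$ hold simultaneously. The third branch of (\ref{eq-method-6:our_Qoperator}) then applies: whenever $\pi(a|s)>0$, we get $\mathcal{T}^{\pi}_{\text{FAC}}Q(s,a)=-\infty$. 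So the critic obtained from (\ref{eq-method-5:our_Qloss}) assigns value $-\infty$ to every out-of-support action that $\pi$ might select.

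\textbf{Step 2: actor improvement.} The actor objective (\ref{eq-method-4:FQL_objective}) contains the term $\mathbb{E}_{a\sim\pi}[Q(s,a)]$ minus a finite regularizer. The regularizer is finite because the actions generated from the flow are bounded. Suppose $\pi(\cdot|s)$ placed positive mass on a set $A_0$ where $\beta(\cdot|s)=0$. Then the objective would be $-\infty$. By contrast, the alternative $\pi=\hat{\beta}$, or any policy supported inside $\text{supp}(\hat{\beta})\subseteq\text{supp}(\beta)$, gives a finite objective: there $Q$ takes finite values, either $Q^\pi$ or the penalized fixed point. A maximizer therefore cannot put mass on $A_0$, and so $\pi(a|s)=0$ whenever $\beta(a|s)=0$, that is, $\pi\in\Pi_\beta$.

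This argument should be run along the alternating policy evaluation and improvement iterations. Argue by induction that each improved policy remains in $\Pi_\beta$, provided the initialization is in $\Pi_\beta$ or the first improvement step already removes out-of-support mass. Then, on $\text{supp}(\beta)$, the evaluation step is well defined via Proposition~\ref{proposition-2:gamma_contraction_operator}.

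\textbf{Step 3: conclusion.} With $\pi\in\Pi_\beta$, Theorem~\ref{theorem-1:fixed_point} gives the fixed point $\bar{Q}^\pi$. It equals $Q^\pi$ where $w^{\hat{\beta}}=0$, i.e. $\hat{\beta}\ge\epsilon$, and it lies strictly below $Q^\pi$ where $w^{\hat{\beta}}>0$ and $\pi(a|s)>0$. Where $\pi(a|s)=0$ it equals $Q^\pi$, so it is $\le Q^\pi$ there as well.

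\textbf{Main obstacle.} The hardest part is Step 2, making the "$-\infty$" reasoning rigorous. The quantity $\pi/\beta$ is undefined where $\beta=0$, and the tabular solution of Proposition~\ref{proposition-1:FAC_operator} was derived under $a\sim\beta$. The first approach I would try is a limiting argument: replace $\beta$ by $\beta_\delta=\beta+\delta$ on out-of-support actions, so the penalty there is $-\tfrac{\alpha}{2}\pi/\delta\to-\infty$. Then show that for small $\delta$ the actor's optimum puts vanishing mass outside $\text{supp}(\beta)$, and pass to the limit.
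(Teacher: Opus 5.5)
Your core argument is the paper's: by Proposition~\ref{proposition-1:FAC_operator}, off-support actions with $\pi(a|s)>0$ receive $\bar{Q}^{\pi}=-\infty$. So any $\pi$ placing mass there drives the actor objective to $-\infty$, which forces $\pi\in\Pi_\beta$, and Theorem~\ref{theorem-1:fixed_point} then gives the conclusion. Two of your deviations are improvements.
\begin{itemize}
\item You use Assumption~\ref{assumption-1:good_bc} to get $w^{\hat{\beta}}=1$ wherever $\beta=0$. This genuinely closes the exceptional case of Proposition~\ref{proposition-1:FAC_operator}. The paper only calls that case ``highly unlikely'' and uses the assumption elsewhere, to say the $W_2$ constraint gives no incentive to leave the support.
\item You compare directly against the finite competitor $\hat{\beta}$ on the penalized objective. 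The paper instead passes through $\mathbb{E}\|a_\theta-a_\psi\|_2^2\ge W_2(\pi,\hat{\beta})^2$ and a constrained reformulation.
\end{itemize}
You also do not need boundedness of flow actions. The regularizer is nonnegative and subtracted, and for the competitor the choice $a_\theta=a_\psi$ makes it zero.

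The induction over alternating evaluation and improvement steps, however, fails, and you should drop it. Suppose $\pi_k\in\Pi_\beta$. In the tabular critic objective, off-support actions then carry zero weight in both the TD term (sampled from $\beta$) and the penalty (sampled from $\pi_k$). They also never enter the TD targets, which only query actions where $\pi_k>0$. So $Q_k$ is completely unconstrained there: the penalty in Proposition~\ref{proposition-1:FAC_operator} is $0/0$, not $-\infty$. Nothing in $\max_\pi \mathbb{E}_{a\sim\pi}[Q_k]-\lambda(\cdot)$ then prevents $\pi_{k+1}$ from moving mass off-support. A $W_2$-type penalty does not enforce support containment either, since moving a small mass a short distance outside the support costs almost nothing.

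The $-\infty$ mechanism only exists in the self-consistent sense, where the critic in the actor objective is $\bar{Q}^{\pi}$ of the very policy being optimized. That is how the paper reads ``the policy induced by'' the two equations, and it is how your first paragraph of Step~2 and your $\beta_\delta$ limiting argument are already phrased. Keep the proof in that stationary-pair form rather than as an invariant along iterations.
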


\begin{proof}
    The objective~(\ref{eq-method-4:FQL_objective}) of policy improvement in FAC has the following relation~\citep{offRL-Flow-1:FQL}:
    \begin{align}
        &\mathbb{E}_{s\sim\mathcal{D},a\sim\pi}\left[\bar{Q}^{\pi}(s,a)\right]-\lambda\,\mathbb{E}_{s\sim\mathcal{D},z\sim\mathcal{N}(0,I)}\left[\|a_{\theta}(s,z)-a_{\psi}(s,z)\|_2^2\right]\nonumber\\
        &\leq\mathbb{E}_{s\sim\mathcal{D},a\sim\pi}\left[\bar{Q}^{\pi}(s,a)\right]-\lambda\,\mathbb{E}_{s\sim\mathcal{D}}\left[W_2(\pi,\hat{\beta})^2\right],
        \label{appendix-eq:policy_improvement}
    \end{align}
    where $\bar{Q}^{\pi}$ is the learned $Q$-function under the policy $\pi$, and $W_2(\pi,\hat{\beta})^2$ is the squared 2-Wasserstein distance between $\pi$ and $\hat{\beta}$. 

    For ease of exposition, we convert the unconstrained optimization problem~(\ref{appendix-eq:policy_improvement}) to the following constrained one, for some $\delta\geq0$,
    \begin{align}
        \label{appendix-eq:constrained_policy_improvement}
        &\max_{\pi} \mathbb{E}_{s\sim\mathcal{D}, a\sim\pi}\left[\bar{Q}^{\pi}(s,a)\right]\\
        &~~~\text{s.t. } W_2(\pi,\hat{\beta})^2<\delta.\nonumber
    \end{align}
    This constrained optimization problem for policy improvement leads to the following observations:
    \begin{enumerate}
        \item The constraint of the problem~(\ref{appendix-eq:constrained_policy_improvement}) offers no incentive for the policy $\pi$ to allocate probability density on actions outside the support of the behavior policy $\beta$, as ensured by the Assumption~\ref{assumption-1:good_bc}.

        \item The objective of the problem~(\ref{appendix-eq:constrained_policy_improvement}), which yields a greedy policy whose $\bar{Q}^{\pi}$ value becomes $-\infty$ outside the support of the underlying behavior policy, according to Proposition~\ref{proposition-1:FAC_operator}, derives the policy $\pi$ to assign zero probability density to actions outside the support of the behavior policy $\beta$. If $\pi$ assigns a positive density to those actions, i.e., $\pi(a|s)>0$, then the objective becomes
        \begin{align*}
            \mathbb{E}_{s\sim\mathcal{D},\;a\sim\pi}\left[\bar{Q}^{\pi}(s,a)\right]=\int_{s\in\mathcal{D},a\in\mathcal{A}}\pi(a|s)\bar{Q}^{\pi}(s,a)dads=-\infty.
        \end{align*}
        Thus, the greedy policy $\pi$ have to assign zero probability density to actions outside the support of the underlying behavior policy $\beta$.
    \end{enumerate}

    Based on these observations, we conclude that  the policy $\pi$ learned by FAC is a support-constrained policy, i.e., $\pi\in\Pi_{\beta}$.

    Since the policy $\pi$ of FAC is a support-constrained policy, by Theorem~\ref{theorem-1:fixed_point}, the learned $\bar{Q}^{\pi}$ has a unique fixed point and provides unbiased $Q$-values for all $(s,a)$ within high confidence region, i.e., $\hat{\beta}(a|s)\geq\epsilon$, and under-estimated $Q$-values for all $(s,a)$ within low confidence region, i.e., $\hat{\beta}(a|s)<\epsilon$. 
\end{proof}

\newpage
\section{Additional Results}
\label{appendix-ogbench-additional-results}

\subsection{Offline RL Results}
Table~\ref{appendix-expriments:ogbench_full_results} presents evaluation results on the 55 singletask variants in the OGBench. For each task category, we select the best hyperparameters on its default task and evaluate the remaining tasks with the same ones. For pixel-based tasks, we reuse the hyperparameters selected for their state-based tasks; for example, \texttt{visual-cube-single-play-singletask-task1-v0} is evaluated using the hyperparameters selected for \texttt{cube-single-play-singletask-v0}. The detailed experimental settings are provided in Appendix~\ref{appendix-experiment-3:ours}.
\begin{table}[h]
\centering
\caption{Full offline RL evaluation results on the OGBench. ($\ast$) indicates the default task in each task category. We report the final performance averaged over 8 seeds (4~seeds for pixel-based tasks)}, with $\pm$ indicating the standard deviation.
\label{appendix-expriments:ogbench_full_results}
\renewcommand{\arraystretch}{1.1}

\resizebox{\linewidth}{!}{
\Large
\begin{tabular}{lccccccccccc}
    \toprule
    & \multicolumn{3}{c}{Gaussian Policies} & \multicolumn{3}{c}{Diffusion Policies} & \multicolumn{5}{c}{Flow Policies} \\
    \cmidrule(lr){2-4} \cmidrule(lr){5-7} \cmidrule(lr){8-12}
    \multicolumn{1}{c}{\textbf{Task}} & \textbf{BC} & \textbf{IQL} & \textbf{ReBRAC} & \textbf{IDQL} & \textbf{SRPO} & \textbf{CAC} & \textbf{FAWAC} & \textbf{FBRAC} & \textbf{IFQL} & \textbf{FQL} & \textbf{FAC (Ours)} \\
    \midrule
    antmaze-large-navigate-singletask-task1-v0 ($\ast$) & 0 & 48 & 91 & 0 & 0 & 42 & 1 & 70 & 24 & 80 & \textbf{94.0}\normalsize{$\pm$3.0} \\
    antmaze-large-navigate-singletask-task2-v0 & 6 & 42 & \textbf{88} & 14 & 4 & 1 & 0 & 35 & 8 & 57 & 86.0\normalsize{$\pm$5.7} \\
    antmaze-large-navigate-singletask-task3-v0 & 29 & 72 & 51 & 26 & 3 & 49 & 12 & 83 & 52 & 93 & \textbf{97.5}\normalsize{$\pm$2.1} \\
    antmaze-large-navigate-singletask-task4-v0 & 8 & 51 & 84 & 62 & 45 & 17 & 10 & 37 & 18 & 80 & \textbf{89.5}\normalsize{$\pm$7.7} \\
    antmaze-large-navigate-singletask-task5-v0 & 10 & 54 & 90 & 2 & 1 & 55 & 9 & 76 & 38 & 83 & \textbf{96.0}\normalsize{$\pm$4.8} \\
    
    \hline
    
    antmaze-giant-navigate-singletask-task1-v0 ($\ast$) & 0 & 0 & \textbf{27} & 0 & 0 & 0 & 0 & 0 & 0 & 4 & 6.5\normalsize{$\pm$5.2} \\
    antmaze-giant-navigate-singletask-task2-v0 & 0 & 1 & 16 & 0 & 0 & 0 & 0 & 4 & 0 & 9 & \textbf{37.5}\normalsize{$\pm$15.0} \\
    antmaze-giant-navigate-singletask-task3-v0 & 0 & 0 & \textbf{34} & 0 & 0 & 0 & 0 & 0 & 0 & 0 & 0.5\normalsize{$\pm$1.4} \\
    antmaze-giant-navigate-singletask-task4-v0 & 0 & 0 & 5 & 0 & 0 & 0 & 0 & 9 & 0 & 14 & \textbf{20.0}\normalsize{$\pm$17.9} \\
    antmaze-giant-navigate-singletask-task5-v0 & 1 & 19 & 49 & 0 & 0 & 0 & 0 & 6 & 13 & 16 & \textbf{50.5}\normalsize{$\pm$29.4} \\
    
    \hline
    
    humanoidmaze-medium-navigate-singletask-task1-v0 ($\ast$) & 1 & 32 & 16 & 1 & 0 & 38 & 6 & 25 & 69 & 19 & \textbf{71.5}\normalsize{$\pm$14.7} \\
    humanoidmaze-medium-navigate-singletask-task2-v0 & 1 & 41 & 18 & 1 & 1 & 47 & 40 & 76 & 85 & \textbf{94} & 88.0\normalsize{$\pm$12.1} \\
    humanoidmaze-medium-navigate-singletask-task3-v0 & 6 & 25 & 36 & 0 & 2 & 83 & 19 & 27 & 49 & 74 & \textbf{95.5}\normalsize{$\pm$3.3} \\
    humanoidmaze-medium-navigate-singletask-task4-v0 & 0 & 0 & 15 & 1 & 1 & 5 & 1 & 1 & 1 & 3 & \textbf{25.0}\normalsize{$\pm$9.0} \\
    humanoidmaze-medium-navigate-singletask-task5-v0 & 2 & 66 & 24 & 1 & 3 & 91 & 31 & 63 & \textbf{98} & 97 & \textbf{98.0}\normalsize{$\pm$2.1} \\

    \hline
    
    humanoidmaze-large-navigate-singletask-task1-v0 ($\ast$) & 0 & 3 & 2 & 0 & 0 & 1 & 0 & 0 & 6 & 7 & \textbf{15.0}\normalsize{$\pm$19.8} \\
    humanoidmaze-large-navigate-singletask-task2-v0 & 0 & 0 & 0 & 0 & 0 & 0 & 0 & 0 & 0 & 0 & 0.0\normalsize{$\pm$0.0} \\
    humanoidmaze-large-navigate-singletask-task3-v0 & 1 & 7 & 8 & 3 & 1 & 2 & 1 & 10 & \textbf{48} & 11 & 20.5\normalsize{$\pm$18.4} \\
    humanoidmaze-large-navigate-singletask-task4-v0 & 1 & 1 & 1 & 0 & 0 & 0 & 0 & 0 & 1 & 2 & \textbf{4.5}\normalsize{$\pm$11.2} \\
    humanoidmaze-large-navigate-singletask-task5-v0 & 0 & 1 & \textbf{2} & 0 & 0 & 0 & 0 & 1 & 0 & 1 & 1.5\normalsize{$\pm$2.1} \\

    \hline

    antsoccer-arena-navigate-singletask-task1-v0 & 2 & 14 & 0 & 44 & 2 & 1 & 22 & 17 & 61 & 77 & \textbf{82.0}\normalsize{$\pm$4.8} \\
    antsoccer-arena-navigate-singletask-task2-v0 & 2 & 17 & 0 & 15 & 3 & 0 & 8 & 8 & 75 & 88 & \textbf{93.5}\normalsize{$\pm$4.2} \\
    antsoccer-arena-navigate-singletask-task3-v0 & 0 & 6 & 0 & 0 & 0 & 8 & 11 & 16 & 14 & 61 & \textbf{62.5}\normalsize{$\pm$11.5} \\
    antsoccer-arena-navigate-singletask-task4-v0 ($\ast$) & 1 & 3 & 0 & 0 & 0 & 0 & 12 & 24 & 16 & 39 & \textbf{53.0}\normalsize{$\pm$5.6} \\
    antsoccer-arena-navigate-singletask-task5-v0 & 0 & 2 & 0 & 0 & 0 & 0 & 9 & 15 & 0 & 36 & \textbf{47.5}\normalsize{$\pm$15.3} \\
    
    \hline

    cube-single-play-singletask-task1-v0 & 10 & 88 & 89 & 95 & 89 & 77 & 81 & 73 & 79 & 97 & \textbf{99.0}\normalsize{$\pm$1.9} \\
    cube-single-play-singletask-task2-v0 ($\ast$) & 3 & 85 & 92 & 96 & 82 & 80 & 81 & 83 & 73 & 97 & \textbf{100.0}\normalsize{$\pm$0.0} \\
    cube-single-play-singletask-task3-v0 & 9 & 91 & 93 & 99 & 96 & 98 & 87 & 82 & 88 & 98 & \textbf{100.0}\normalsize{$\pm$0.0} \\
    cube-single-play-singletask-task4-v0 & 2 & 73 & 92 & 93 & 70 & 91 & 79 & 79 & 79 & 94 & \textbf{98.5}\normalsize{$\pm$3.0} \\
    cube-single-play-singletask-task5-v0 & 3 & 78 & 87 & 90 & 61 & 80 & 78 & 76 & 77 & 93 & \textbf{96.5}\normalsize{$\pm$3.3} \\

    \hline

    cube-double-play-singletask-task1-v0 & 8 & 27 & 45 & 39 & 7 & 21 & 21 & 47 & 35 & \textbf{61} & 60.0\normalsize{$\pm$11.3} \\
    cube-double-play-singletask-task2-v0 ($\ast$) & 0 & 1 & 7 & 16 & 0 & 2 & 2 & 22 & 9 & 36 & \textbf{37.5}\normalsize{$\pm$10.0} \\
    cube-double-play-singletask-task3-v0 & 0 & 0 & 4 & 17 & 0 & 3 & 1 & 4 & 8 & 22 & \textbf{31.5}\normalsize{$\pm$10.8} \\
    cube-double-play-singletask-task4-v0 & 0 & 0 & 1 & 0 & 0 & 0 & 0 & 0 & 1 & \textbf{5} & 4.0\normalsize{$\pm$3.7} \\
    cube-double-play-singletask-task5-v0 & 0 & 4 & 4 & 1 & 0 & 3 & 2 & 2 & 17 & 19 & \textbf{32.5}\normalsize{$\pm$6.2} \\

    \hline

    scene-play-singletask-task1-v0 & 19 & 94 & 95 & \textbf{100} & 94 & \textbf{100} & 87 & 96 & 98 & \textbf{100} & \textbf{100.0}\normalsize{$\pm$0.0} \\
    scene-play-singletask-task2-v0 ($\ast$) & 1 & 12 & 50 & 33 & 2 & 50 & 18 & 46 & 0 & 76 & \textbf{100.0}\normalsize{$\pm$0.0} \\
    scene-play-singletask-task3-v0 & 1 & 32 & 55 & 94 & 4 & 49 & 38 & 78 & 54 & \textbf{98} & 97.0\normalsize{$\pm$2.8} \\
    scene-play-singletask-task4-v0 & 2 & 0 & 3 & 4 & 0 & 0 & 6 & 4 & 0 & 5 & \textbf{58.0}\normalsize{$\pm$38.4} \\
    scene-play-singletask-task5-v0 & 0 & 0 & 0 & 0 & 0 & 0 & 0 & 0 & 0 & 0 & \textbf{1.5}\normalsize{$\pm$4.2} \\

    \hline
    
    puzzle-3x3-play-singletask-task1-v0 & 5 & 33 & 97 & 52 & 89 & 97 & 25 & 63 & 94 & 90 & \textbf{100.0}\normalsize{$\pm$0.0} \\
    puzzle-3x3-play-singletask-task2-v0 & 1 & 4 & 1 & 0 & 0 & 0 & 4 & 2 & 1 & 16 & \textbf{100.0}\normalsize{$\pm$0.0} \\
    puzzle-3x3-play-singletask-task3-v0 & 1 & 3 & 3 & 0 & 0 & 0 & 1 & 1 & 0 & 10 & \textbf{100.0}\normalsize{$\pm$0.0} \\
    puzzle-3x3-play-singletask-task4-v0 ($\ast$) & 1 & 2 & 2 & 0 & 0 & 0 & 1 & 2 & 0 & 16 & \textbf{100.0}\normalsize{$\pm$0.0} \\
    puzzle-3x3-play-singletask-task5-v0 & 1 & 3 & 5 & 0 & 0 & 0 & 1 & 2 & 0 & 16 & \textbf{100.0}\normalsize{$\pm$0.0} \\

    \hline
    
    puzzle-4x4-play-singletask-task1-v0 & 1 & 12 & 26 & 48 & 24 & 44 & 1 & 32 & 49 & 34 & \textbf{52.0}\normalsize{$\pm$25.1} \\
    puzzle-4x4-play-singletask-task2-v0 & 0 & 7 & 12 & 14 & 0 & 0 & 0 & 5 & 4 & \textbf{16} & 7.5\normalsize{$\pm$7.5} \\
    puzzle-4x4-play-singletask-task3-v0 & 0 & 9 & 15 & 34 & 21 & 29 & 1 & 20 & 50 & 18 & \textbf{62.0}\normalsize{$\pm$13.4} \\
    puzzle-4x4-play-singletask-task4-v0 ($\ast$) & 0 & 5 & 10 & 26 & 7 & 1 & 0 & 5 & 21 & 11 & \textbf{35.0}\normalsize{$\pm$12.4} \\
    puzzle-4x4-play-singletask-task5-v0 & 0 & 4 & 7 & \textbf{24} & 1 & 0 & 0 & 4 & 2 & 7 & 5.0\normalsize{$\pm$5.1} \\

    \hline

    visual-cube-single-play-singletask-task1-v0 & - & 70.0 & \textbf{83.0} & - & - & - & - & 55.0 & 49.0 & 81.0 & 82.0\normalsize{$\pm$5.2} \\
    visual-cube-double-play-singletask-task1-v0 & - & 34.0 & 4.0 & - & - & - & - & 6.0 & 8.0 & 21.0 & \textbf{48.0}\normalsize{$\pm$7.3} \\
    visual-scene-play-singletask-task1-v0 & - & 97.0 & 98.0 & - & - & - & - & 46.0 & 86.0 & 98.0 & \textbf{99.0}\normalsize{$\pm$2.0} \\
    visual-puzzle-3x3-play-singletask-task1-v0 & - & 7.0 & 88.0 & - & - & - & - & 7.0 & \textbf{100.0} & 94.0 & 95.0\normalsize{$\pm$2.0} \\
    visual-puzzle-4x4-play-singletask-task1-v0 & - & 0.0 & 26.0 & - & - & - & - & 0.0 & 8.0 & 33.0 & \textbf{56.0}\normalsize{$\pm$11.3} \\
    
    \bottomrule
\end{tabular}}
\end{table}

\subsubsection{Comparison with Sequence Model based Offline RL Methods}
We compare FAC against offline RL methods using sequence model architectures: Transformer~\citep{etc:transformer} and state space model~\citep{etc:s4}. Unlike Multi Layer Perceptron (MLP) based policies, sequence model based policies can condition on extended state-action-reward histories. For this comparison, we report results in the original papers and compare against Transformer-based methods Decision Transformer (DT)~\citep{offRL-Transformer-1:DecisionTransformer}, DC~\citep{offRL-Transformer-2:DC}, QDT~\citep{offRL-Transformer-3:QDT}, Reinformer~\citep{offRL-Transformer-4:Reinformer}, and state space model based method Decision Mamba (DM)~\cite{offRL-SSM-1:DecisionMamba}.

On the D4RL MuJoCo tasks, FAC outperforms all sequence model based methods in terms of overall score, despite without explicit history conditioning. On the D4RL Antmaze tasks, FAC also outperforms Transformer-based methods. Notably, Reinformer exhibits strong performance on the simpler umaze tasks but degrades sharply in the more challenging medium tasks, whereas FAC maintains high performance across medium and large tasks.

\begin{table}[h]
\centering
\caption{Comparison with sequence model based baselines. For MuJoCo datasets, we use the following abbreviations: \textit{m} for \textit{medium}, \textit{mr} for \textit{medium-replay}, \textit{me} for \textit{medium-expert}.}
\label{appendix-table-sequencemodels}

{\small
\begin{tabular}{lcccccc}
    \toprule
    & \multicolumn{4}{c}{Transformer based} & \multicolumn{1}{c}{SSM based} & \multicolumn{1}{c}{MLP based} \\
    \cmidrule(lr){2-5} \cmidrule(lr){6-6} \cmidrule(lr){7-7}
    \multicolumn{1}{c}{\textbf{MuJoCo Tasks}} & \textbf{DT} & \textbf{DC} & \textbf{QDT} & \textbf{Reinformer} & \textbf{DM} & \textbf{FAC (Ours)} \\
    \midrule
    halfcheetah-m & 42.6 & 43.0 & 42.3 & 42.9 & 43.8 & 65.0\scriptsize{$\pm$1.5} \\
    hopper-m & 67.6 & 92.5 & 66.5 & 81.6 & 98.5 & 91.9\scriptsize{$\pm$3.9} \\ 
    walker2d-m & 74.0 & 79.2 & 67.1 & 80.5 & 80.3 & 85.2\scriptsize{$\pm$0.9} \\
    \hline
    halfcheetah-mr & 36.6 & 41.3 & 35.6 & 39.0 & 40.8 & 55.4\scriptsize{$\pm$2.7} \\
    hopper-mr & 82.7 & 94.2 & 52.1 & 83.3 & 89.1 & 99.1\scriptsize{$\pm$0.9} \\
    walker2d-mr & 66.6 & 76.6 & 58.2 & 72.9 & 79.3 & 83.0\scriptsize{$\pm$5.8} \\
    \hline
    halfcheetah-me & 86.8 & 93.0 & - & 92.0 & 93.5 & 101.9\scriptsize{$\pm$5.6} \\
    hopper-me & 107.6 & 110.4 & - & 107.8 & 111.9 & 104.2\scriptsize{$\pm$4.6} \\ 
    walker2d-me & 108.1 & 109.6 & - & 109.4 & 111.6 & 108.4\scriptsize{$\pm$0.6} \\
    \hline
    \addlinespace[2pt]
    \multicolumn{1}{c}{\textbf{Average}} & 74.7 & 82.2 & - & 78.8 & 83.2 & 88.2 \\
    \bottomrule
\end{tabular}}

\vspace{0.01cm}

{\small
\begin{tabular}{lcccccc}
    \toprule
    & \multicolumn{4}{c}{Transformer based} & \multicolumn{1}{c}{SSM based} & \multicolumn{1}{c}{MLP based} \\
    \cmidrule(lr){2-5} \cmidrule(lr){6-6} \cmidrule(lr){7-7}
    \multicolumn{1}{c}{\textbf{Antmaze Tasks}} & \textbf{DT} & \textbf{DC} & \textbf{QDT} & \textbf{Reinformer} & \textbf{DM} & \textbf{FAC (Ours)} \\
    \midrule
    umaze & - & 85.0 & - & 84.4 & 100.0 & 98.5\scriptsize{$\pm$3.0} \\
    umaze-diverse & - & 78.5 & - & 65.8 & 90.0 & 93.5\scriptsize{$\pm$6.0} \\
    \hline
    medium-play & - & - & - & 13.2 & - & 88.0\scriptsize{$\pm$9.6} \\
    medium-diverse & - & - & - & 10.6 & - & 85.0\scriptsize{$\pm$7.3} \\
    \hline
    large-play & - & - & - & - & - & 90.0\scriptsize{$\pm$4.3} \\
    large-diverse & - & - & - & - & - & 88.0\scriptsize{$\pm$6.0} \\
    \hline
    \addlinespace[2pt]
    \multicolumn{1}{c}{\textbf{Average}} & - & - & - & - & - & 90.5 \\
    \bottomrule
\end{tabular}}

\end{table}

\newpage
\subsection{Offline-to-Online RL Results}
Figure~\ref{fig-experiment-3:O2O_full} shows the offline-to-online results on the 5 default singletask variants in the OGBench and 6 tasks in the D4RL Antmaze and 4 tasks in the D4RL Adroit. In all result plots, the first $1M$ gradient steps denote the offline training phase (the gray shaded region), and the online finetuning phase starts after $1M$ steps. For each tasks, FAC reuses the same hyperparameters used in the offline RL evaluation, whereas FQL uses hyperparameters specified for offline-to-online evaluation in its original paper~\citep{offRL-Flow-1:FQL}. The detailed experimental settings for the offline-to-online evaluation are provided in Appendix~\ref{appendix-experiments-o2o}.
\begin{figure}[h]
    \vspace{-0.1cm}
    \centering
    \includegraphics[width=1.0\linewidth]{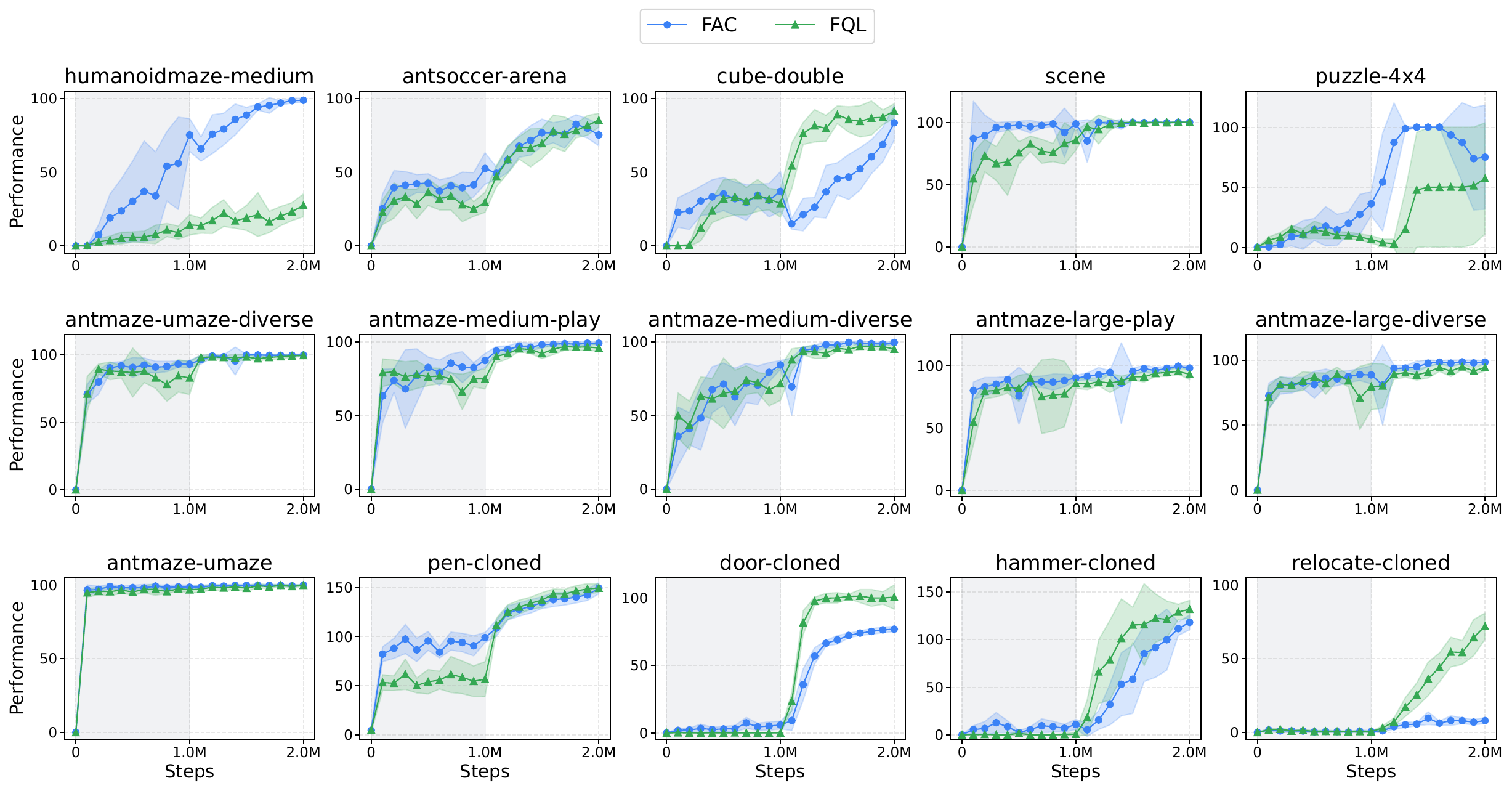}
    \caption{Offline-to-online results on 15 tasks, averaged over 8~seeds with standard deviation. The gray shaded region indicates the offline phase, and the online fine-tuning phase is left unshaded.}
    \label{fig-experiment-3:O2O_full}
    \vspace{-0.1cm}
\end{figure}

\newpage
\section{Experimental Details}
We implement the proposed algorithm in JAX \citep{etc:JAX} on top of an official implementation of \citet{offRL-Flow-1:FQL}, with reference to \citet{etc:torch_FlowMatching}.

\subsection{Benchmarks}
\label{appendix-experiment-1:benchmarks}
We evaluate our proposed method on 50~state-based and 5~pixel-based singletasks of OGBench~\citep{bench-1:OGBench} and 23~tasks of D4RL \citep{bench-2:D4RL}.

\textbf{OGBench.}
We evaluate FAC across the following tasks from 10 navigation and manipulation environments:
{\small
\begin{itemize}
    \item \texttt{antmaze-large-navigate-singletask-task\{1,2,3,4,5\}-v0}
    \item \texttt{antmaze-giant-navigate-singletask-task\{1,2,3,4,5\}-v0}
    \item \texttt{humanoidmaze-medium-navigate-singletask-task\{1,2,3,4,5\}-v0}
    \item \texttt{humanoidmaze-large-navigate-singletask-task\{1,2,3,4,5\}-v0}
    \item \texttt{antsoccer-arena-navigate-singletask-task\{1,2,3,4,5\}-v0}
    
    \item \texttt{cube-single-play-singletask-task\{1,2,3,4,5\}-v0}
    \item \texttt{cube-double-play-singletask-task\{1,2,3,4,5\}-v0}
    \item \texttt{scene-play-singletask-task\{1,2,3,4,5\}-v0}
    \item \texttt{puzzle-3x3-play-singletask-task\{1,2,3,4,5\}-v0}
    \item \texttt{puzzle-4x4-play-singletask-task\{1,2,3,4,5\}-v0}

    \item \texttt{visual-cube-single-play-singletask-task1-v0}
    \item \texttt{visual-cube-double-play-singletask-task1-v0}
    \item \texttt{visual-scene-play-singletask-task1-v0}
    \item \texttt{visual-puzzle-3x3-play-singletask-task1-v0}
    \item \texttt{visual-puzzle-4x4-play-singletask-task1-v0}
\end{itemize}
}
OGBench provides a suite of environments and datasets for offline goal-conditioned RL. We use datasets from 5 navigation environments (\texttt{antmaze-large}, \texttt{antmaze-giant}, \texttt{humanoidmaze-medium}, \texttt{humanoidmaze-large}, \texttt{antsoccer-arena}) and 5 manipulation environments (\texttt{cube-single}, \texttt{cube-double}, \texttt{scene}, \texttt{puzzle-3x3}, \texttt{puzzle-4x4}). Each dataset offers 5 singletask variants. For offline RL evaluation, we use the reward-based singletask variants whose rewards are relabeled to align with the singletask specification. In navigation tasks, a sparse reward function is used, the agent receives a reward of 0 on successfully reaching the goal position and -1 otherwise. In manipulation tasks, each singletask is composed of multiple subtasks, and the agent receives the negative of the number of unsuccessful subtasks as its reward.

The navigation environments include \texttt{antmaze} and \texttt{humanoidmaze}, where a quadrupedal agent and a humanoid one are required to reach goal positions in a given maze, and \texttt{antsoccer}, where a quadrupedal agent dribbles a ball to a goal position. The manipulation environments include \texttt{cube}, where a robot arm is required to pick and place colored cubes, and \texttt{scene}, where a robot arm executes a sequence of subtasks, and \texttt{puzzle}, where a robot arm solves a lights-out puzzle.

\textbf{D4RL.}
We evaluate FAC across the following tasks from the MuJoCo, Antmaze, Adroit domains:
{\small
\begin{itemize}
    \item \texttt{halfcheetah-medium-v2}
    \item \texttt{hopper-medium-v2}
    \item \texttt{walker2d-medium-v2}
    \item \texttt{halfcheetah-medium-replay-v2}
    \item \texttt{hopper-medium-replay-v2}
    \item \texttt{walker2d-medium-replay-v2}
    \item \texttt{halfcheetah-medium-expert-v2}
    \item \texttt{hopper-medium-expert-v2}
    \item \texttt{walker2d-medium-expert-v2}

    \item \texttt{antmaze-umaze-v2}
    \item \texttt{antmaze-umaze-diverse-v2}
    \item \texttt{antmaze-medium-play-v2}
    \item \texttt{antmaze-medium-diverse-v2}
    \item \texttt{antmaze-large-play-v2}
    \item \texttt{antmaze-large-diverse-v2}

    \item \texttt{pen-human-v1}
    \item \texttt{pen-cloned-v1}
    \item \texttt{door-human-v1}
    \item \texttt{door-cloned-v1}
    \item \texttt{hammer-human-v1}
    \item \texttt{hammer-cloned-v1}
    \item \texttt{relocate-human-v1}
    \item \texttt{relocate-cloned-v1}
\end{itemize}
}
D4RL provides standard offline RL datasets for locomotion (MuJoCo domain), navigation (Antmaze domain), and dexterous manipulation (Adroit domain).
In the MuJoCo domain, we evaluate on \texttt{halfcheetah}, \texttt{hopper}, \texttt{walker2d} with three datasets \texttt{medium}, \texttt{medium-replay}, \texttt{medium-expert}. \texttt{medium} dataset consists of transition rollouts from a partially trained behavior policy, and \texttt{medium-replay} dataset is the full replay buffer of the behavior policy, and \texttt{medium-expert} dataset mixes trajectories from the medium and expert-level behavior policies.
In the Antmaze domain, we use \{\texttt{umaze}, \texttt{medium}, \texttt{large}\}-sized mazes with two datasets \texttt{play} and \texttt{diverse}. The two datasets contain transitions to reach from start positions to goal positions.
In the Adroit domain, we use \texttt{pen}, \texttt{door}, \texttt{hammer}, \texttt{relocate} tasks with two datasets \texttt{human} and \texttt{cloned}. The \texttt{human} dataset consists of tele-operated demonstrations, and the \texttt{cloned} dataset contains transition rollouts from a behavior-cloned policy trained on the \texttt{human} dataset.

\subsection{Baselines}
\label{appendix-experiment-2:baselines}
We primarily present the official reported performance from each baseline paper for benchmark datasets. For datasets that are not reported in the baseline papers, we refer to performances from other papers reporting performance for those datasets. For datasets without reported performances from other papers, we reproduced the results using an official implementation of baselines, with tuning hyperparameters.

Specifically, all results for our method and the reproduced baselines are reported as the mean $\pm$ standard deviation over 8 random seeds.

\textbf{For OGBench,} we obtain the reported performances of BC, IQL, ReBRAC, IDQL, SRPO, CAC, FAWAC, FBRAC, IFQL, FQL from \citet{offRL-Flow-1:FQL}.

\textbf{For D4RL-MuJoCo domain,} we obtain the reported performance of TD3+BC, CQL from \citet{offRL-AR-reg-1:IQL}. Additionally, we reproduce FQL and tune a broader range of hyperparameters than the ones recommended in the original paper. The hyperparameters and training settings for FQL are summarized in Table~\ref{appendix-experiment-table:FQL_hyperparameters1}~\&~\ref{appendix-experiment-table:FQL_hyperparameters2}. In particular, we consider normalizing $Q$-values in the actor objective, following TD3+BC \citep{offRL-AR-1:TD3+BC}. Empirically, enabling or disabling this $Q$-normalization exhibits almost similar performance. To maintain consistency with default hyperparameter settings of FQL, we therefore report results obtained without the $Q$-normalization in the actor objective.

\textbf{For D4RL-Antmaze domain,} we obtain the reported performance of TD3+BC from \citet{offRL-AR-reg-1:IQL}, the reported performance of MCQ from \citet{offRL-CP-3:EPQ}, ones of SAC-RND from \citep{offRL-CP+AR-2:ReBRAC}, and the reported performance of CAC from the original paper and \citet{offRL-Flow-1:FQL}.

\textbf{For D4RL-Adroit domain,} we obtain the reported performance of TD3+BC, IQL, SAC-RND from \citet{offRL-CP+AR-2:ReBRAC} and the reported performance of IDQL, SRPO, CAC from \citet{offRL-Flow-1:FQL}.
\begin{table}[h]
\centering
\caption{Hyperparameters for FQL on the D4RL-MuJoCo domain.}
\label{appendix-experiment-table:FQL_hyperparameters1}
\small
\setlength{\tabcolsep}{5pt}
\begin{tabular}{ll}
    \toprule
    \textbf{Hyperparameters} & \textbf{Value} \\
    \midrule
    Learning rate & 0.0003 \\
    Optimizer & Adam \citep{etc:Adam} \\
    Gradient Steps & 1000000 \\
    Minibatch size & 256 \\
    MLP dimension & [512, 512, 512, 512] \\
    Nonlinearity & GELU \citep{etc:GELU} \\
    Target network smoothing coefficient & 0.005 \\
    Discount factor $\gamma$ & 0.99 \\
    Flow steps & 10 \\
    Flow time sampling distribution & $\text{Unif}([0,1])$ \\
    Clipped double Q-learning & True \\
    BC coefficient $\lambda$ & Table~\ref{appendix-experiment-table:FQL_hyperparameters2} \\
    Q-normalization in actor loss & False \\
    \bottomrule
\end{tabular}
\end{table}
\begin{table}[h]
\centering
\caption{BC Coefficient $\lambda$ for FQL on the D4RL-MuJoCo domain.}
\label{appendix-experiment-table:FQL_hyperparameters2}
\renewcommand{\arraystretch}{1.1}
\small
\setlength{\tabcolsep}{5pt}
\begin{tabular}{lc}
    \toprule
    \textbf{Tasks (Datasets)} & \textbf{BC coefficient $\lambda$} \\
    \midrule
    halfcheetah-medium-v2 & 3.0 \\
    hopper-medium-v2 & 100.0 \\
    walker2d-medium-v2 & 300.0 \\
    \hline
    halfcheetah-medium-replay-v2 & 30.0 \\
    hopper-medium-replay-v2 & 100.0 \\
    walker2d-medium-replay-v2 & 300.0 \\
    \hline
    halfcheetah-medium-expert-v2 & 30.0 \\
    hopper-medium-expert-v2 & 100.0 \\
    walker2d-medium-expert-v2 & 300.0 \\
    \bottomrule
\end{tabular}
\end{table}

\newpage
\subsection{Our Algorithm}
\label{appendix-experiment-3:ours}
\subsubsection{Implementation Details}
We describe the implementation details for flow actor-critic (FAC).

\textbf{Network architecture.} We use a [512, 512, 512, 512]-sized MLP for all neural networks. We apply layer normalization \citep{etc:LayerNormalization} to critic  networks. The critic network $Q_{\phi}$ takes a state-action pair $(s,a)$ as input and outputs a scalar estimate of the expected return under the one-step flow policy $\pi_{\theta}$. The behavior proxy policy $\hat{\beta}_{\psi}$ takes state $s$, Gaussian noise $z\in\mathbb{R}^{d_{\mathcal{A}}}$, and flow time $u\in[0,1]$ as inputs and outputs the velocity field $v_u$. The one-step flow policy $\pi_{\theta}$ takes state $s$ and Gaussian noise $z$ and then outputs a single action $a_{\theta}(s,z)$.

\textbf{Image processing.} For pixel-based tasks, following \citet{offRL-Flow-1:FQL}, we use a small variant of the IMPALA encoder~\cite{etc:IMPALA} and apply a random shift augmentation with probability of $0.5$ and use frame stacking with three image observations.

\textbf{Flow matching proxy.} We use the flow matching objective~(\ref{eq-preliminary-6:Flow_BC_objective})~\citep{cnf-2:FlowMatching, offRL-Flow-1:FQL}. We use the Euler method with 10 steps to sample actions $a_{\psi}(s,z)$ and evaluate behavior proxy density $\hat{\beta}_{\psi}(a|s)$. 

Specifically, the proxy density can be evaluated exactly via the Instantaneous Change of Variables \citep{cnf-1:NeuralODE}, which requires the divergence of the velocity field $\nabla_{a_u}\cdot v_u(a_u;s,u)$. This is the trace of a Jacobian with respect to $a_u$, and it scales as $\mathcal{O}((d_{\mathcal{A}})^2)$~\citep{cnf-3:FFJORD}, which may be prohibitive for high-dimensional action spaces. To reduce computational cost without introducing bias, we can approximate the divergence using Hutchinson’s unbiased estimator \citep{cnf-5:Hutchinson1, cnf-6:Hutchinson2}, lowering the complexity to $\mathcal{O}(d_{\mathcal{A}})$. In practice, we adopt this estimator for OGBench-Humanoidmaze and D4RL-Adroit domains, which have action space dimension $d_{\mathcal{A}}>8$, and using the estimator exhibits performance comparable to exact evaluation.

\textbf{Critic learning.} We use twin critic networks to improve stability \citep{OnlineRL-1:TD3}. For the empirical Bellman operator in the critic loss, we can aggregate them either by the arithmetic mean of the two $Q$ values or by their minimization. We present the aggregation choices in Table \ref{appendix-experiment-table:Ours_hyperparameters1}.

\textbf{One-step flow policy learning.} Following \citep{offRL-AR-1:TD3+BC}, we apply the $Q$-value normalization in the one-step flow actor loss to balance the $Q$-maximizing term and the distance regularization term, i.e., $\max_{\pi_{\theta}} \mathbb{E}_{a\sim\pi_{\theta}}\left[\frac{Q_{\phi}(s, a)}{|Q_{\phi}|}\right] + \lambda\,D(\pi_{\theta}, \hat{\beta}_{\psi})$, where $|Q_{\phi}|=\frac{1}{M}\sum_{m}|Q_{\phi}(s_m,a)|$ with mini-batch $\{s_m\}_{m=1}^{M}$ and actions $a\sim\pi_{\theta}(\cdot|s_m)$. In practice, this normalization reduces the sensitivity to the actor regularization coefficient $\lambda$ and enables a common tuning range across offline RL tasks.

\textbf{Training and Evaluation.} We train FAC for 1M gradient steps on all the state-based tasks of OGBench and D4RL, and for 500K gradient steps on the pixel-based tasks of OGBench. We evaluate it every 100K updates using 25 episodes. At evaluation, the actor samples exactly one action at each time step. One thing to note is that we report the last performance measured after 1M gradient steps.

\subsubsection{Hyperparameters}
All hyperparameters used for the main results in Table~\ref{expriments:ogbench_results}~\&~\ref{expriments:d4rl_results} are summarized in Table~\ref{appendix-experiment-table:Ours_hyperparameters1}~\&~\ref{appendix-experiment-table:Ours_hyperparameters2}. The critic penalization coefficient $\alpha$ and actor regularization coefficient $\lambda$ are important in our method. The entire set of the two hyperparameters $(\alpha,\;\lambda)$ used across all tasks is $\alpha\in\{0.05, 0.1, 0.5, 1.0, 5.0, 10.0\}$ and $\lambda\in\{0.0003, 0.001, 0.003, 0.1, 0.3, 1.0, 3.0, 10.0\}$. For task-specific hyperparameters and a narrow range for tuning hyperparameters, please refer to Table~\ref{appendix-experiment-table:Ours_hyperparameters2}. 

Another important hyperparameter is the type of dataset-driven threshold $\epsilon$ in the weight $w^{\hat{\beta}_{\psi}}(s,a)$. As described, for datasets where the state coverage is broad but the action diversity is limited, we use the dataset-wide constant threshold (OGBench: Antmaze, Antsoccer tasks / D4RL: Antmaze tasks). For all remaining tasks, we use the batch-adaptive threshold.

Other hyperparameter is the use of clipped double Q-learning. With twin critics, the empirical Bellman target can be computed using either the minimization or the mean of the two $Q$ values for state-action pairs. We use the minimization on the D4RL tasks, and the mean on OGBench tasks other than Antmaze tasks. This selection aligns with the standard configurations of the baseline methods.

\begin{table}[h]
\centering
\caption{Hyperparameters for our method}
\label{appendix-experiment-table:Ours_hyperparameters1}
\small
\setlength{\tabcolsep}{5pt}
\begin{tabular}{ll}
    \toprule
    \textbf{Hyperparameters} & \textbf{Value} \\
    \midrule
    Learning rate & 0.0003 \\
    Optimizer & Adam \citep{etc:Adam} \\
    Gradient Steps & 1M (default), 500K (\texttt{OGBench}:pixel-based tasks) \\
    Minibatch size & 256 \\
    Epochs for flow proxy model & 250 (default), \\
     & 125 (\texttt{D4RL:MuJoCo, \texttt{OGBench}:pixel-based tasks}) \\
    MLP dimension & [512, 512, 512, 512] \\
    Nonlinearity & GELU \citep{etc:GELU} \\
    Target network smoothing coefficient $\rho$ & 0.005 \\
    Discount factor $\gamma$ & 0.995 \\
    Step count of the Euler method & 10 \\
    Flow time sampling distribution & $\text{Unif}([0,1])$ \\
    Q-normalization in actor loss & True \\
    Clipped double Q-learning & Min (\texttt{D4RL}: default / \texttt{OGBench:Antmaze}) \\
    & Mean (\texttt{D4RL:} Null / \texttt{OGBench:} default) \\
    Estimator for evaluating $\hat{\beta}_{\psi}(a|s)$ & Exact (\texttt{D4RL}: default / \texttt{OGBench}: default) \\
     & Hutchinson's estimator with 8 probes \\
     & (\texttt{D4RL:Adroit} / \texttt{OGBench:Humanoidmaze}) \\
    Dataset-driven threshold $\epsilon$ & Batch-adaptive (\texttt{D4RL}: default / \texttt{OGBench}: default) \\
    & Dataset-wide constant \\
    & (\texttt{D4RL:Antmaze} / \texttt{OGBench:Antmaze,Antsoccer}) \\
    Critic penalization coefficient $\alpha$ & Table \ref{appendix-experiment-table:Ours_hyperparameters2} \\
    Actor regularization coefficient $\lambda$ & Table \ref{appendix-experiment-table:Ours_hyperparameters2} \\
    \bottomrule
\end{tabular}
\end{table}

\begin{table}[h]
\centering
\caption{Critic penalization coefficient $\alpha$ and actor regularization coefficient $\lambda$ of FAC}
\label{appendix-experiment-table:Ours_hyperparameters2}
\small

\begin{subtable}{\textwidth}
    \centering
    \caption{OGBench}
    \renewcommand{\arraystretch}{1.1}
    \begin{tabular}{lcc}
      \toprule
      \textbf{Tasks (Datasets)} & \textbf{$\alpha$} & \textbf{$\lambda$} \\
      \midrule
      antmaze-large-navigate-singletask-task\{1,2,3,4,5\}-v0 & 0.5 & 0.1 \\
      antmaze-giant-navigate-singletask-task\{1,2,3,4,5\}-v0 & 1.0 & 0.1 \\
      humanoidmaze-medium-navigate-singletask-task\{1,2,3,4,5\}-v0 & 0.5 & 0.3 \\
      humanoidmaze-large-navigate-singletask-task\{1,2,3,4,5\}-v0 & 0.5 & 0.1 \\
      antsoccer-arena-navigate-singletask-task\{1,2,3,4,5\}-v0 & 1.0 & 0.1 \\
      \hline
      \{cube, visual-cube\}-single-play-singletask-task\{1,2,3,4,5\}-v0 & 0.5 & 10.0 \\
      \{cube, visual-cube\}-double-play-singletask-task\{1,2,3,4,5\}-v0 & 1.0 & 1.0 \\
      \{scene, visual-scene\}-play-singletask-task\{1,2,3,4,5\}-v0 & 1.0 & 1.0 \\
      \{puzzle, visual-puzzle\}-3x3-play-singletask-task\{1,2,3,4,5\}-v0 & 1.0 & 1.0 \\
      \{puzzle, visual-puzzle\}-4x4-play-singletask-task\{1,2,3,4,5\}-v0 & 5.0 & 0.3 \\
      \bottomrule
    \end{tabular}
\end{subtable}

\hfill

\begin{subtable}{\textwidth}
    \centering
    \caption{D4RL: MuJoCo domain}
    \renewcommand{\arraystretch}{1.1}
    \begin{tabular}{lcc}
      \toprule
      \textbf{Tasks (Datasets)} & \textbf{$\alpha$} & \textbf{$\lambda$} \\
      \midrule
      halfcheetah-medium-v2 & 0.05 & 0.0003 \\
      hopper-medium-v2 & 5.0 & 0.1 \\
      walker2d-medium-v2 & 5.0 & 0.03 \\
      \hline
      halfcheetah-medium-replay-v2 & 0.05 & 0.0003 \\
      hopper-medium-replay-v2 & 5.0 & 0.1 \\
      walker2d-medium-replay-v2 & 5.0 & 0.1 \\
      \hline
      halfcheetah-medium-expert-v2 & 0.5 & 0.003 \\
      hopper-medium-expert-v2 & 5.0 & 0.3 \\
      walker2d-medium-expert-v2 & 5.0 & 0.03 \\
      \bottomrule
    \end{tabular}
\end{subtable}

\hfill

\begin{subtable}{\textwidth}
    \centering
    \caption{D4RL: Antmaze domain}
    \renewcommand{\arraystretch}{1.1}
    \begin{tabular}{lcc}
      \toprule
      \textbf{Tasks (Datasets)} & \textbf{$\alpha$} & \textbf{$\lambda$} \\
      \midrule
      antmaze-umaze-v2 & 1.0 & 0.1 \\
      antmaze-umaze-diverse-v2 & 1.0 & 0.1 \\
      \hline
      antmaze-medium-play-v2 & 0.5 & 0.03 \\
      antmaze-medium-diverse-v2 & 5.0 & 0.1 \\
      \hline
      antmaze-large-play-v2 & 5.0 & 0.03 \\
      antmaze-large-diverse-v2 & 1.0 & 0.03 \\
      \bottomrule
    \end{tabular}
\end{subtable}

\hfill

\begin{subtable}{\textwidth}
    \centering
    \caption{D4RL: Adroit domain}
    \renewcommand{\arraystretch}{1.1}
    \begin{tabular}{lcc}
      \toprule
      \textbf{Tasks (Datasets)} & \textbf{$\alpha$} & \textbf{$\lambda$} \\
      \midrule
      pen-human-v1 & 10.0 & 0.3 \\
      pen-cloned-v1 & 1.0 & 0.1 \\
      \hline
      door-human-v1 & 1.0 & 3.0 \\
      door-cloned-v1 & 5.0 & 10.0 \\
      \hline
      hammer-human-v1 & 0.5 & 0.03 \\
      hammer-cloned-v1 & 0.5 & 1.0 \\
      \hline
      relocate-human-v1 & 5.0 & 10.0 \\
      relocate-cloned-v1 & 5.0 & 10.0 \\
      \bottomrule
    \end{tabular}
\end{subtable}

\end{table}

\subsubsection{Details for Online Fine-tuning}
\label{appendix-experiments-o2o}
FAC can extend to the online finetuning setting~\citep{O2ORL-1:RLPD,O2ORL-2:Cal-ql,O2ORL-3:OPT, O2ORL-4:FINO,offRL-AR-reg-1:IQL,offRL-AR-reg-3:AWAC}. For the offline-to-online evaluation, we initialize the one-step flow actor, critics, and flow behavior proxy using the networks trained for $1M$ gradient steps in the offline training, and then continue training for an additional $1M$ steps with online interactions. We evaluate it every 100K updates. The practical implementation of FAC for online finetuning is provided in Algorithm~\ref{alg-method:online_finetuning}.

No additional hyperparameters are introduced for the online finetuning phase, and we reuse the value of the hyperparameters used for offline training. The only modifications for the online finetuning are: (i) jointly training the flow behavior proxy together with the flow actor and critics, and (ii) replacing the dataset wide constant $\min_{(s,a)\sim\mathcal{D}}\hat{\beta_{\psi}}(a|s)$ with a batch wide constant $\min_{(s,a)\sim\mathcal{B}}\hat{\beta_{\psi}}(a|s)$, where $\mathcal{B}$ is a mini-batch for each gradient step. During online fine-tuning, the flow behavior proxy is trained so that it can represent online samples $(s,a,r,s')$ and provide density estimates. The dataset wide constant requires estimating densities for the entire samples in the replay buffer at each gradient steps, even though only a smaller size of mini-batch is used for each steps. This leads to unnecessary computation during online finetuning. To avoid this, while preserving the minimization-based thresholding design, we adopt the batch wide constant, which estimates the minimization of  densities only for the mini-batch samples. We note that the batch adaptive threshold is used without modification during the online finetuning.
\begin{algorithm}[h]
    \small
    \caption{Flow Actor-Critic for online finetuning}
    \label{alg-method:online_finetuning}
    \DontPrintSemicolon
    offline-trained one-step flow policy network $\pi_\theta$, 
    offline-trained critic networks $Q_{\phi_1}, Q_{\phi_2}$, offline-trained target networks $Q_{\bar{\phi}_1}, Q_{\bar{\phi}_2}$, 
    and offline-trained flow behavior proxy network $\hat{\beta}_{\psi}$\;
    
    \For{each iteration}{
      Sample $(s,a,r,s')\sim\mathcal{D}$, $z\sim\mathcal{N}(0,I)$, $u\sim\mathrm{Unif}([0,1])$ and compute $\epsilon$ by eq. (\ref{eq-preliminary-4:logp}).\;
      Update $\hat{\beta}_{\psi}$ with eq.~(\ref{eq-preliminary-6:Flow_BC_objective}).\;
      Update $Q_{\phi_i}$ with eq.~(\ref{eq-method-5:our_Qloss}),\quad $i\in\{1,2\}$.\;
      Update $\pi_{\theta}$ with eq.~(\ref{eq-method-4:FQL_objective}).\;
      $\bar\phi_i \gets \rho\phi_i + (1-\rho)\bar\phi_i,\quad i\in\{1,2\}$ for some $\rho$.\;
    }
\end{algorithm}

\newpage
\section{Computational Costs}
\begin{figure}[h]
    \centering
    \includegraphics[width=1.0\linewidth]{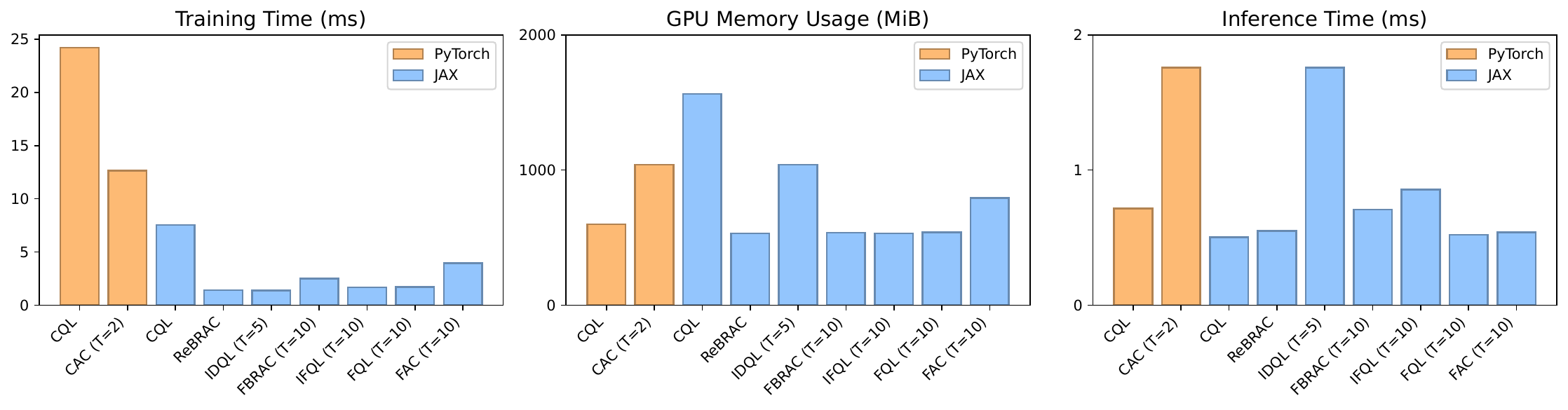}
    \caption{Computational costs of FAC and baselines}
    \label{appendix-fig-2:computational_costs}
\end{figure}
We compare the computational costs of FAC against various baselines in terms of training time, GPU memory usage, and inference time. For all baselines, we primarily use their official open-source implementations whenever available. For ReBRAC, we use an open-source implementation provided with FQL~\cite{offRL-Flow-1:FQL}, and FBRAC is newly implemented. For the diffusion-based baseline CAC, its official implementation is based on Pytorch~\cite{etc:Pytorch}. To enable a fair comparison, we additionally report the Pytorch-based CQL implementation and the JAX-based one from the official open-source implementation of Cal-ql~\cite{O2ORL-2:Cal-ql}. 

The comparisons are conducted on a single RTX 3090 GPU. Training time is measured as the wall-clock time for one gradient step with a batch size of 256, and inference time is measured as the time required to produce a single action sample. The computational costs are shown in Figure~\ref{appendix-fig-2:computational_costs}. For diffusion-based and flow-based methods, we denote the number of denoising or flow steps as a suffix to the method name (e.g., FAC (T=10)).

For training time, the PyTorch-based baselines exhibit larger training time than JAX-based methods. Within the JAX-based methods, IDQL shows the lowest training time. The training time of FAC is moderately higher than IDQL and the multi-step flow policy baselines, reflecting the additional cost of density estimation. Nevertheless, FAC requires faster training time than JAX-based CQL, even though both methods employ conservative critic objectives. The difference comes from simple practical implementation of FAC. The practical implementation of CQL requires auxiliary action sampling and additional gradient computing for soft maximization over Q-values, whereas FAC requires sampling only a single action without additional computation.

For GPU memory usage, JAX-based CQL and the diffusion-based baselines consume substantially higher GPU memory than other methods.

For inference time, diffusion-based baselines, CAC (T=2) and IDQL (T=5), require higher time than other baselines due to iterative denoising or diffusion steps. In contrast, the one-step flow actor used by FAC and FQL yields inference time comparable to Gaussian policy based baselines.

\end{document}